\documentclass{article}

\PassOptionsToPackage{numbers, compress}{natbib}

\usepackage[final]{neurips_2025}
\PassOptionsToPackage{numbers, compress}{natbib}

\usepackage{neurips_2025}

\usepackage{wrapfig}




\usepackage[utf8]{inputenc} 
\usepackage[T1]{fontenc}    
\usepackage[hidelinks]{hyperref}       
\usepackage{url}            
\usepackage{booktabs}       
\usepackage{amsfonts}       
\usepackage{nicefrac}       
\usepackage{microtype}      
\usepackage{xcolor}         

\usepackage{wrapfig}

\usepackage{algorithm}
\usepackage{algpseudocode}
\usepackage{microtype}
\usepackage[final]{graphicx}
\usepackage{subfigure}
\usepackage{booktabs} 

\usepackage{adjustbox} 
\usepackage{multirow}
\usepackage{multicol}

\usepackage{hyperref}

\usepackage{array}
\newcolumntype{+}{>{\global\let\currentrowstyle\relax}}
\newcolumntype{^}{>{\currentrowstyle}}

\usepackage{amsmath}
\usepackage{amssymb}
\usepackage{mathtools}
\usepackage{amsthm}

\usepackage[capitalize,noabbrev]{cleveref}

\theoremstyle{plain}

\theoremstyle{definition}

\theoremstyle{remark}

\usepackage[textsize=tiny]{todonotes}


\usepackage{pgfplots}
\usepackage{enumitem}
\usepackage{pifont}

\usepackage{caption}
\DeclareCaptionFont{ninept}{\fontsize{8pt}{7pt}\selectfont}
\captionsetup{font=ninept, aboveskip=0.1pt, belowskip=0.1pt}

\usepackage{textcomp}
\usepackage{xcolor}
\usepackage{bbold}
\usepackage{acro}
\usepackage{commath}
\usepackage{relsize}
\usepackage{cancel}

\pgfplotsset{compat=newest}

\newtheoremstyle{bolditalic}%
  {}{}
  {\itshape}{}
  {\bfseries\itshape}{.}
  { }{\thmname{#1}\thmnumber{ #2}\thmnote{ (#3)}}
\theoremstyle{bolditalic}
\newtheorem{thm}{Theorem}
\newtheorem{cor}{Corollary}
\newtheorem{lem}{Lemma}
\newtheorem{defi}{Definition}

%

\newcommand{\bm}[1]{\boldsymbol{#1}}
\newcommand{\ba}{\bm{a}}
\newcommand{\bb}{\bm{b}}

\newcommand{\bh}{\bm{h}}

\newcommand{\bx}{\bm{x}}
\newcommand{\by}{\bm{y}}
\newcommand{\bz}{\bm{z}}
\newcommand{\bA}{\bm{A}}

\newcommand{\bI}{\bm{I}}

\newcommand{\bX}{\bm{X}}
\newcommand{\bY}{\bm{Y}}

\newcommand{\btheta}{\bm{\theta}}

\newcommand{\bphi}{\bm{\phi}}

\newcommand{\mb}[1]{\ensuremath{\mathbf{#1}}} 


\newcommand{\mbbE}{\mathbb{E}}

\newcommand{\mbbR}{\mathbb{R}}



\newcommand{\mcD}{\mathcal{D}}
\newcommand{\mcE}{\mathcal{E}}

\newcommand{\mcG}{\mathcal{G}}
\newcommand{\mcH}{\mathcal{H}}
\newcommand{\mcI}{\mathcal{I}}

\newcommand{\mcL}{\mathcal{L}}
\newcommand{\mcM}{\mathcal{M}}
\newcommand{\mcN}{\mathcal{N}}

\newcommand{\mcT}{\mathcal{T}}

\newcommand{\mcV}{\mathcal{V}}

\newcommand{\tmcM}{\tilde{\mathcal{M}}}

\newcommand{\E}{\mathbb{E}}

\newcommand{\R}{\mathbb{R}}

\newcommand{\td}{\mathrm{d}}







\DeclareAcronym{gnn}{ 
    short = {GNN}, 
    long  = {graph neural network}
}
\DeclareAcronym{fl}{ 
    short = {FL}, 
    long  = {federated learning}
}
\DeclareAcronym{ml}{
    short = {ML},
    long = {machine learning}
}
\DeclareAcronym{dnn}{
    short = {DNN},
    long = {deep neural network}
}
\DeclareAcronym{sdsfl}{ 
    short = {SDSFL}, 
    long  = {structure decoupled subgraph federated learning}
}
\DeclareAcronym{sfv}{ 
    short = {SFV}, 
    long  = {structure feature vector}
}

\DeclareAcronym{rsfv}{
    short={RSFV},
    long = {random structure feature vector}
}
\DeclareAcronym{sfm}{ 
    short = {SFM}, 
    long  = {structure feature matrix}
}
\DeclareAcronym{sfl}{ 
    short = {SFL}, 
    long  = {subgraph federated learning}
}
\DeclareAcronym{fedavg}{ 
    short = {FedAvg}, 
    long  = {federated averaging}
}
\DeclareAcronym{spm}{ 
    short = {SPM}, 
    long  = {structure predictor model}
}
\DeclareAcronym{fpm}{ 
    short = {FPM}, 
    long  = {feature predictor model}
}
\DeclareAcronym{gdv}{ 
    short = {GDV}, 
    long  = {graphlet degree vector}
}
\DeclareAcronym{nfe}{ 
    short = {NFE}, 
    long  = {node feature embedding}
}
\DeclareAcronym{nse}{ 
    short = {NSE}, 
    long  = {node structure embedding}
}
\DeclareAcronym{sem}{ 
    short = {SEM}, 
    long  = {structure embedding matrix}
}
\DeclareAcronym{fem}{ 
    short = {FEM}, 
    long  = {feature embedding matrix}
}
\DeclareAcronym{sel}{ 
    short = {SEL}, 
    long  = {structure embedding loss}
}
\DeclareAcronym{dgcn}{ 
    short = {decoupled-GCN}, 
    long  = {decoupled-GCN}
}
\DeclareAcronym{mlp}{ 
    short = {MLP}, 
    long  = {multi layer perceptron}
}
\usepackage{tikz}
\usepackage{pgfplots}
\pgfplotsset{compat=newest}

\newcommand{\Dtrain}{\mathcal{D}_{\text{train}}}
\newcommand{\Gtrain}{\mathcal{G}_{\text{train}}}
\newcommand{\Vtrain}{\mathcal{V}_{\text{train}}}
\newcommand{\Etrain}{\mathcal{E}_{\text{train}}}

\title{Practical Bayes-Optimal\\Membership Inference Attacks}

\author{%
Marcus Lassila$^{1}$ \quad Johan \"Ostman$^2$ \quad Khac-Hoang Ngo$^3$\quad Alexandre Graell i Amat$^{1}$\\~\\
$^1$Chalmers University of Technology  \quad $^2$AI Sweden \quad $^3$Linköping University 
    }

\begin{document}
\maketitle

\begin{abstract}
We develop practical and theoretically grounded membership inference attacks (MIAs) against both independent and identically distributed (i.i.d.) data and graph-structured data. Building on the Bayesian decision-theoretic framework of \cite{Sab2019}, we derive the Bayes-optimal membership inference rule for node-level MIAs against graph neural networks, addressing key open questions about optimal query strategies in the graph setting. We introduce BASE and G-BASE, tractable approximations of the Bayes-optimal membership inference. G-BASE achieves superior performance compared to previously proposed classifier-based node-level MIA attacks. BASE, which is also applicable to non-graph data, matches or exceeds the performance of prior state-of-the-art MIAs, such as LiRA and RMIA, at a significantly lower computational cost. Finally, we show that BASE and RMIA are equivalent under a specific hyperparameter setting, providing a principled, Bayes-optimal justification for the RMIA attack.
\end{abstract}

\section{Introduction}

Machine learning models are known to  leak information about their training data \citep{Sho2017,Jia2024,Han2021,Car2021}, driving growing interest in understanding and quantifying such leakage.
Due to the complexity of modern models, formally characterizing their information leakage remains a formidable challenge. 
As a result, privacy is often assessed empirically via so-called \emph{privacy auditing}\textemdash designing attacks to identify data leakage and guide mitigation strategies.

Membership inference attacks (MIAs), where an adversary seeks to infer whether a specific data point was included in the training set of a model, represent the most fundamental privacy attack.  
Importantly, MIAs already pose a serious privacy threat: disclosing the mere presence of a data point in the training set may constitute a serious privacy violation and directly contravenes privacy regulations such as the GDPR.
The European Data Protection Board explicitly cites MIAs in its guidance on when a machine learning  model can be considered anonymous~\citep{EDPB2024}.
Moreover, MIAs can serve as a key component in data reconstruction attacks by filtering out unlikely candidates, thereby narrowing the search space~\citep{zhang2025membership}, and can be used to establish lower bounds on the privacy guarantees of differentially-private algorithms~\cite{nasr2021adversary}.

State-of-the-art MIAs often rely on \emph{shadow models}\textemdash auxiliary models trained under similar conditions as the target model\textemdash to empirically characterize  behavioral differences between training and non-training data.
Existing attacks fall into two main categories: classifier-based and statistic-based.
Classifier-based attacks use features from shadow models, such as losses or logits, to train a binary classifier~\citep{Sho2017, bertran2023scalable, li2024seqmia}.
Statistic-based attacks instead compute statistical metrics using signals from both the target and shadow models~\citep{Sab2019, Car2022, liu2022membership, Zar2024}.
A Bayes-optimal strategy to membership inference was proposed in~\citep{Sab2019}, but an exact computation is intractable, necessitating approximations.
State-of-the-art methods such as LiRA~\citep{Car2022} and RMIA~\citep{Zar2024}, instead, are based on a hypothesis testing formulation~\citep{Ye2022}. While these approaches achieve strong empirical performance and outperform classifier-based attacks, their theoretical connection to the Bayes-optimal membership inference remains unclear.
\begin{wrapfigure}{r}{0.35\textwidth}
    \vspace{-2.5ex}
    \begin{center}
    \pgfplotscreateplotcyclelist{mycolors}{
    {blue, mark=none},
    {red, mark=none},
    {green!60!black, mark=none},
    {orange, mark=none},
}

\begin{tikzpicture}[scale=0.6]
  \begin{axis}[
    xlabel={False Positive Rate},
    ylabel={True Positive Rate},
    xmode=log,
    ymode=log,
    grid=major,
    xmin=1e-4,
    ymin=1e-4,
    xmax = 1,
    ymax=1,
    legend pos={south east},
    legend style={font=\footnotesize,/tikz/every node/.append style={anchor=west}},
    every axis plot/.append style={line width=1pt},
    cycle list name=mycolors,
  ]

    \addplot+[mark=none] 
      table[x=fpr, y=tpr_mean, col sep=comma] {./data_2/flickr/gcn/roc_g-base-MI_mean.csv};
    \addlegendentry{G-BASE (ours)}

    \addplot+[mark=none, dashed] 
      table[x=fpr, y=tpr_mean, col sep=comma] {./data_2/flickr/gcn/roc_base_mean.csv};
    \addlegendentry{BASE (ours)}

    \addplot+[mark=none, dotted] 
      table[x=fpr, y=tpr_mean, col sep=comma] {./data_2/flickr/gcn/roc_rmia_mean.csv};
    \addlegendentry{RMIA~\cite{Zar2024}}

    \addplot+[mark=none] 
      table[x=fpr, y=tpr_mean, col sep=comma] {./data_2/flickr/gcn/roc_lira_mean.csv};
    \addlegendentry{LiRA~\cite{Car2022}}

    \addplot+[mark=none, color=green!50!black] 
      table[x=fpr, y=tpr_mean, col sep=comma] {./data_2/flickr/gcn/roc_MLP-attack-0hop_mean.csv};
    \addlegendentry{MLP 0-Hop~\cite{Ola2021,Xin2021}}
    
    \addplot+[mark=none, color=red] 
      table[x=fpr, y=tpr_mean, col sep=comma] {./data_2/flickr/gcn/roc_MLP-attack-comb_mean.csv};
    \addlegendentry{MLP 0+2-Hop~\cite{Ola2021,Xin2021}}
    
    \addplot[dashed, gray, domain=1e-4:1] {x};




  \end{axis}
\end{tikzpicture}
    \end{center}
       \caption{ROC curves of our attack and prior MIAs on the Flickr dataset, averaged over 10 GCN target models.}
    \label{fig:introROCplot}
    \vspace{-4ex}
\end{wrapfigure}
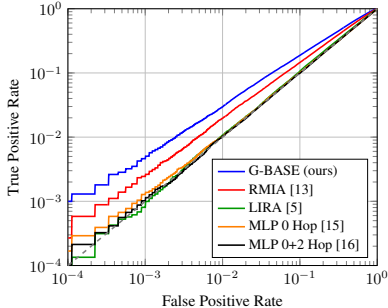

Most existing work on MIAs assumes data points 
are independent and identically distributed (i.i.d.).
Recently, however, MIAs have  been studied in the context of graph-structured data, particularly against graph neural networks (GNNs). 
In this setting, message passing introduces structural dependencies\textemdash each node or edge can influence many others\textemdash making membership signals harder to isolate and challenging key assumptions underlying classical attacks.
MIAs on graph-structured data can be  grouped by the information they aim to recover:
(i) node-level attacks attempt to infer whether a specific node was part of the training set~\citep{Ola2021,Xin2021,Dud2020,Con2022,Jna2022};
(ii) edge-level attacks target the presence of specific edges~\citep{wu2022,Xin2020,Wan2023,Din2023,Li2023}; and
(iii) graph-level attacks aim to determine whether an entire graph instance was used in the training~\citep{Wu2021,Yan2023,Krü2024}.
At the node level, existing approaches are exclusively classifier-based.
Given the success of statistic-based methods on the i.i.d. data, extending such strategies to graph data may lead to stronger attacks. However, this remains an open problem.

\textbf{Our contribution.} Motivated by the need for practical, theoretically grounded privacy auditing tools, we develop novel MIAs for both i.i.d. and graph data. Specifically, we design Bayes-optimal attacks building on the Bayesian decision-theoretic framework of~\cite{Sab2019}, yielding attacks that are both effective and tractable. The proposed attacks tend to match or outperform the state-of-the-art, especially on larger datasets. Our key contributions include:
\vspace{-1.5ex}
\begin{list}{\labelitemi}{\leftmargin=1.5em}
\addtolength{\itemsep}{-0.3\baselineskip}
     \item We derive the  Bayes-optimal  decision rule for    node-level membership inference
      on graph  data, extending the results of~\citet{Sab2019} beyond the i.i.d. setting. 
     This result formalizes how  graph structure should be exploited in membership inference against GNNs.
       \item Guided by this Bayes-optimal rule, we propose G-BASE, a practical attack on GNNs that achieves state-of-the-art performance on graph data, significantly outperforming existing classifier-based  methods.
      \item We propose BASE, a practical Bayes-optimal attack  achieving state-of-the-art performance while being significantly more cost-efficient than prior methods such as RMIA~\citep{Zar2024}  and LiRA~\citep{Car2022}. 
      \item We reveal a close connection between BASE and RMIA: the two are equivalent (for a specific value of RMIA's threshold $\gamma$) up to a monotonically increasing transformation. However, BASE achieves the same performance with significantly lower computational cost, requiring much fewer model queries.
\end{list}

\section{Related Work}

\textbf{MIAs for i.i.d. data.} Relevant works include \citep{Sho2017},\citep{Sab2019},
\citep{Ye2022},
\citep{Car2022} (LiRA), and~\citep{Zar2024} (RMIA). \citet{Sho2017} introduced classifier-based black-box MIAs: the adversary trains multiple shadow models on known datasets to mimic   
the target model, then trains a binary-classifier attack model on their outputs labeled by the ground truth training membership status.
\citet{Sab2019} proposed a Bayes-optimal framework  in which membership inference amounts to computing the posterior probability that a data point is in the training set, given the trained model.  Under mild assumptions, the optimal inference depends only on the loss,   motivating  black-box attacks where the attacker can observe only the model's output (e.g., the loss) on the target sample.
\citet{Ye2022} cast MIAs as a hypothesis testing game: a challenger samples a data point  from either the training set or the rest of the data population (with equal probability), and an adversary, with access to the data population, must decide between two hypothesis\textemdash whether the model was trained on a dataset including or excluding the target point. State-of-the-art black-box attacks, such as LiRA~\citep{Car2022} and RMIA~\citep{Zar2024}, build on this formulation. LiRA  performs a likelihood-ratio test between the two hypotheses, representing them  via the distributions of losses (or logits) on the target data point. LiRA assumes the (transformed) logits exhibit
Gaussian-like behavior, resulting in a parametric test. In RMIA, the test compares the case where the target data point is in the training set to the case where it is replaced by an auxiliary sample drawn randomly from the population. The test score is the tail probability of the resulting likelihood ratio. 

\textbf{MIAs for graph data.} Existing node-level attacks~\citep{Ola2021,Xin2021,Dud2020,Con2022,Jna2022} all adopt a classifier-based approach. These attacks involve three phases: training a shadow model, training a binary-classifier attack model, and performing membership inference.
The attack model takes as input features the vector derived from the shadow model's outputs\textemdash either posterior probabilities~\citep{Ola2021,Xin2021,Dud2020,Jna2022} or  predicted labels~\citep{Con2022}\textemdash for nodes that were included or not in the training set. Despite their promise, statistic-based methods have not yet been explored for node-level MIAs.

\section{Preliminaries}

\textbf{Graph notation.} For graph data, we consider a graph denoted by $\mcG=(\mcV,\mcE,\bX,\bY)$, where $\mcV$ is the set of $n$ nodes, $\mcE$ the set of edges, $\bX\in\mbbR^{n\times d}$ the node feature matrix,  and $\bY\in\mbbR^{n\times c}$ the one-hot encoded node-label  matrix, such that each node $v\in\mcV$ has an associated feature-label pair $(\bx_v,\by_v)$, which are assumed to be sensitive data.
We denote by $\bA \in \R^{n\times n}$ the adjacency matrix of graph~$\mcG$,
where $A_{uv} = 1$ if $(u,v) \in \mcE$ and $0$ otherwise, and by $\mcN(v)=\{u:(u,v)\in\mcE,\,u\neq v\}$ the set of neighbors of node $v$. For convenience, in the analysis sections, we will refer to a graph $\mcG$ as $\mcG=(\bX,\bY,\bA)$, 
where the sets of nodes and edges are implicitly defined by $\bX$ and $\bA$, respectively.

\textbf{Graph neural networks (GNNs).}  GNNs~\cite{Bru2014,Def2016,Kip2017,Ham2017,Vel2018,Xu2019}, learn node embeddings that capture both structural and feature information. These embeddings are then  used for various downstream tasks such as node classification. GNNs operate via message passing, where nodes aggregate information from their neighbors to update their embeddings. Formally, the embedding of node $v\in\mcV$ at level $\ell+1$ of an $L$-layer GNN is computed as
\begin{align}
\label{eq:NodeEmbedding}
\bh_v^{(\ell+1)}=\textsc{Update}\Big(\bh_v^{(\ell)},\textsc{Aggregate}\Big(\left\{\bh_u^{(\ell)}, u\in\mcN(v)\right\}\Big) \Big)\,,
\end{align}
where both \textsc{Aggregate}, a permutation-invariant function, and \textsc{Update} are differentiable and $\bh_v^{(0)}$ is the input feature vector of node $v$. The computation of $\bh_v^{(L)}$, the final embedding of node $v$,  thus depends on its $L$-hop neighborhood, which we denote by $\mcN_L(v)$. For notational convenience, we denote the final embedding of node $v$ by $\bz_v =  \bh_v^{(L)}$ and refer to it simply as the node embedding.

For node classification, a softmax operation is typically applied to the node embeddings to produce class probabilities,
\begin{equation}
\label{Eq:prob-of-label-graph}
    P(\by_v|\btheta,\bX,\bA) \approx \text{softmax}(\bz_v)_y \equiv f_{\btheta}(\bX,\bA)_{vy}
\end{equation}
where $f_{\btheta}(\bX,\bA)_v$ denotes the softmax-normalized prediction vector for node $v$, computed by a GNN $f_{\btheta}$ with parameters $\btheta$, and $y$ denotes the index of the nonzero element of $\by_v$. The model is typically trained  by minimizing the negative log-likelihood loss $\ell(f_{\btheta}(\bX,\bA)_v,\by_v) = -\log P(\by_v|\btheta,\bX,\bA)$ (i.e., the cross-entropy loss for node classification) over the labeled nodes, i.e., minimizing
\begin{equation}
\label{Eq:NLL-loss-graph}
    \mcL(\btheta,\bX,Y,\bA) = \sum_{v\in\mcV} \ell(f_{\btheta}(\bX,\bA)_v,\by_v) = -\sum_{v\in\mcV}\log P(\by_v|\btheta,\bX,\bA)\,.
\end{equation}

\section{Bayes-Optimal Node-Level MIAs Against GNNs}

We extend the Bayesian membership inference strategy in \cite{Sab2019} to GNNs. We begin by framing the MIA as an indistinguishability game and then proceed to derive the Bayes-optimal decision rule for membership inference. Finally, we discuss tractable approximations and sampling strategies that yield powerful and practical MIAs.

\subsection{Node Membership Inference Attack Game}
\label{sec:MIAGame}

Following \cite{Car2022,Ye2022}, we formulate the MIA as a game between a challenger and an adversary.
\begin{defi}{\textbf{(Membership inference game on graph data)}}
\label{def:AttackGame}
\vspace{-0.5ex}
\begin{list}{\labelitemi}{\leftmargin=1.5em}
\addtolength{\itemsep}{-0.35\baselineskip}
    \item[1.] The challenger samples a subset of graph $\mcG$, $\Gtrain=(\Vtrain,\Etrain ) \subset \mcG$,   where $\Vtrain\in \mcV$ and $\Etrain=\{(u,v): u,v\in \Vtrain, (u,v)\in \mcE\}$,  and trains a GNN model $f_{\btheta}$ on $\Gtrain$. Let $m_v$ denote the membership status of node $v\in\mathcal{V}$,
    \begin{equation}
        m_v = \left\{ 
        \begin{array}{ll} 
            1\,,\quad v\in \Vtrain\\
            0\,,\quad v\notin \Vtrain
        \end{array}\right.\,.
    \end{equation}

   \item[2.] The challenger flips a fair coin to generate a bit $b$. If $b = 0$, it samples a target node $v$ from $\mcG\backslash \Gtrain$. Otherwise, it samples a target node $v$ from $\Gtrain$. 
      \item[3.]   The challenger gives the adversary the full graph $\mcG=(\bX,\bY,\bA)$, the  target node $v$, and black-box access to the trained model $f_{\btheta}$.
    \item[4.]The adversary may also have additional side information $\mcH$ (e.g., knowledge about the training algorithm or model architecture). Using this information, the adversary performs a MIA $\hat{m}_v \leftarrow \text{MIA}(f_{\btheta},v,\mcG,\mcH)$,
    where $\hat{m}_v$ is an estimate of the membership status of node $v$, $m_v$.
     \item[5.] The attack is successful on a target node $v$ if $\hat{m}_v=m_v$.
\end{list}
\end{defi}

\textbf{Threat model.} 
We assume that the target graph $\Gtrain\subset\mcG$ is sampled from a larger, fixed graph $\mcG$ that is accessible to the adversary. This setup offers a flexible and practical way to model the data distribution. Notably, this corresponds to the  \emph{train on subgraph, test on full} setting introduced in \cite{Ola2021}. When $\mcG$ is large relative to $\Gtrain$, this setup corresponds to the  \emph{data population pool} assumption  widely adopted in membership inference \citep{Ye2022,Zar2024,Car2022}. 
While giving the adversary access to $\mcG$ simplifies the analysis of the Bayes-optimal attack, it also results in a stronger adversary. Nevertheless, this worst case assumption is suitable for privacy auditing, where conservative evaluations are preferred.
We adopt a black-box setting, where the adversary has unlimited query access to the target model and can generate soft predictions for any valid input graph. Knowledge of the model's architecture, training algorithm, and  objective function is encompassed in $\mcH$. In contrast, a white-box setting would grant the adversary direct access to  model parameters or activations.

GNN training can be categorized into \emph{transductive} and \emph{inductive}. In the transductive setting, only part of the graph is labeled and used for loss computation. However, message-passing is performed over all nodes, producing an embedding for each node. The goal of this semi-supervised learning approach is typically to predict the labels of the unlabeled nodes. The inductive setting, on the other hand, corresponds to fully supervised learning. In this setting, message-passing is only performed between labeled nodes. The goal is then to generalize to unseen nodes. As noted in \cite{Xin2021}, the inductive setting is the most relevant, as it provides a clear distinction between member and non-member nodes. In contrast, the transductive setting\textemdash where the model is primarily used by the data holder to  predict  labels for unlabeled nodes and remains under their control\textemdash does not raise significant privacy concerns. We therefore focus our analysis and experiments on the inductive setting.

\subsection{The Bayes-Optimal Decision Rule}

Assume that the target node whose membership status we aim to infer is node $v\in\mcI$, and  let  $\mathcal{N}_L(v)$ be  its $L$-hop neighborhood. Also,  let $\mcM=\{m_u:u\in\mcV\}$ denote the membership indicator variables of all $n$ nodes in the graph $\mcG$,  and   $\tmcM=\mcM\backslash \{m_v\}$ represent the membership statuses of all nodes except the target node $v$.  Given the attack game and threat model defined in Section~\ref{sec:MIAGame}, a Bayesian adversary seeks to compute the posterior probability that the target node is in the training set, i.e., $P(m_v=1|\btheta,\mcG)$.  The following theorem provides a closed-form expression for this posterior.
\begin{thm}
\label{thm:BayesOptimalRule}
Given a graph $\mcG=(\bX,\bY,\bA)$ and an $L$-layer GNN model $\btheta$ trained on an induced subgraph of $\mcG$ to minimize the objective in \eqref{Eq:NLL-loss-graph}, the posterior probability $P(m_v=1|\btheta,\mcG)$ is given by
\begin{align}
\label{Eq:BayesOptimalMI}
    P(m_v&=1|\btheta,\mcG)
    = \mbbE_{\tmcM\sim P(\tmcM|\btheta,\mcG)}\Bigg[\sigma\Bigg(  -S_{\mcL}(f_{\btheta},v,\tmcM,\mcG) + \log\Lambda(\btheta,\tmcM,\bX,\bA) \nonumber\\
    &-\log \int e^{-S_{\mcL}(f_{\bphi},v,\tmcM,\mcG)  + \log\Lambda(\bphi,\tmcM,\bX,\bA)}p(\bphi|m_v=0,\tmcM,\mcG)\td\bphi + \log\frac{\lambda}{1-\lambda}\Bigg)\Bigg],
\end{align}
where
\begin{align}
\label{Eq:Signal}
  S_{\mcL}(f_{\btheta},v,\tmcM,\mcG)=\ell(f_{\btheta}(\bX,\bA_\mcM)_v,\by_v) + \Delta\mathcal{L}_{\mathcal{N}_L(v)}(f_{\btheta})\,,
\end{align}
with
\begin{equation}
\label{Eq:DeltaLoss}
    \Delta\mathcal{L}_{\mathcal{N}_L(v)}(f_{\btheta}) = \sum_{u\in\mathcal{N}_L(v)} m_u(\ell(f_{\btheta}(\bX,\bA_\mcM)_u,y_u) - \ell(f_{\btheta}(\bX,\bA_{\tmcM})_u,y_u))\,,
\end{equation}
and
\begin{equation}
\label{Eq:log_ratio_model_prior}
    \Lambda(\btheta,\tmcM,\bX,\bA) = \frac{p(\btheta|m_v=1,\tmcM,\bX,\bA)}{p(\btheta|m_v=0,\tmcM,\bX,\bA)},
\end{equation}
the latter representing the likelihood ratio prior to observing the labels. 
In \eqref{Eq:BayesOptimalMI},  $\lambda=P(m_v=1)$ denotes the prior probability that node $v$ is a member of the training set, before observing the model, and $\sigma$ is the sigmoid function. Also, the $n\times n$ matrices $\bA_\mcM$ and $\bA_{\tmcM}$ in \eqref{Eq:Signal} and \eqref{Eq:DeltaLoss} are defined as
\begin{equation*}
    (\bA_\mcM)_{uw} = \left\{ 
    \begin{array}{ll} 
        A_{uw}\,,\quad m_u=m_w=1,\;\\
        0\,,\qquad \text{otherwise}
    \end{array}\right.\quad\text{and}\quad  (\bA_{\tmcM})_{uw} = \left\{ 
    \begin{array}{ll} 
        (\bA_\mcM)_{uw} \,,\quad u,w\neq v,\;\\
        0\,,\qquad\qquad \text{otherwise}
    \end{array}\right.
\end{equation*}
for $ u,w\in\mcV$.
\end{thm}
\vspace{-2.5ex}
\begin{proof}
See Appendix~\ref{app:Thm1}.
\end{proof}

In \eqref{Eq:BayesOptimalMI}, $S_{\mcL}(f_{\btheta},v,\tmcM,\mcG)$ represents the loss-based signal for the target node $v$, while the first logarithmic term corresponds to the expected signal over the distribution of the models induced by the membership configuration $\tmcM$, $p(\bphi|m_v=0\tmcM,\mcG)$.
The term $\Delta\mathcal{L}_{\mathcal{N}_L(v)}$   captures how the inclusion or exclusion of the target node influences the loss values of its neighbors (since an $L$-layer GNN aggregates information from nodes within the $L$-hop neighborhood,  the optimal attack must consider this local graph structure around the target node, see Figure~\ref{fig:bayes_rule}).

The presence of the term \eqref{Eq:DeltaLoss} in the graph data setting, in contrast to the i.i.d. case (see \citep[Thm.~2]{Sab2019} and Corollary~\ref{cor:BayesOptimaliid} in Section~\ref{sec:BayesOptimaliid}), highlights a key distinction: dependencies between data points play a central role.

To make predictions, we set a decision threshold $\tau$ such that target node $v$ is inferred to be part of the training set if $P(m_v=1|\btheta,\mcG)>\tau$. 
For auditing purposes, it is important to evaluate performance across the full range of false positive rates, particularly at low false positive rates, and we compute the receiver operating characteristic~(ROC) curve by sweeping over $\tau \in [0,1]$. Appendix~\ref{app:threshold} provides a detailed discussion on how an adversary can select the decision threshold $\tau$ by attacking simulated target models.

\subsection{G-BASE: Practical Bayes-Optimal MIA against GNNs}
\label{sec:practical_MIA_GNNs}

The Bayes-optimal membership inference score function in Theorem~\ref{thm:BayesOptimalRule} is computationally intractable and thus necessitates approximations.
In this section, we introduce a practical and effective approximation to the Bayes-optimal decision rule, resulting in a powerful MIA.

First, the Bayesian membership inference involves computing a log-likelihood ratio of the membership indicator (see \eqref{Eq:BayesOptimalMIApp}). When attempting to approximate \eqref{Eq:BayesOptimalMI}, we need to pick a prior \eqref{Eq:log_ratio_model_prior} on the likelihood ratio, before observing the labels. A natural and simplifying approximation is to set $\Lambda(\btheta,\tmcM,\bX,\bA)=1$.
Intuitively, distinguishing the distribution of models trained on node $v$ from the distribution of models not trained on $v$ is more difficult when the labels are unobserved, and $\log\Lambda(\btheta,\tmcM,\bX,\bA)$ small in comparison to the posterior likelihood ratio that we are estimating. Consequently, we would expect it to be small in comparison to the loss signal. 
The outcome is that the prior LLR term vanishes from \eqref{Eq:BayesOptimalMI}.

The rest of the intractability stems from the two nested expectations in Equation~\eqref{Eq:BayesOptimalMI}: one over the membership statuses of all non-target samples, $\tmcM$, and the other over the model distribution, $p(\bphi|m_v=0,\tmcM,\mcG)$.
The expectation over $\tmcM$ requires sampling from the conditional distribution $P(\tmcM|\btheta,\mcG)$, which is infeasible (it is essentially the joint membership inference problem over all non-target nodes).
To address this, we propose three approximate sampling strategies to generate samples from $P(\tmcM|\btheta,\mcG)$: 
i) \emph{model-independent sampling}, which assumes independence from the target model, i.e., $P(\tmcM|\btheta,\mcG) \approx P(\tmcM|\mcG)$;
ii) \emph{$0$-hop MIA sampling}, which leverages a MIA attack for the i.i.d. setting to obtain per-node membership probabilities;
and iii) \emph{Markov chain Monte Carlo sampling}, which accounts explicitly for the dependence on the target model $\btheta$.
These strategies are discussed in greater detail in Appendix~\ref{app:PracticalBayesGNN}.
Given any of these sampling strategies, we generate $M$ samples $\tmcM_1,\dots,\tmcM_M$ and approximate the outer expectation $\mbbE_{\tmcM\sim P(\tmcM|\btheta,\mcG)}[\cdot]$ in \eqref{Eq:BayesOptimalMI} with the sample average.

The remaining expectation over the model distribution  $p(\bphi|m_v=0,\tmcM,\mcG)$ (the integral term in  \eqref{Eq:BayesOptimalMI}) is also intractable.
We approximate it via Monte Carlo sampling using a set of shadow models trained on subgraphs of $\mcG$, interpreting $p(\bphi|m_v=0,\tmcM,\mcG)$ as the distribution of models trained on the subgraph defined by $\tmcM$ with the target node excluded.
\begin{figure}[t]
    \centering
    \includegraphics[width=\textwidth]{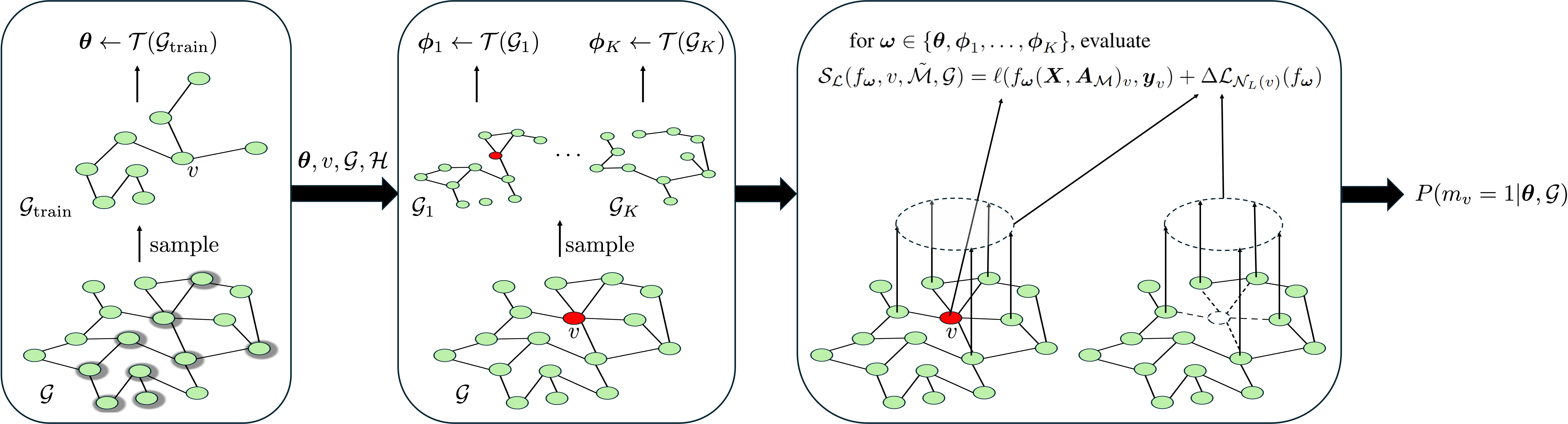}
    \vspace{-2ex}
    \caption{Visualization of the Bayes-optimal attack. From left to right: the challenger trains the target model $\bm{\theta}$ on $\mathcal{G}_{\mathrm{train}}$, and provides the trained model, a target node $v$, the underlying graph $\mathcal{G}$, and (optionally) auxiliary information $\mathcal{H}$ detailing the training procedure. The adversary then samples $K$ graphs and trains a corresponding set of shadow models $\{\bm{\phi}_i\}_{i=1}^K$. These sampled graphs may or may not contain the target node $v$. Finally, the adversary estimates the membership of $v$ using the Bayes-optimal decision rule in~\eqref{Eq:BayesOptimalMI}, approximated via the Monte Carlo method in~\eqref{Eq:MonteCarloExpectationGraph}.}
    \label{fig:bayes_rule}
    \vspace{-3ex}
\end{figure}
However, directly sampling from this distribution would require training a separate set of shadow models for each pair $(v,\tmcM)$, which is computationally infeasible.
For $N$ target nodes, $M$ samples of $\tmcM$, and $K$ shadow models per configuration, this would amount to training $N\times M\times K$ models.
To reduce the number of shadow models, we approximate $p(\bphi|m_v=0,\tmcM,\mcG)$ by a distribution $p(\bphi|\mcG)$ only dependent on the data population.
This simplification allows us to reuse the same set of shadow models across all targets and sampled membership configurations $\tmcM$.
More precisely, the adversary samples subgraphs $\mcG_1,\dots,\mcG_K$ from~$\mcG$ and use them to train shadow models.
Ideally, the adversary should sample subgraphs and train shadow models using a similar procedure as the challenger.
However, this distribution is constrained by the adversary's side information.

The integral in \eqref{Eq:BayesOptimalMI} can then be approximated as
 \begin{align}
\label{Eq:MonteCarloExpectationGraph}
\begin{split}
    \log \int e^{-S_{\mcL}(f_{\bphi},v,\tmcM,\mcG)}p(\bphi|m_v=0,\tmcM,\mcG)\td\bphi  \approx \log\Bigg(\frac{1}{K}\sum_{k=1}^K e^{-S_{\mathcal{L}}(f_{\bphi_k},v,\tmcM,\mcG)}\Bigg)\,, \quad \bphi_k \leftarrow \mcT(\mcG_k)\,,
\end{split}
\end{align}
where $\mcT$ is the training algorithm.

Incorporating both  approximations into \eqref{Eq:BayesOptimalMI}, we arrive at the following attack:
\begin{defi}{(\textbf{G-BASE attack})} 
\label{def:GBase}
For a given threshold $\tau$, and a set of shadow models $\{\bphi_k\}_{k=1}^K$, target sample $v$ is inferred to be part of the training set if $P(m_v=1|\btheta,\mcD)>\tau$, where 
\begin{equation*}
    P(m_v=1|\btheta,\mcG) \approx \frac{1}{M}\sum_{i=1}^M\sigma\Bigg(  -S_{\mcL}(f_{\btheta},v,\tmcM_i,\mcG) - \log\Bigg(\frac{1}{K}\sum_{k=1}^K e^{-S_{\mathcal{L}}(f_{\bphi_k},v,\tmcM_i,\mcG)}\Bigg) + \log\frac{\lambda}{1 - \lambda}\Bigg)
\end{equation*}
\end{defi}
We refer to this attack as the graph Bayes-approximate membership status estimation (G-BASE) attack.
An illustration of the G-BASE attack is shown in Figure~\ref{fig:bayes_rule}. The distinction between online and offline variants is discussed in Appendix~\ref{app:OnOff}.

\section{Bayes-Optimal MIAs for i.i.d. Data}

In this section, we consider MIAs in the i.i.d. setting. We first show that the formulation in \eqref{Eq:BayesOptimalMI}--\eqref{Eq:log_ratio_model_prior} recovers the result for the i.i.d. case presented in  \citep{Sab2019} (Theorem 2) as a special case. We then introduce a practical approximation of the Bayes-optimal decision rule that yields a  powerful attack. The formal definition of the membership inference game in the i.i.d. setting  is provided in Appendix~\ref{app:MIAiid}. 

\subsection{The Bayes-Optimal Decision Rule}
\label{sec:BayesOptimaliid}

The result below follows as a corollary of Theorem~\ref{thm:BayesOptimalRule}.
\begin{cor} 
\label{cor:BayesOptimaliid}
Let $\mcD=\{(\bx_v,\by_v)\}_{v=1}^n$ be a set of $n$ i.i.d. data samples and denote $t_\lambda = \log\frac{\lambda}{1 - \lambda}$. For i.i.d. data, assuming the prior $p(\btheta|m_v,\tmcM,\{\bx_v\}_{v=1}^n)$ is independent of $m_v$, \eqref{Eq:BayesOptimalMI}--\eqref{Eq:log_ratio_model_prior} reduces to
\begin{align*}
    &P(m_v=1|\btheta,\mcD) \nonumber\\
   & = \mbbE_{\tmcM\sim P(\tmcM|\btheta,\mcD)}\left[\sigma\left(  -\ell(f_{\btheta}(\bx_v),\by_v)-\log \int e^{-\ell(f_{\btheta}(\bx_v),\by_v)}p(\bphi|m_v=0,\tmcM,\mcD)\td\bphi + t_\lambda\right)\right]\,,
\end{align*}
i.e.,  the result in \citep[Thm.~2]{Sab2019} with the temperature parameter absorbed into the loss function.
\end{cor}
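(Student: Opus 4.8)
The plan is to treat the i.i.d.\ setting as the special case of Theorem~\ref{thm:BayesOptimalRule} in which the underlying graph carries no edges: $\mcE=\emptyset$ and $\bA=\bm{0}$, so that each sample $(\bx_v,\by_v)$ is an isolated node and the graph $\mcG$ is identified with the data set $\mcD$. Under this identification I would simply specialize every term in \eqref{Eq:BayesOptimalMI}--\eqref{Eq:DeltaLoss}, substituting $\mcG\mapsto\mcD$ throughout, and verify that the two graph-specific ingredients degenerate to the i.i.d.\ expression.

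The central observation is that with no edges the $L$-hop neighborhood of the target is empty, $\mcN_L(v)=\emptyset$. Consequently the structural correction \eqref{Eq:DeltaLoss} is an empty sum and vanishes identically, $\Delta\mcL_{\mcN_L(v)}(f_{\btheta})=0$. This is precisely the term that is absent from the i.i.d.\ rule, so its disappearance is what distinguishes Corollary~\ref{cor:BayesOptimaliid} from the graph case and is the first thing I would record. (Equivalently, one notes that when $\bA=\bm{0}$ the masked adjacencies $\bA_\mcM$ and $\bA_{\tmcM}$ coincide, so the two losses in \eqref{Eq:DeltaLoss} cancel term by term.)

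Next I would argue that with an empty neighborhood the message-passing recursion \eqref{eq:NodeEmbedding} degenerates: the \textsc{Aggregate} step ranges over the empty set, so the embedding $\bz_v$ depends only on the input feature $\bx_v$ through the \textsc{Update} map. Hence $f_{\btheta}(\bX,\bA_\mcM)_v=f_{\btheta}(\bx_v)$ independently of the configuration encoded in $\bA_\mcM$, and the signal \eqref{Eq:Signal} collapses to $S_{\mcL}(f_{\btheta},v,\tmcM,\mcG)=\ell(f_{\btheta}(\bx_v),\by_v)$. Substituting this both for the target's signal and, with $\bphi$ in place of $\btheta$, inside the log-partition integral of \eqref{Eq:BayesOptimalMI}, and relabeling the conditioning $\mcG\mapsto\mcD$ in the outer expectation over $\tmcM$ and in $p(\bphi\mid\tmcM,\mcD)$, reproduces the displayed formula verbatim.

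Finally, to justify the identification with \citep[Thm.~2]{Sab2019}, I would match the resulting expression against theirs term by term. In their formulation a temperature $T$ scales the per-sample loss inside the sigmoid and in the log-normalizer; here the Gibbs posterior is induced by the negative log-likelihood loss \eqref{Eq:NLL-loss-graph}, which fixes the scale so that $T$ is folded into $\ell$, yielding the same rule. I expect the only genuine subtlety to lie in this last bookkeeping step\textemdash confirming that our loss normalization is exactly the temperature-absorbed version of their score\textemdash whereas the specialization itself is an immediate substitution once $\mcN_L(v)=\emptyset$ has been noted.
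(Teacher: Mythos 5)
Your proposal is correct and follows essentially the same route as the paper's proof: specialize Theorem~\ref{thm:BayesOptimalRule}, observe that independence makes $\Delta\mcL_{\mcN_L(v)}$ vanish so that $S_{\mcL}(f_{\btheta},v,\tmcM,\mcD)=\ell(f_{\btheta}(\bx_v),\by_v)$, identify $\mcG$ with $\mcD$, and absorb the temperature of \citep[Thm.~2]{Sab2019} into the loss. The only (cosmetic) difference is that the paper encodes the i.i.d.\ setting as $\bA=\bI$ rather than your $\bA=\bm{0}$; since $\mcN(v)$ excludes self-loops, both choices give $\mcN_L(v)=\emptyset$ and the arguments coincide.
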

\vspace{-2.5ex}
\begin{proof}
See Appendix~\ref{sec:ProofCor}.
\end{proof}

\subsection{BASE: Practical Bayes-Optimal MIA for i.i.d. Data}
\label{sec:BASE}

As for the graph case, the Bayes-optimal decision rule stated in Theorem~\ref{thm:BayesOptimalRule} and in \cite[Thm.~2]{Sab2019} (for i.i.d. data) is computationally intractable. For the i.i.d. case, \cite{Sab2019} proposed several approximations; however, the resulting attacks underperform compared to those in \citep{Car2022,Zar2024}. In this section, we introduce an alternative approximation to the Bayes-optimal decision rule for the i.i.d. setting that yields  a practical and powerful attack. 

Due to space constraints, we present the formal definition of the proposed attack below and refer the reader to Appendix~\ref{app:BASE} for further details.
\begin{defi}
{(\textbf{BASE attack})}
\label{def:BASEattack}
For a given threshold $\tau$, and a set of shadow models $\{\bphi_k\}_{k=1}^K$, target sample $v$ is inferred to be part of the training set if $P(m_v=1|\btheta,\mcD)>\tau$, where
\begin{equation}
    P(m_v|\btheta,\mcD) = \sigma\Bigg(-\ell(f_{\btheta}(\bx_v),\by_v) - \log\Bigg(\frac{1}{K}\sum_{k=1}^K e^{-\ell(f_{\bphi_k}(\bx_v),\by_v)}\Bigg) + \log\frac{\lambda}{1 - \lambda}\Bigg)\,.
\end{equation}
\label{def:LSet}
\end{defi}
We refer to this attack as the  Bayes-approximate membership status estimation (BASE) attack.

As shown in Section~\ref{sec:Experiments} and Appendix~\ref{app:experiments}, despite relying on a coarse approximation of the Bayes-optimal decision rule in~\eqref{Eq:BayesOptimalMI}, the BASE attack  matches or exceeds the performance of state-of-the-art attacks \citep{Car2022,Zar2024}.

\textbf{Connection to RMIA}. We formally establish a connection between the BASE attack and the RMIA attack proposed in \citep{Zar2024}. To this end, we introduce a notion of equivalence for score-based MIAs (i.e. any MIA that produce soft membership scores that are thresholded into hard membership predictions).

\begin{defi}{(\textbf{MIA equivalence})}
\label{def:MIAequivalence}
    Two score-based MIAs are said to be equivalent if, for any decision threshold for one attack, there exists a decision threshold for the other attack that yields identical hard predictions. 
\end{defi}
In particular, equivalent attacks produce identical ROC curves.

The following theorem shows that the BASE attack is closely related to the RMIA attack \cite{Zar2024}.
\begin{thm}
\label{thm:MCARMIA}
The BASE attack is equivalent (in the sense of Definition~\ref{def:MIAequivalence}) to RMIA when $\gamma=1$. More precisely, the membership prediction scores produced by  RMIA with $\gamma=1$ is related to those of BASE via a monotone increasing function.
\end{thm}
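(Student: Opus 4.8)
The plan is to show that, for a fixed target model $\btheta$ and a fixed set of shadow models $\{\bphi_k\}_{k=1}^K$, both attacks are monotone functions of a single common statistic: the ratio of the target model's confidence to the population-averaged confidence on the target sample. Composing the two monotone maps then yields the claimed monotone increasing transformation, which by definition gives equivalence in the sense of \ref{def:MIAequivalence}.

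First I would rewrite the \textsc{BASE} score in a ratio form. Using $e^{-\ell(f_{\btheta}(\bx_v),\by_v)} = P(\by_v|\btheta,\bx_v)$, set $p_{\btheta} \coloneqq P(\by_v|\btheta,\bx_v)$ and $\bar p \coloneqq \frac{1}{K}\sum_{k=1}^K P(\by_v|\bphi_k,\bx_v)$. Then the argument of the sigmoid in \ref{def:BASEattack} collapses to $\log\big(\tfrac{\lambda}{1-\lambda}\cdot \tfrac{p_{\btheta}}{\bar p}\big)$, so the \textsc{BASE} score equals $\sigma\!\big(\log\tfrac{\lambda}{1-\lambda} + \log r\big)$ with $r \coloneqq p_{\btheta}/\bar p$. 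Since $\sigma$ and $\log$ are strictly increasing and $\lambda$ is a fixed constant, \textsc{BASE} is a strictly increasing, hence invertible, function $g$ of $r$.

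Next I would recall the \textsc{RMIA} score. For $\gamma=1$, and with the population probability $P(\bx)$ instantiated as the same shadow-model average $\bar p$, the \textsc{RMIA} score is $\Pr_{z}\!\big[\tfrac{P(\bx|\btheta)/P(\bx)}{P(z|\btheta)/P(z)} \ge 1\big]$, i.e.\ the empirical fraction of reference samples $z$ whose likelihood ratio $r(z)=P(z|\btheta)/P(z)$ does not exceed $r=r(\bx)$. The key observation is that the reference ratios $\{r(z)\}$ depend only on $\btheta$, the shadow models, and the population pool, \emph{not} on the target $\bx$; thus with everything but $\bx$ held fixed, the \textsc{RMIA} score is exactly the empirical CDF of $\{r(z)\}$ evaluated at $r$, a non-decreasing step function $h$ of $r$. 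Composing, since $\textsc{BASE}=g(r)$ with $g$ strictly increasing and invertible and $\textsc{RMIA}=h(r)$ with $h$ non-decreasing, we obtain $\textsc{RMIA}=(h\circ g^{-1})(\textsc{BASE})$ with $h\circ g^{-1}$ monotone increasing. Because monotone maps preserve the induced ordering of samples, any threshold for one attack is matched by a threshold for the other, establishing the claimed equivalence.

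The main obstacle I anticipate is aligning the denominators: one must verify that, under the ``specific hyperparameter setting'' referenced in the statement, \textsc{RMIA}'s marginal $P(\bx)$ genuinely reduces to the same shadow-model average $\bar p$ appearing in \textsc{BASE}, including how the online setting mixes IN/OUT shadow models into that estimate. Any mismatch there would break the claim that both scores are functions of the \emph{same} statistic $r$, so this reduction is the crux of the argument. A secondary subtlety is the step-function nature of $h$: I would confirm that the notion of equivalence in \ref{def:MIAequivalence} tolerates the weak (non-strict) monotonicity and the tie structure of the empirical CDF, which it does, since equivalence only requires that for every threshold of one attack there is a threshold of the other producing identical hard predictions.
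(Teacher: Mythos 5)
Your argument follows the paper's proof (Appendix~\ref{app:ProofThm2}) essentially step for step: strip the sigmoid and the constant prior term, exponentiate, and reduce \textsc{BASE} to the ratio $r=f_{\btheta}(\bx_v)_{y_v}/\E_{\bphi}[f_{\bphi}(\bx_v)_{y_v}]$ (what the paper calls the mean confidence attack), then apply Bayes' rule inside \textsc{RMIA}'s likelihood ratio so that $p(\btheta)$ cancels and $p(y|\bx)$ becomes $\E_{\bphi}[p(y|\bx,\bphi)]$, exhibiting the $\gamma=1$ score as the CDF of that same ratio over the population, evaluated at $r$. The denominator alignment you flag as the crux is resolved in exactly this way in \eqref{Eq:RMIAcdfApp}, so that part of your plan matches the paper.

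The one genuine flaw is your closing claim that Definition~\ref{def:MIAequivalence} ``tolerates'' the weak monotonicity and tie structure of the empirical CDF. It does not, in the direction from \textsc{BASE} thresholds to \textsc{RMIA} thresholds: if two target samples had distinct ratios $r_i<r_j$ lying in a flat region of the CDF $h$, then \textsc{BASE} assigns them distinct scores while \textsc{RMIA} assigns them equal scores, and a \textsc{BASE} threshold separating the two would have no \textsc{RMIA} counterpart (any \textsc{RMIA} threshold accepts both or neither). That breaks equivalence as defined\textemdash which quantifies over thresholds of \emph{either} attack and implies identical ROC curves\textemdash and note that the proof of Lemma~\ref{lem:MCARMIA} explicitly requires a strictly increasing map. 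The paper closes this hole with an observation absent from your proposal: each target sample is itself a member of the population $\pi$, so every ratio value at which the CDF is actually evaluated carries positive probability mass; hence $h$ is strictly increasing on the relevant domain (a CDF can only be flat on intervals of zero mass), strictness is restored, and both directions of the equivalence go through. Your proof would be complete once you replace the ``weak monotonicity suffices'' assertion with this targets-in-population argument.
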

\vspace{-2.5ex}
\begin{proof}
See Appendix~\ref{app:ProofThm2}.
\end{proof}
This result provides a theoretical justification for viewing RMIA as a Bayes-optimal attack, up to the approximation on the shadow model distribution.

Despite the equivalence between BASE and RMIA (in the sense of Definition~\ref{def:MIAequivalence}), BASE offers significantly improved efficiency. In RMIA, the target model and shadow models need to be queried for each target sample and a sufficiently large number of population samples.
In a practical implementation, a sampled subset $Z\sim\mcD$ is typically used in place of the full population $\mcD$ for efficiency, which can degrade performance \cite{Zar2024}.
BASE, on the other hand, only needs to query the models using the target samples to match the performance of RMIA with $Z = \mcD$, which is also the best performing configuration of RMIA, see~\cite[Table 4]{Zar2024}.

\section{Experiments}
\label{sec:Experiments}

We evaluate the attack performance of BASE and G-BASE across a range of datasets and model architectures. This section focuses on attacks against GNNs; results on i.i.d. data (CIFAR-10 and CIFAR-100) are presented in Appendix~\ref{app:experiments_iid}. Open source code is available to reproduce our results.\footnote{\url{https://github.com/MarcusLassila/MIA-audit-GNN}}

\textbf{Setup.}
Experiments are conducted on $6$ graph datasets: Cora, Citeseer, Pubmed, Flickr, Amazon-Photo, and Github. Cora, Citeseer and Pubmed are citation networks previously used in node-level MIA work~\citep{Ola2021,Xin2021,Dud2020}.
For both the target and shadow models, we consider $2$-layer GNN architectures: GCN~\cite{Kip2017}, \textsc{GraphSAGE} with max aggregation~\cite{Ham2017}, and  GAT~\cite{Vel2018}.
The models are trained inductively on randomly-induced subgraphs containing 50\% of the nodes, and 50\% of the dataset is used as target samples, evenly split between members and non-members.\footnote{On Flickr and Github, only 20\% are used as target samples when using 128 or 64 shadow models.}
Attack performance is measured in terms of the area under the receiver operating characteristic curve (AUC), and true positive rate (TPR) at 1\% and 0.1\% false positive rate (FPR).
The AUC captures the average attack performance across all FPRs, while the attack performance at low FPR is most relevant in practice, as high TPR at a low FPR is essential for confident and reliable membership inference~\citep{Car2022}. Additional details of the training procedure and hyperparameter selection are provided in Appendix~\ref{app:ExperimentalSetup}.

\textbf{Baseline attacks.}
We compare the performance of BASE and G-BASE against LiRA~\cite{Car2022} and RMIA~\cite{Zar2024}, the current state-of-the-art MIAs for i.i.d. data.
We also compare against node-level MLP-classifier attacks against GNNs from prior work~\citep{Ola2021, Xin2021,Dud2020}. To strengthen these baselines, we enhance the MLP-classifier attacks by using multiple shadow models to generate attack features, following the shadow training framework of~\cite{Sho2017}. The attacks are implemented based on the descriptions in their respective papers, incorporating information from the authors' code when available. 

\textbf{Online and offline setting.} We evaluate both online and offline attacks, as defined in Appendix~\ref{app:OnOff}, and compare BASE and G-BASE with LiRA and RMIA in both settings. Since MLP-classifier attacks use shadow models solely to generate training data for the MLP attack model, rather than to estimate reference model distributions, they are insensitive to whether the target nodes are included in any of the shadow models. In our setup, the shadow models are trained partially on target samples; therefore, the MLP-classifier attacks are formally considered online attacks.

\textbf{Attacks against GNNs.} Figure~\ref{fig:introROCplot} shows ROC curves for G-BASE (using model-independent sampling) and several baselines on Flickr, using a $2$-layer GCN architecture and $8$ shadow models in online mode. G-BASE outperforms other baselines across the full range of FPRs on a majority of the benchmarks, especially on the larger datasets. Table~\ref{tab:MIAGNN} shows the performance of BASE and G-BASE and the baseline MIAs across multiple datasets and GNN target models. All attacks use the same set of $K$ shadow models for a fair comparison. The number of sampled graphs $M$ in G-BASE is set to $8$.
For $K=8$ (online) and $K=4$ (offline), we evaluate G-BASE for model-independent sampling (MI), $0$-hop MIA sampling (MIA), and Gibbs sampling (see Section~\ref{sec:practical_MIA_GNNs}), using BASE as the $0$-hop MIA. In the large $K$ experiments, we use $0$-hop MIA sampling in all cases except for Flicker where we use model-independent sampling.
Further results on the sampling strategy are presented and analyzed in Appendix~\ref{app:comparison-sampling-strategies}.
The MLP-classifier attack is evaluated in two variants: using $0$-hop query outputs as attack features, and using concatenated $0$-hop and $2$-hop query outputs to form attack features ($0$+$2$-hop).
The hyperparameters for  offline BASE and offline RMIA are selected using Bayesian optimization~\cite{akiba2019optuna}, with cross-validation over shadow models acting as simulated target models.

We make the following key observations: The MLP-classifier attacks are not competitive with our attacks BASE and G-BASE, or  LiRA and RMIA across all datasets and target models and perform as random guessing on the larger, more challenging datasets. 
For $K=4$ and $8$ shadow models, BASE and G-BASE consistently outperform all baselines on the larger and more challenging datasets Pubmed, Flickr, Amazon-Photo, and Github.
On these datasets, G-BASE effectively leverages local neighborhood information of the target node to enhance the attack performance.
On the smaller datasets Cora and Citeseer ($\approx 3000$ nodes out of which half is used to train the models), LiRA achieves comparatively good performance in terms of AUC. However, BASE, and G-BASE achieve superior performance in the more relevant low-FPR regime. Similarly, on Amazon-Photo ($\approx 7500$ nodes), which is still smaller than the remaining datasets whose sizes range from approximately $20$K to $90$K nodes, the performance of LiRA is comparable to that of BASE, G-BASE, and RMIA. As the dataset size increases, however, LiRA’s performance is overtaken by the other methods, indicating that the proposed approaches scale more effectively to larger graphs.
As predicted by Theorem~\ref{thm:MCARMIA}, in the online setting,  BASE and RMIA yield nearly identical results. Minor differences stem from  RMIA using only half of the nodes in the $Z$ set; including all nodes in the $Z$ set, makes the attacks equivalent, as established in Theorem~\ref{thm:MCARMIA}.
However, as noted in \ref{sec:BASE}, BASE requires significantly less computation in terms of model queries, compared to RMIA.
In fact, BASE requires only a single query of the target and shadow models, whereas RMIA also need to query the models over the $Z$ set. 
In the case of a large number of shadow models, the performance of LiRA increases significantly. On some datasets, most notably Cora and Citeseer, LiRA performs very well.
However, on larger datasets, e.g. Flickr and Pubmed, the performance gain of LiRA is more modest.
Our attacks BASE and G-BASE are more principled and robust against datasets and models, and achieve top performance in many settings and also for a large number of shadow models $K$.
However, we consider attacks that rely on a large number of shadow model to be of less practical relevance, since training a large number of shadow models is often infeasible for large models and large real-world datasets.

\begin{table}[t!]
\caption{Comparison of different attacks across datasets and model architectures. Performance is measured in terms of AUC and TPR at 1\% and 0.1\% FPR. The result is reported as the sample mean $\pm$ standard deviation over 10 random target models and samples of target nodes. The parameter $K$ denotes the number of shadow models. The attacks are evaluated in online mode for $K=8$ and $K=128$ and otherwise offline mode. Our attacks BASE and G-BASE achieves top performance in the majority of cases, sometimes with a significant margin.}
\label{tab:MIAGNN}
\vspace{-1ex}
\begin{center}
\begin{sc}
\begin{adjustbox}{width=\textwidth}
\begin{tabular}{clccccccccc}
\toprule
& & \multicolumn{3}{c}{Cora (GCN)} & \multicolumn{3}{c}{Citeseer (GAT)} & \multicolumn{3}{c}{Pubmed (GraphSAGE)} \\
\cmidrule(lr){3-5} \cmidrule(lr){6-8} \cmidrule(lr){9-11}
$K$ & Attack & \multirow{2}{*}{AUC (\%)} & \multicolumn{2}{c}{TPR@FPR (\%)} & \multirow{2}{*}{AUC (\%)} & \multicolumn{2}{c}{TPR@FPR (\%)} & \multirow{2}{*}{AUC (\%)} & \multicolumn{2}{c}{TPR@FPR (\%)} \\
\cmidrule(lr){4-5} \cmidrule(lr){7-8} \cmidrule(lr){10-11}
& & & 1\% & 0.1\% & & 1\% & 0.1\% & & 1\% & 0.1\% \\
\midrule
\multirow{8}{*}{8}
& MLP (0-hop) 
& $69.84 \pm 1.74$ & $3.83 \pm 1.32$ & $0.56 \pm 0.51$ 
& $73.15 \pm 2.11$ & $5.50 \pm 1.34$ & $0.97 \pm 0.82$ 
& $50.29 \pm 0.58$ & $1.10 \pm 0.22$ & $0.13 \pm 0.07$\\
& MLP (0+2-hop) 
& $70.14 \pm 2.74$ & $3.96 \pm 1.65$ & $0.93 \pm 0.82$ 
& $75.39 \pm 2.44$ & $7.71 \pm 1.50$ & $2.12 \pm 1.40$ 
& $51.20 \pm 0.56$ & $1.16 \pm 0.25$ & $0.12 \pm 0.05$\\
& LiRA 
& $82.35 \pm 1.30$ & $8.01 \pm 4.42$ & $1.02 \pm 1.55$ 
& $\mb{85.89} \pm 0.63$ & $13.38 \pm 3.59$ & $1.66 \pm 1.73$ 
& $52.77 \pm 0.50$ & $1.17 \pm 0.14$ & $0.12 \pm 0.05$\\
& RMIA 
& $\mb{82.45} \pm 1.45$ & $17.44 \pm 1.72$ & $4.64 \pm 2.25$ 
& $84.84 \pm 1.13$ & $22.23 \pm 3.62$ & $5.50 \pm 4.45$ 
& $57.44 \pm 0.56$ & $3.07 \pm 0.51$ & $0.55 \pm 0.10$\\
& \textbf{BASE}
& $\mb{82.45} \pm 1.45$ & $\mb{17.47} \pm 1.73$ & $4.64 \pm 2.25$ 
& $84.84 \pm 1.13$ & $\mb{22.29} \pm 3.64$ & $5.51 \pm 4.43$ 
& $57.44 \pm 0.56$ & $3.07 \pm 0.51$ & $0.55 \pm 0.10$\\
& \textbf{G-BASE} (MI)
& $77.15 \pm 1.51$ & $7.19 \pm 3.08$ & $0.68 \pm 0.92$ 
& $81.78 \pm 0.67$ & $13.62 \pm 2.52$ & $1.44 \pm 0.89$ 
& $62.92 \pm 0.66$ & $5.38 \pm 0.49$ & $1.20 \pm 0.35$\\
& \textbf{G-BASE} (MIA)
& $77.42 \pm 1.40$ & $15.36 \pm 3.04$ & $\mb{5.05} \pm 4.60$ 
& $83.31 \pm 0.78$ & $21.19 \pm 4.27$ & $\mb{6.92} \pm 4.61$ 
& $62.96 \pm 0.74$ & $5.58 \pm 1.03$ & $1.16 \pm 0.49$\\
& \textbf{G-BASE} (Gibbs)
& $76.86 \pm 1.61$ & $14.02 \pm 2.51$ & $4.40 \pm 2.84$ 
& $84.39 \pm 0.67$ & $21.38 \pm 4.16$ & $5.26 \pm 3.80$ 
& $\mb{64.10} \pm 0.69$ & $\mb{6.28} \pm 0.56$ & $\mb{1.85} \pm 0.59$\\
\midrule
\multirow{6}{*}{4}
& LiRA
& $\mb{82.37} \pm 1.71$ & $12.38 \pm 6.09$ & $2.10 \pm 1.84$ 
& $84.68 \pm 1.05$ & $18.39 \pm 2.62$ & $5.01 \pm 2.95$ 
& $55.40 \pm 0.66$ & $1.57 \pm 0.25$ & $0.18 \pm 0.14$\\
& RMIA
& $80.57 \pm 1.52$ & $17.67 \pm 1.69$ & $\mb{7.87} \pm 3.66$ 
& $83.53 \pm 1.29$ & $\mb{26.70} \pm 3.39$ & $\mb{12.26} \pm 5.19$ 
& $56.73 \pm 0.46$ & $2.95 \pm 0.41$ & $0.57 \pm 0.23$\\
& \textbf{BASE}
& $81.47 \pm 1.14$ & $\mb{17.87} \pm 2.43$ & $7.12 \pm 3.99$ 
& $\mb{84.83} \pm 1.31$ & $25.28 \pm 3.22$ & $10.37 \pm 3.93$ 
& $56.82 \pm 0.60$ & $3.02 \pm 0.44$ & $0.62 \pm 0.24$\\
& \textbf{G-BASE} (MI)
& $74.25 \pm 1.79$ & $9.41 \pm 2.33$ & $3.03 \pm 1.67$ 
& $76.26 \pm 1.24$ & $13.92 \pm 2.27$ & $4.80 \pm 3.24$ 
& $61.97 \pm 0.66$ & $5.22 \pm 0.38$ & $1.35 \pm 0.30$\\
& \textbf{G-BASE} (MIA)
& $73.60 \pm 1.38$ & $10.38 \pm 2.97$ & $3.53 \pm 2.06$ 
& $76.65 \pm 1.33$ & $15.22 \pm 2.34$ & $4.02 \pm 2.36$ 
& $62.06 \pm 0.64$ & $5.33 \pm 0.47$ & $1.27 \pm 0.26$\\
& \textbf{G-BASE} (Gibbs)
& $73.83 \pm 1.74$ & $10.21 \pm 2.68$ & $3.10 \pm 1.49$ 
& $77.08 \pm 1.19$ & $14.69 \pm 1.57$ & $4.92 \pm 3.46$ 
& $\mb{62.58} \pm 0.64$ & $\mb{5.59} \pm 0.42$ & $\mb{1.58} \pm 0.31$\\
\midrule
\multirow{4}{*}{128}
& LiRA 
& $\mb{89.49} \pm 1.07$ & $\mb{34.62} \pm 5.63$ & $\mb{19.51} \pm 4.99$ 
& $\mb{91.88} \pm 0.74$ & $\mb{47.14} \pm 3.17$ & $\mb{29.77} \pm 6.14$ 
& $56.40 \pm 0.44$ & $3.07 \pm 0.40$ & $0.73 \pm 0.21$\\
& RMIA 
& $83.28 \pm 1.22$ & $20.10 \pm 2.31$ & $7.86 \pm 4.36$
& $85.59 \pm 1.25$ & $25.27 \pm 3.32$ & $7.23 \pm 4.36$
& $57.58 \pm 0.59$ & $3.23 \pm 0.45$ & $0.58 \pm 0.14$\\
& \textbf{BASE}
& $83.28 \pm 1.22$ & $20.10 \pm 2.31$ & $7.86 \pm 4.36$
& $85.59 \pm 1.25$ & $25.27 \pm 3.32$ & $7.23 \pm 4.36$
& $57.58 \pm 0.59$ & $3.23 \pm 0.45$ & $0.58 \pm 0.14$\\
& \textbf{G-BASE}
& $77.21 \pm 1.34$ & $17.40 \pm 5.39$ & $7.33 \pm 4.48$ 
& $84.41 \pm 0.85$ & $25.51 \pm 3.65$ & $7.81 \pm 3.19$ 
& $\mb{63.09} \pm 0.45$ & $\mb{5.44} \pm 0.69$ & $\mb{1.18} \pm 0.35$\\
\midrule
\multirow{4}{*}{64}
& LiRA
& $\mb{86.39} \pm 1.17$ & $\mb{33.10} \pm 5.45$ & $\mb{19.35} \pm 5.42$ 
& $\mb{88.11} \pm 0.92$ & $\mb{45.15} \pm 2.95$ & $\mb{26.70} \pm 5.64$ 
& $56.34 \pm 0.77$ & $2.12 \pm 0.41$ & $0.42 \pm 0.24$\\
& RMIA
& $81.38 \pm 1.18$ & $20.69 \pm 2.98$ & $11.60 \pm 4.02$
& $84.54 \pm 1.30$ & $31.22 \pm 2.97$ & $16.90 \pm 3.18$
& $57.44 \pm 0.52$ & $3.46 \pm 0.54$ & $0.86 \pm 0.29$\\
& \textbf{BASE}
& $82.49 \pm 1.18$ & $21.85 \pm 3.66$ & $11.11 \pm 4.06$
& $85.98 \pm 1.15$ & $32.35 \pm 3.10$ & $15.82 \pm 4.93$
& $57.58 \pm 0.62$ & $3.55 \pm 0.49$ & $0.91 \pm 0.37$\\
& \textbf{G-BASE}
& $74.88 \pm 1.13$ & $13.62 \pm 2.58$ & $4.34 \pm 2.63$ 
& $79.23 \pm 1.01$ & $21.12 \pm 1.67$ & $8.63 \pm 4.41$ 
& $\mb{63.62} \pm 0.55$ & $\mb{6.49} \pm 0.36$ & $\mb{2.09} \pm 0.45$\\
\bottomrule
\end{tabular}
\end{adjustbox}
\begin{adjustbox}{width=\textwidth}
\begin{tabular}{clccccccccc}
\toprule
& & \multicolumn{3}{c}{Flickr (GCN)} & \multicolumn{3}{c}{Amazon-Photo (GAT)} & \multicolumn{3}{c}{Github (GraphSAGE)} \\
\cmidrule(lr){3-5} \cmidrule(lr){6-8} \cmidrule(lr){9-11}
$K$ & Attack & \multirow{2}{*}{AUC (\%)} & \multicolumn{2}{c}{TPR@FPR (\%)} & \multirow{2}{*}{AUC (\%)} & \multicolumn{2}{c}{TPR@FPR (\%)} & \multirow{2}{*}{AUC (\%)} & \multicolumn{2}{c}{TPR@FPR (\%)} \\
\cmidrule(lr){4-5} \cmidrule(lr){7-8} \cmidrule(lr){10-11}
& & & 1\% & 0.1\% & & 1\% & 0.1\% & & 1\% & 0.1\% \\
\midrule
\multirow{8}{*}{8}
& MLP (0-hop) 
& $50.20 \pm 0.24$ & $1.02 \pm 0.11$ & $0.12 \pm 0.03$ 
& $52.11 \pm 1.08$ & $1.56 \pm 0.49$ & $0.27 \pm 0.19$ 
& $49.99 \pm 0.38$ & $0.91 \pm 0.35$ & $0.11 \pm 0.07$\\
& MLP (0+2-hop) 
& $50.35 \pm 0.31$ & $1.06 \pm 0.08$ & $0.11 \pm 0.03$ 
& $52.20 \pm 1.28$ & $1.62 \pm 0.29$ & $0.22 \pm 0.16$ 
& $50.01 \pm 0.44$ & $0.94 \pm 0.07$ & $0.09 \pm 0.04$\\
& LiRA 
& $51.63 \pm 0.26$ & $1.04 \pm 0.12$ & $0.10 \pm 0.03$ 
& $55.32 \pm 0.86$ & $2.23 \pm 0.21$ & $0.30 \pm 0.21$ 
& $51.27 \pm 0.28$ & $1.05 \pm 0.08$ & $0.11 \pm 0.04$\\
& RMIA 
& $56.23 \pm 0.57$ & $1.92 \pm 0.21$ & $0.27 \pm 0.05$ 
& $56.35 \pm 0.93$ & $2.19 \pm 0.66$ & $0.36 \pm 0.22$ 
& $54.13 \pm 0.59$ & $2.05 \pm 0.31$ & $0.34 \pm 0.09$\\
& \textbf{BASE}
& $56.23 \pm 0.57$ & $1.92 \pm 0.21$ & $0.27 \pm 0.05$ 
& $56.35 \pm 0.93$ & $2.20 \pm 0.67$ & $0.36 \pm 0.23$ 
& $54.13 \pm 0.59$ & $2.05 \pm 0.31$ & $0.34 \pm 0.09$\\
& \textbf{G-BASE} (MI)
& $\mb{60.04} \pm 1.01$ & $\mb{2.82} \pm 0.30$ & $\mb{0.44} \pm 0.06$  
& $\mb{56.89} \pm 0.45$ & $3.70 \pm 0.83$ & $0.79 \pm 0.38$ 
& $57.40 \pm 1.09$ & $2.93 \pm 0.44$ & $0.49 \pm 0.15$\\
& \textbf{G-BASE} (MIA)
& $57.48 \pm 0.55$ & $2.51 \pm 0.19$ & $0.39 \pm 0.09$ 
& $56.77 \pm 0.85$ & $3.99 \pm 0.62$ & $\mb{0.82} \pm 0.47$ 
& $57.38 \pm 1.40$ & $3.00 \pm 0.52$ & $\mb{0.55} \pm 0.18$\\
& \textbf{G-BASE} (Gibbs)
& $57.88 \pm 0.91$ & $2.33 \pm 0.27$ & $0.37 \pm 0.15$ 
& $56.53 \pm 0.64$ & $\mb{4.44} \pm 0.60$ & $0.76 \pm 0.49$ 
& $\mb{57.47} \pm 1.43$ & $\mb{3.18} \pm 0.54$ & $\mb{0.55} \pm 0.17$\\
\midrule
\multirow{6}{*}{4}
& LiRA
& $55.71 \pm 0.61$ & $1.58 \pm 0.12$ & $0.17 \pm 0.05$ 
& $56.71 \pm 1.24$ & $3.17 \pm 0.51$ & $0.47 \pm 0.27$ 
& $53.16 \pm 0.58$ & $1.35 \pm 0.18$ & $0.16 \pm 0.07$\\
& RMIA
& $56.15 \pm 0.49$ & $2.03 \pm 0.20$ & $0.32 \pm 0.08$ 
& $56.21 \pm 1.07$ & $3.55 \pm 0.81$ & $1.06 \pm 0.52$ 
& $53.98 \pm 0.60$ & $2.05 \pm 0.22$ & $0.36 \pm 0.07$\\
& \textbf{BASE}
& $56.18 \pm 0.50$ & $2.05 \pm 0.22$ & $0.29 \pm 0.04$ 
& $56.61 \pm 1.06$ & $3.58 \pm 0.85$ & $0.88 \pm 0.68$ 
& $53.97 \pm 0.61$ & $2.07 \pm 0.22$ & $0.36 \pm 0.10$\\
& \textbf{G-BASE} (MI)
& $\mb{60.04} \pm 1.01$ & $\mb{2.88} \pm 0.26$ & $\mb{0.50} \pm 0.06$ 
& $57.81 \pm 0.71$ & $\mb{4.94} \pm 0.78$ & $1.63 \pm 0.44$ 
& $57.78 \pm 1.22$ & $3.10 \pm 0.56$ & $0.58 \pm 0.15$\\
& \textbf{G-BASE} (MIA)
& $57.71 \pm 0.66$ & $2.48 \pm 0.21$ & $0.40 \pm 0.07$ 
& $57.74 \pm 0.85$ & $4.82 \pm 0.88$ & $1.38 \pm 0.37$ 
& $57.72 \pm 1.06$ & $3.12 \pm 0.47$ & $0.60 \pm 0.19$\\
& \textbf{G-BASE} (Gibbs)
& $58.63 \pm 0.95$ & $2.48 \pm 0.32$ & $0.37 \pm 0.07$ 
& $\mb{57.94} \pm 0.54$ & $4.61 \pm 1.00$ & $\mb{1.69} \pm 0.82$ 
& $\mb{58.16} \pm 1.48$ & $\mb{3.44} \pm 0.64$ & $\mb{0.73} \pm 0.26$\\
\midrule
\multirow{4}{*}{128}
& LiRA 
& $54.87 \pm 0.35$ & $1.95 \pm 0.20$ & $0.21 \pm 0.09$ 
& $\mb{58.73} \pm 0.91$ & $\mb{6.43} \pm 0.77$ & $\mb{3.52} \pm 0.47$ 
& $53.68 \pm 1.01$ & $2.01 \pm 0.43$ & $0.28 \pm 0.16$\\
& RMIA 
& $56.00 \pm 0.35$ & $2.25 \pm 0.38$ & $0.36 \pm 0.17$
& $56.52 \pm 0.76$ & $2.67 \pm 0.83$ & $0.58 \pm 0.33$
& $54.66 \pm 0.89$ & $2.18 \pm 0.41$ & $0.38 \pm 0.17$\\
& \textbf{BASE}
& $56.00 \pm 0.35$ & $2.25 \pm 0.38$ & $0.36 \pm 0.17$
& $56.52 \pm 0.76$ & $2.67 \pm 0.83$ & $0.58 \pm 0.33$
& $54.66 \pm 0.89$ & $2.18 \pm 0.41$ & $0.38 \pm 0.17$\\
& \textbf{G-BASE}
& $\mb{58.94} \pm 0.81$ & $\mb{2.87} \pm 0.38$ & $\mb{0.44} \pm 0.13$ 
& $54.88 \pm 0.98$ & $3.82 \pm 0.74$ & $0.93 \pm 0.51$ 
& $\mb{57.02} \pm 1.58$ & $\mb{2.96} \pm 0.56$ & $\mb{0.51} \pm 0.15$\\
\midrule
\multirow{4}{*}{64}
& LiRA
& $56.13 \pm 0.36$ & $1.98 \pm 0.27$ & $0.23 \pm 0.10$ 
& $\mb{57.82} \pm 1.12$ & $\mb{5.64} \pm 0.92$ & $\mb{2.65} \pm 0.53$ 
& $54.07 \pm 0.78$ & $1.39 \pm 0.31$ & $0.17 \pm 0.11$\\
& RMIA
& $56.12 \pm 0.32$ & $2.28 \pm 0.23$ & $0.41 \pm 0.14$
& $56.46 \pm 1.03$ & $3.95 \pm 0.77$ & $1.58 \pm 0.43$
& $54.76 \pm 0.85$ & $2.47 \pm 0.43$ & $0.49 \pm 0.21$\\
& \textbf{BASE}
& $56.12 \pm 0.33$ & $2.28 \pm 0.24$ & $0.41 \pm 0.14$
& $56.95 \pm 1.07$ & $4.13 \pm 0.78$ & $1.46 \pm 0.68$
& $54.80 \pm 0.83$ & $2.50 \pm 0.41$ & $0.55 \pm 0.23$\\
& \textbf{G-BASE}
& $\mb{59.66} \pm 0.71$ & $\mb{3.14} \pm 0.17$ & $\mb{0.52} \pm 0.09$ 
& $56.74 \pm 0.72$ & $4.92 \pm 0.67$ & $1.71 \pm 0.58$ 
& $\mb{57.99} \pm 1.67$ & $\mb{3.16} \pm 0.62$ & $\mb{0.77} \pm 0.24$\\
\bottomrule
\end{tabular}
\end{adjustbox}
\end{sc}
\end{center}
\vspace{-6.5ex}
\end{table}

\textbf{Comparing online and offline attacks.} Since the same target models and target nodes are used to evaluate all attacks for a given dataset and GNN architecture in Table~\ref{tab:MIAGNN}, the online and offline attacks are directly comparable.
Despite using only half the number of shadow models, there are several cases where offline BASE and G-BASE is outperforming its online counterpart. This trend is most clear when comparing the large $K$ attacks, presumably because of the diminishing returns of increasing the number of shadow models from 64 to 128. In light of Theorem~\ref{thm:BayesOptimalRule}, this is indeed what we would expect since the target node is explicitly absent from the distribution of reference models. Furthermore, the added hyperparameter in the offline versions of RMIA and BASE is likely another contributing factor to this phenomenon.

\textbf{Robustness to mismatched adversary assumptions.} We evaluate and compare the attacks in a more challenging setting where the adversary lacks precise knowledge of the challenger's training procedure and model architecture. Specifically, the challenger trains a $2$-layer GCN on 35\% of Cora, while the adversary trains 8 $2$-layer GAT shadow models on 50\% of the nodes, following the shadow model training procedure outlined in Appendix~\ref{app:ExperimentalSetup}. Furthermore, the challenger trains the shadow model using a SGD optimizer with momentum, while the adversary uses an Adam optimizer.

Figure~\ref{fig:mismatched-adv} presents the results of our attacks and baseline attacks (see Appendix~\ref{app:mismatched-adv} for more results).
In this setting, all attacks show degraded performance, with AUC scores dropping by around 10 percentage points or more across all attacks, compared to the performance against a GCN target model on Cora (see Table~\ref{tab:MIAGNN} and Table~\ref{tab:model-distr-shift}). LiRA suffers the most, losing around 15 AUC percentage points and performing worse than MLP-classifier attacks in the low FPR regime. In contrast, our attacks and RMIA are more robust to  mismatches in model and training procedures introduced by the adversary, achieving the highest TPR at low FPRs.
\begin{wrapfigure}{r}{0.35\textwidth}
    \vspace{-2.5ex}
    \begin{center}
    \pgfplotscreateplotcyclelist{mycolors}{
    {blue, mark=none},
    {red, mark=none},
    {green!60!black, mark=none},
    {orange, mark=none},
}

\begin{tikzpicture}[scale=0.6]
  \begin{axis}[
    xlabel={False Positive Rate},
    ylabel={True Positive Rate},
    xmode=log,
    ymode=log,
    grid=major,
    xmin=1e-3,
    ymin=1e-3,
    xmax = 1,
    ymax=1,
    legend style={
      at={(1,0)},           
      anchor=south east,    
      font=\scriptsize,     
    },
    every axis plot/.append style={line width=1pt},
    cycle list name=mycolors,
  ]

    \addplot[mark=none, dashed, blue] 
      table[x=fpr, y=tpr_mean, col sep=comma] {./data_2/cora/mismatched/roc_g-base-MIA-offline_mean.csv};
    \addlegendentry{G-BASE offline (ours)}
    
    \addplot[mark=none, blue] 
      table[x=fpr, y=tpr_mean, col sep=comma] {./data_2/cora/mismatched/roc_g-base-MIA_mean.csv};
    \addlegendentry{G-BASE (ours)}

    \addplot[mark=none, dashed, green!60!black] 
      table[x=fpr, y=tpr_mean, col sep=comma] {./data_2/cora/mismatched/roc_base-offline_mean.csv};
    \addlegendentry{BASE offline (ours)}
    
    \addplot[mark=none, green!60!black] 
      table[x=fpr, y=tpr_mean, col sep=comma] {./data_2/cora/mismatched/roc_base_mean.csv};
    \addlegendentry{BASE (ours)}
    
    \addplot+[mark=none, dashed, red] 
      table[x=fpr, y=tpr_mean, col sep=comma] {./data_2/cora/mismatched/roc_rmia-offline_mean.csv};
    \addlegendentry{RMIA offline~\cite{Zar2024}}
    
    \addplot+[mark=none, dotted, red] 
      table[x=fpr, y=tpr_mean, col sep=comma] {./data_2/cora/mismatched/roc_rmia_mean.csv};
    \addlegendentry{RMIA~\cite{Zar2024}}

    \addplot+[mark=none, purple, dashed] 
      table[x=fpr, y=tpr_mean, col sep=comma] {./data_2/cora/mismatched/roc_lira-offline_mean.csv};
    \addlegendentry{LiRA offline~\cite{Car2022}}

    \addplot+[mark=none, purple] 
      table[x=fpr, y=tpr_mean, col sep=comma] {./data_2/cora/mismatched/roc_lira_mean.csv};
    \addlegendentry{LiRA~\cite{Car2022}}

    \addplot+[mark=none, brown] 
      table[x=fpr, y=tpr_mean, col sep=comma] {./data_2/cora/mismatched/roc_MLP-attack-0hop_mean.csv};
    \addlegendentry{MLP 0-Hop~\cite{Ola2021,Xin2021}}

    \addplot+[mark=none,orange] 
      table[x=fpr, y=tpr_mean, col sep=comma] {./data_2/cora/mismatched/roc_MLP-attack-comb_mean.csv};
    \addlegendentry{MLP 0+2-Hop~\cite{Ola2021,Xin2021}}

    \addplot[dashed, gray, domain=1e-3:1] {x};




  \end{axis}
\end{tikzpicture}
    \end{center}
    \vspace{-1ex}
       \caption{ROC of a mismatched attack averaged over 10 independent target models. Shadow models utilize a GAT architecture and  different training procedure. 8 shadow models for online; 4 for offline.}
    \label{fig:mismatched-adv}
    \vspace{-5ex}
\end{wrapfigure}
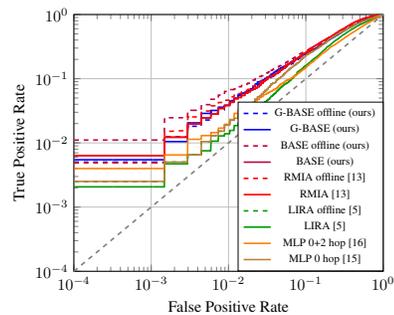

\textbf{Computational aspects.} An advantage of BASE over prior state-of-the-art attacks LiRA and RMIA is its improved efficiency. In Section~\ref{sec:BASE}, in light of the equivalence between BASE and RMIA, we discussed how BASE requires much fewer model queries compared to RMIA. In addition to advantages such as being more stealthy and robust to imposed limits on allowed queries, it also improves the computational efficiency of the inference phase of the attack.

Compared to LiRA, there is no advantage in terms of model queries since LiRA also only requires querying the models with the target samples. However, LiRA typically requires a larger number of shadow models to reach its full potential as our results indicate (see Table~\ref{tab:MIAGNN} and Table~\ref{tab:iid}), and which is also found in \cite{Zar2024}. Training the shadow models is the most costly part, especially for larger models and datasets. By achieving good performance at a lower number of shadow models, BASE has a computational performance advantage over LiRA.

As for G-BASE, although it is a tractable approximation of the Bayes-optimal membership inference on graph data, it is less efficient than the 0-hop attacks\textemdash the MIAs designed for i.i.d. data and adapted to graph data by ignoring the edges. This is not surprising since G-BASE uses a more complex loss signal and has to estimate an expectation over target node neighborhoods. However, it is the inability to parallelize the membership inference over multiple target nodes that is the true bottleneck of G-BASE in our experimental settings. Since the loss signal of G-BASE includes a term that depends on the difference in loss values over a neighborhood when the target node is included and when it is not (see Figure~\ref{fig:bayes_rule}), it is not trivial to parallelize the computation of the loss signals over several target nodes. The 0-hop MIAs, on the other hand, can compute their loss signals over a large batch of target nodes in parallel. Thus, the time complexity of inference with G-BASE has an additional factor of $N\times M$ compared to BASE, where $N$ is the number of target nodes and $M$ is the number of sampled graphs. We found that G-BASE performs well with only $M=8$ (see Appendix~\ref{app:comparison-sampling-strategies} for results regarding this), but the number of target nodes is typically on the order of thousands, and consequently this factor is the dominating factor.
Nevertheless, G-BASE remains computationally tractable, and we believe it is ultimately more important to accurately assess data leakage than to be efficient, in agreement with \cite{Car2022}. However, further improvements in the efficiency and parallelization of the attack are interesting and important directions of future research. In Appendix~\ref{app:wall-clock-time}, we quantify the efficiency of our attacks, LiRA, and RMIA in terms of wall-clock time.

\textbf{Conclusions.}
 We proposed BASE and G-BASE, practical and theoretically-grounded MIAs for i.i.d. and graph data. By deriving the Bayes-optimal inference rule for node-level attacks on GNNs, we addressed key challenges posed by structural dependencies in graphs. Our attacks 
 match or surpass existing state-of-the-art methods (LiRA and RMIA) while, in the case of BASE, requiring significantly lower computational overhead. Our results bridge the gap between theoretical optimality and practical implementation of MIAs, offering an efficient framework for privacy auditing across both classical and graph-based learning settings. 

\section{Acknowledgments and Disclosure of Funding}

This work was partially supported by the Wallenberg AI, Autonomous Systems and Software Program (WASP) funded by the Knut and Alice Wallenberg Foundation, by the Swedish Research Council (VR) under grants 2020-03687 and  2023-05065, and by Vinnova under grant 2023-03000. 

The computations were enabled by resources provided by the National Academic Infrastructure for Supercomputing in Sweden (NAISS), partially funded by the Swedish Research Council through grant agreement no. 2022-06725.

\bibliography{refs}
\bibliographystyle{unsrtnat}

\newpage
\appendix
\onecolumn

\section{Experimental Setup}
\label{app:ExperimentalSetup}

\textbf{Target Model Training.}
We consider target models trained in a supervised manner for node classification tasks, using the commonly adopted cross-entropy loss as the objective function. We evaluate three target model architectures: graph convolutional networks (GCNs) \cite{Kip2017}, GraphSAGE \cite{Ham2017} with max aggregation, and graph attention networks (GATs) \cite{Vel2018} using 4 attention heads in the first layer and 2 in the second. Optimization is performed using Adam~\cite{Kin2015}. For each dataset and model, hyperparameters are selected via a grid search, including the learning rate, weight decay, number of training epochs, dropout rate, and dimension of the first GNN layer.\footnote{The second layer always produces embeddings with dimensionality equal to the number of classes.} In particular, we search over $\{0.01,0.001\}$ for the learning rate, $\{0.0001,0.00001\}$ for the weight decay, and $\{0.0,0.25,0.5\}$ for the dropout rate. For the hidden dimension of the first layer, we search in $\{32,64,128,256,512\}$, with 32 or 512 excluded depending on the dataset. The initial search space for the number of epochs is typically $\{20,50,100,200,400,800,1600\}$, and is sometimes later refined. Target models with very small generalization gaps are often difficult to attack, making it harder to discern differences in MIA performance. To avoid this issue, we restrict the grid search to configurations that yield an average generalization gap of at least 8\%. This represents a realistic generalization gap, sufficient to enable meaningful attacks without excessive overfitting that would distort expected attack performance.
We emphasize that our goal is not to produce the best performing target model, but rather to obtain well-performing target models with a representative generalization gap.

The dataset-model combinations evaluated in Table~\ref{tab:MIAGNN} correspond to the best-performing model architecture selected for each dataset. The corresponding train and test accuracies of the target models are reported in Table~\ref{tab:accuracy}.
\begin{table}[h!]
\vspace{-3ex}
\centering
\caption{Train and test accuracy of the target model on the different datasets and architectures used in Table~\ref{tab:MIAGNN}. The accuracies are reported as mean $\pm$ standard deviation, over the 10 different target models used in all except the mismatched adversary experiments.}
\smallskip
\begin{tabular}{lcc}
\toprule
\textbf{Dataset (model)} & \textbf{Train accuracy} (\%) & \textbf{Test accuracy} (\%) \\
\midrule
Cora (GCN) & $96.07 \pm 0.37$ & $80.67 \pm 1.10$ \\
Citeseer (GAT) & $92.36 \pm 0.43$ & $73.80 \pm 0.97$ \\
Pubmed (GraphSAGE) & $96.30 \pm 0.26$ & $87.14 \pm 0.28$ \\
Flickr (GCN) & $57.48 \pm 1.05$ & $47.08 \pm 0.61$ \\
Amazon-photo (GAT) & $99.77 \pm 0.06$ & $91.44 \pm 0.77$ \\
Github (GraphSAGE) & $93.68 \pm 1.88$ & $84.37 \pm 0.91$ \\
\bottomrule
\end{tabular}
\label{tab:accuracy}
\end{table}

\textbf{Shadow model training.} Shadow models are trained using the same hyperparameter settings as the target model, implicitly assuming adversarial side-knowledge.
This setting is particularly relevant for MIA auditing, as it yields an upper bound on the attack performance.
To facilitate efficient MIA auditing in the online setting (see Appendix~\ref{app:OnOff} for a discussion of online vs. offline settings), we adopt the shadow model training procedure proposed in \cite{Car2022} and also used in \citep{Zar2024}. Specifically, each shadow model is trained on half of the data population (e.g., half the nodes in a graph dataset), such that each data sample is included in the training set of half of the models.
Pseudo-code for our precise shadow model training procedure is provided in \cref{Alg:ShadowModelTrainingProcedure}.
For graph data, the data population is a graph dataset, and sampling data points corresponds to sampling nodes, retaining the edges between sampled nodes.
This procedure guarantees a balanced set of in-models (models trained with the target data point) and out-models (models trained without it) for each data point.
In the offline setting,  for each target sample, the in-models for that sample are filtered out, so that  only out-models are used in the attack. This filtering approach eliminates the need for a separate, disjoint dataset to train shadow models, an important advantage in graph-data settings, since splitting a graph in disjoint parts reduces the number of edges, leading to sparse graphs when the full graph has low average degree.
The downside of the filtering approach is that only half of the shadow models are used for attacking any given target sample.
\begin{algorithm}
\caption{\textbf{Shadow Model Training Procedure.}}
\label{Alg:ShadowModelTrainingProcedure}
\begin{algorithmic}[1]
\State \textbf{Input:} Data population $\mcG$, training algorithm $\mcT$, and even number of shadow models $2N$.
\State $\Phi\leftarrow \emptyset$
\For{$k=1$ to $N$}
    \State $\mcG_k\sim\text{Uniform}(\mcG)$, $|\mcG_k| = \frac{1}{2}|\mcG|$
    \State $\mcG^c_k=\{z: z\in\mcG,z\notin \mcG_k\}$
    \State $\bphi_k \leftarrow \mcT(\mcG_k)$
    \State $\bphi^c_k \leftarrow \mcT(\mcG^c_k)$
    \State $\Phi \leftarrow \Phi\cup\{\bphi_k, \bphi^c_k\}$
\EndFor
\State \Return $\Phi$
\end{algorithmic}
\end{algorithm}

\section{Bayes-Optimal Node-Level MIAs Against GNNs}
\label{app:BayesOptimalMIAGNN}

\subsection{Proof of Theorem~\ref{thm:BayesOptimalRule}}
\label{app:Thm1}

The proof follows along the lines of the proofs of \cite[Thms.~1 and~2]{Sab2019} for the case of i.i.d. data. 
We begin by applying the law of total expectation to express $P(m_v=1|\btheta,\mcG)$ as an expectation over the unknown membership statuses of the remaining nodes,
\begin{equation}
    P(m_v=1|\btheta,\mcG) = \mathbb{E}_{\tmcM\sim P(\tmcM|\btheta,\mcG)}[P(m_v=1|\tmcM,\btheta,\mcG)]\,.
\end{equation}
Unlike \citep{Sab2019}, where the expectation is taken over the data samples, our threat model assumes that the full graph $\mcG$ is known to the adversary. Consequently, we marginalize only over the unknown membership indicator variables of the non-target nodes.

The term $P(m_v=1|\tmcM,\btheta,\mcG)$ can be computed using Bayes' rule, yielding
\begin{align}
\label{Eq:BayesOptimalMIApp}
    P(m_v=1|&\btheta,\mcG) = \mbbE_{\tmcM\sim P(\tmcM|\btheta,\mcG)}\left[\sigma\left(\log\frac{p(\btheta|m_v=1,\tmcM,\mcG)}{p(\btheta|m_v=0,\tmcM,\mcG)} + \log\frac{\lambda}{1-\lambda}\right)\right]\,,
\end{align}
where $\lambda = P(m_v=1|\tmcM,\mcG)$ is the prior probability that node $v$ is a member of the training set, before observing the model.  Under our threat model, conditioning only on the data and the membership indicators of other nodes does not provide any information about the target node membership status. Hence, $\lambda = P(m_v=1)$.

Finally, we need to compute the log ratio of model posteriors in \eqref{Eq:BayesOptimalMIApp}.
Assuming the negative log-likelihood loss function defined in \eqref{Eq:NLL-loss-graph}, and making the natural assumption that, given $\mcM$, the model depends only on the subgraph of $\mcG$ induced by $\mcM$\textemdash that is, it is influenced solely by the data it was trained on\textemdash the posterior distribution of the model parameters can be expressed using Bayes' rule in terms of the loss function and a prior $p(\btheta|\mcM,\bX,\bA)$ as
\begin{align}
\label{Eq:model-posterior-graphApp}
    p(\btheta|\mcM,\mcG) &= \frac{p(\{\by_v\}_{v:m_v=1}|\btheta,\mcM,\bX,\bA)p(\btheta|\mcM,\bX,\bA)}{\int p(\{\by_v\}_{v:m_v=1}|\bphi,\mcM,\bX,\bA)p(\bphi|\mcM,\bX,\bA)\td\bphi} \nonumber\\
    &\stackrel{(a)}{=} \frac{\prod_{v:m_v=1}{p(\by_v|\btheta,\mcM,\bX,\bA)p(\btheta|\mcM,\bX,\bA)}}{\int \prod_{v:m_v=1}p(\by_v|\bphi,\mcM,\bX,\bA)p(\bphi|\mcM,\bX,\bA)\td\bphi} \nonumber\\
    &\stackrel{(b)}{=} \frac{e^{-\sum_{v\in\mcV} m_v \ell(f_{\btheta}(\bX,\bA_\mcM)_v,\by_v)}p(\btheta|\mcM,\bX,\bA)}{\int e^{-\sum_{v\in\mcV} m_v \ell(f_{\bphi}(\bX,\bA_\mcM)_v,\by_v)}p(\bphi|\mcM,\bX,\bA)\td\bphi}\,,
\end{align}
where $(a)$ follows since, from~\eqref{eq:NodeEmbedding}, each embedding $\bz_v$ is a  function of $(\bX, \bA, \btheta, \mathcal{M})$; by~\cite[Sec.~3.3]{KollerFriedman2009}, this implies that $\bz_v$ is independent of $\bz_u$ given $(\bX, \bA, \btheta, \mathcal{M})$ for all $v \ne u$. Applying the softmax and indexing with $\by_v$, see~\eqref{Eq:prob-of-label-graph}, preserves this independence. Step $(b)$ follows from the definition of the loss function in~\eqref{Eq:NLL-loss-graph}. Also, the $n\times n$ matrices $\bA_\mcM$ and $\bA_{\tmcM}$ are defined as
\begin{equation*}
    (\bA_\mcM)_{uw} = \left\{ 
    \begin{array}{ll} 
        A_{uw}\,,\quad m_u=m_w=1,\;\\
        0\,,\qquad \text{otherwise}
    \end{array}\right.\quad\text{and}\quad  (\bA_{\tmcM})_{uw} = \left\{ 
    \begin{array}{ll} 
        (\bA_\mcM)_{uw} \,,\quad u,w\neq v,\;\\
        0\,,\qquad\qquad \text{otherwise}
    \end{array}\right.
\end{equation*}
for $ u,w\in\mcV$. In words, $\bA_\mcM$ is the adjacency matrix of the subgraph of $\mcG$ induced by the nodes masked by $\mcM$, while $\bA_{\tmcM}$ corresponds to the subgraph of this resulting graph obtained by removing the target node and all its adjacent edges.

We can now evaluate the log likelihood-ratio in terms of the loss function:
\begin{align}
\label{Eq:LLRApp}
    &\log\frac{p(\btheta|m_v=1,\tmcM,\mcG)}{p(\btheta|m_v=0,\tmcM,\mcG)}\nonumber\\
    &= -\sum_{u\in\mcV}m_u\ell(f_\theta(\bX,\bA_\mcM)_u,\by_u) + \sum_{u\in\mcV\backslash\{v\}}m_u\ell(f_\theta(\bX,\bA_{\tmcM})_u,\by_u)\nonumber\\
    &\quad+\log\frac{p(\btheta|m_v=1,\tmcM,\bX,\bA)}{p(\btheta|m_v=0,\tmcM,\bX,\bA)}\nonumber\\
    &\quad-\log\frac{\int e^{-\sum_{u\in\mcV}m_u\ell(f_{\bphi}(\bX,\bA_\mcM)_u,\by_u)}p(\bphi|m_v=1,\tmcM,\bX,\bA)\td\bphi}{\int e^{-\sum_{u\in\mcV\backslash\{v\}}m_u\ell(f_{\bphi'}(\bX,\bA_{\tmcM})_u,\by_u)}p(\bphi'|m_v=0,\tmcM,\bX,\bA)\td\bphi'}\nonumber\\
    &\stackrel{(c)}{=} -\ell(f_{\btheta}(\bX,\bA_\mcM)_v,\by_v) - \Delta\mcL_{\mcN_L(v)} + \log\frac{p(\btheta|m_v=1,\tmcM,\bX,\bA)}{p(\btheta|m_v=0,\tmcM,\bX,\bA)}\nonumber\\
    &\quad-\log\int e^{-\ell(f_{\bphi}(\bX,\bA_\mcM)_v,\by_v) - \Delta\mcL_{\mcN_L(v)}}\frac{p(\bphi|m_v=1\tmcM,\bX,\bA)}{p(\bphi|m_v=0\tmcM,\bX,\bA)}\nonumber\\
    &\hspace{45pt}\times\frac{e^{-\sum_{u\in\mcV\backslash\{v\}}m_u\ell(f_{\bphi}(\bX,\bA_{\tmcM})_u,\by_u)}p(\bphi|m_v=0,\tmcM,\bX,\bA)}{\int e^{-\sum_{u\in\mcV\backslash\{v\}}m_u\ell(f_{\bphi'}(\bX,\bA_{\tmcM})_u,\by_u)}p(\bphi'|m_v=0,\tmcM,\bX,\bA)\td\bphi'}\td\bphi\nonumber\\
    &\stackrel{(d)}{=} -\ell(f_{\btheta}(\bX,\bA_\mcM)_v,\by_v) - \Delta\mcL_{\mcN_L(v)} + \log \Lambda(\btheta,\tmcM,\bX,\bA)\nonumber\\
    &\quad-\log \int e^{-\ell(f_{\bphi}(\bX,\bA_\mcM)_v,y_v) - \Delta\mcL_{\mcN_L(v)} + \log\Lambda(\bphi,\tmcM,\bX,\bA)} p(\bphi|m_v=0,\tmcM,\mcG)\td\bphi\,,
\end{align}
where in step $(c)$ we have used
\begin{align*}
    &-\sum_{u\in\mcV}m_u\ell(f_\theta(\bX,\bA_\mcM)_u,\by_u) + \sum_{u\in\mcV\backslash\{v\}}m_u\ell(f_\theta(\bX,\bA_{\tmcM})_u,\by_u)\\
    &= -\ell(f_{\btheta}(\bX,\bA_\mcM)_v,\by_v) - \sum_{u\in\mcV\backslash\{v\}}m_u(\ell(f_\theta(\bX,\bA_\mcM)_u,\by_u) - \ell(f_\theta(\bX,\bA_{\tmcM})_u,\by_u))\\
    &=-\ell(f_{\btheta}(\bX,\bA_\mcM)_v,\by_v) - \sum_{u\in\mcN_L(v)}m_u(\ell(f_\theta(\bX,\bA_\mcM)_u,\by_u) - \ell(f_\theta(\bX,\bA_{\tmcM})_u,\by_u))\\
    &=-\ell(f_{\btheta}(\bX,\bA_\mcM)_v,\by_v) - \Delta\mcL_{\mcN_L(v)}
\end{align*}
since the inclusion or exclusion of node $v$  affects only the predictions within its $L$-hop neighborhood $\mcN_L(v)$, assuming an $L$-layer GNN. Furthermore, in step $(d)$ we used \eqref{Eq:model-posterior-graphApp} to simplify the model distribution that is integrated over and substituted $\Lambda(\bphi,\tmcM,\bX,\bA) = \frac{p(\bphi|m_v=1,\tmcM,\bX,\bA)}{p(\bphi|m_v=0,\tmcM,\bX,\bA)}$ for the prior likelihood ratio (before observing the labels).

Combining \eqref{Eq:BayesOptimalMIApp} and \eqref{Eq:LLRApp} concludes the proof.

\subsection{G-BASE: Practical Bayes-Optimal MIA against GNNs}
\label{app:PracticalBayesGNN}

We elaborate on the proposed sampling strategies used to approximate the expectation over the membership statuses of all non-target samples, $\tmcM$, i.e., the term $\mbbE_{\tmcM\sim P(\tmcM|\btheta,\mcG)}$ in Equation~\eqref{Eq:BayesOptimalMI}. In practice, we compute samples of all the indicator variables $\mcM$ and obtain $\tmcM$ by discarding the indicator variable of the target node. This allows us to get samples of $\tmcM$ for multiple target nodes from a single sample $\mcM$.

\textbf{Model-independent sampling.}
The simplest approximation assumes that the membership inference configuration $\mcM$ is independent of the target model, thereby ignoring the conditioning on $\btheta$, i.e.,  $P(\mcM|\btheta,\mcG) \approx P(\mcM|\mcG)$. Under this assumption, we approximate $P(\mcM|\mcG)$ by treating each membership indicator $m_v\in\mcM$ as i.i.d. according to a Bernoulli distribution with parameter $\lambda=P(m_v=1)$. We then generate samples of $\mcM$ as  $\mcM = \{m_v: m_v\sim\text{Ber}(\lambda),\, v\in\mcV\}$.

\textbf{$0$-hop MIA sampling.}  Recall that $P(\mcM|\btheta,\mcG)$ is the joint distribution of the membership of all nodes. A natural approximation is to apply a per-node MIA to estimate individual membership probabilities and assume independence across nodes.
Concretely, we apply a $0$-hop MIA\textemdash i.e., we ignore the graph structure and treat nodes as i.i.d. samples. This reduces the problem to the standard i.i.d. setting, where well-established MIA methods can be applied. Any i.i.d.-based MIA that yields membership scores convertible to probabilities can be used. Here, we adopt the attack introduced in Section~\ref{sec:BASE}  which approximates the Bayes-optimal inference rule in the i.i.d setting. We then generate samples from the approximate posterior as  $\mcM = \{m_v: m_v\sim\text{Ber}(P(m_v|\btheta,\bX_v,\by_v)), v\in\mcV\}$, where $P(m_v|\btheta,\bX_v,\by_v)$ is the membership probability of node $v$ assigned by the BASE attack.

\textbf{Metropolis-Hastings sampling.} To account for the dependence of 
$\mcM$ on the target model  $\btheta$, we develop a Markov chain Monte Carlo (MCMC) method based on the Metropolis-Hastings algorithm. The goal is to  construct a Markov chain over membership configurations $\mcM$ with $P(\mcM | \btheta, \mcG)$ as its stationary distribution.  To apply Metropolis-Hastings, we need to be able to evaluate the unnormalized (up to a multiplicative constant) probability mass function $P(\mcM | \btheta, \mcG)$, i.e., a function $P^*(\mcM|\btheta,\mcG)$ satisfying
\begin{equation*}
    P(\mcM|\btheta,\mcG) = \frac{P^*(\mcM|\btheta,\mcG)}{\sum_{\mcM'} P^*(\mcM'|\btheta,\mcG)}.
\end{equation*}
We derive such a function using Bayes’ rule, assuming a uniform prior over 
$\mcM$ to eliminate the prior terms,
\begin{equation*}
    P(\mcM|\btheta,\mcG) = \frac{p(\btheta|\mcM,\mcG)}{\sum_{\mcM'}p(\btheta|\mcM',\mcG)}\,.
\end{equation*}
Using \eqref{Eq:model-posterior-graphApp} for the model posterior and assuming that the prior (before observing the labels) is independent of $\mcM$ and only depends on the graph under consideration, we obtain
\begin{equation}
\label{Eq:m_tilde_propto}
    P(\mcM|\btheta,\mcG) \propto \frac{e^{-\sum_{u\in\mcV} m_u \ell(f_{\btheta}(\bX,\bA_{\mcM})_u,\by_u)}}{\int e^{-\sum_{u\in\mcV} m_u \ell(f_{\bphi}(\bX,\bA_{\mcM})_u,\by_u)}p(\bphi)\td\bphi} = P^*(\mcM|\btheta,\mcG)\,.
\end{equation}

Our proposal distribution $Q(\mcM'|\mcM)$ randomly flips a small proportion $\epsilon$ of the indicator variables in the current state $\mcM$. Notably, this proposal distribution is symmetric, $Q(\mcM'|\mcM)=Q(\mcM|\mcM')$, so it cancels from the Metropolis-Hastings acceptance ratio.
To sample from this distribution, we initialize the Markov chain at a randomly-selected configuration~$\mcM^{(0)}$. At iteration
$t$, we propose a new configuration $\mcM^*$ by flipping a fraction  $\epsilon$ of the membership indicators in $\mcM^{(t)}$, and compute the log acceptance ratio
\begin{align}
\begin{split}
\label{App:MetropolisUpdate}
    \log \frac{p(\btheta|\mcM^*,\mcG)}{p(\btheta|\mcM^{(t)},\mcG)} = &\sum_{u\in\mcV}(m_u^{(t)}\ell(f_{\btheta}(\bX,\bA_{\mcM^{(t)}})_u,\by_u) - m_u^*\ell(f_{\btheta}(\bX,\bA_{\mcM^*})_u,\by_u))\\
    &+\log\left(\int e^{-\sum_{u\in\mcV}m_u^{(t)}\ell(f_{\btheta}(\bX,\bA_{\mcM^{(t)}})_u,\by_u)}p(\bphi)\td\bphi\right)\\
    &-\log\left(\int e^{-\sum_{u\in\mcV} m_u^*\ell(f_{\btheta}(\bX,\bA_{\mcM^*})_u,\by_u)}p(\bphi)\td\bphi\right)\,.
\end{split}
\end{align}
Each integral can be efficiently approximated via Monte Carlo sampling using shadow models. Notably, the same shadow models used to approximate the inner expectation in~\eqref{Eq:BayesOptimalMI} can be reused here to evaluate~\eqref{App:MetropolisUpdate}. We then accept the proposal with probability $\min(1,p(\btheta|\mcM^*,\mcG)/p(\btheta|\mcM^{(t)},\mcG))$. Specifically,  we draw $u\sim \text{Uniform}(0,1)$ and set
\begin{align*}
(\mcM^{(t+1)},p(\btheta|\mcM^{(t+1)},\mcG))=\begin{cases}
			(\mcM^*,p(\btheta|\mcM^*,\mcG)) & \text{if $\frac{p^*}{p^{(t)}} > u$}\\
            (\mcM^{(t)},p(\btheta|\mcM^{(t)},\mcG))  & \text{otherwise}.
		 \end{cases}
\end{align*}

To obtain approximately independent samples, we insert a burn-in period at the beginning of the chain and use thinning\textemdash collecting samples only after a sufficient number of iterations.

\textbf{Gibbs sampling.} Another MCMC sampling strategy that is well suited to our setup is Gibbs sampling. The sample a an instance of indicator variables $\mcM\sim P(\mcM|\btheta,\mcM)$, the Gibbs method iteratively samples individual components conditioned on the remaining indicator variables. More precisely, the $k$th iteration of Gibbs sampling produces a sample $\mcM^{(k)}=\{m_1^{(k)},\dots,m_n^{(k)}\}$ by sampling the components sequentially:
\begin{align*}
    m_1^{(k)} &\sim P(m_1|m_2^{(k-1)},\dots,m_n^{(k-1)},\btheta,\mcG)\\
    m_2^{(k)} &\sim P(m_2|m_1^{(k)},m_3^{(k-1)},\dots,m_n^{(k-1)},\btheta,\mcG)\\
    &\vdots\\
    m_n^{(k)} &\sim P(m_n|m_1^{(k)},\dots,m_{n-1}^{(k)},\btheta,\mcG)
\end{align*}
We note that each of these conditional probabilities is given by the Bayes-optimal membership inference formula \eqref{Eq:BayesOptimalMI}:
\begin{align*}
    P(m_v&|\tmcM,\btheta,\mcG)\\
    &= \sigma\left(  -S_{\mcL}(f_{\btheta},v,\tmcM,\mcG) -\log \int e^{-S_{\mcL}(f_{\bphi},v,\tmcM,\mcG)}p(\bphi|m_v=0,\tmcM,\mcG)\td\bphi + \log\frac{\lambda}{1-\lambda}\right)\,.
\end{align*}
Sampling from the conditional distributions simply amounts to sample from a Bernoulli distribution:
\begin{equation*}
    m_u \sim \text{Ber}(P(m_u|m_1,\dots,m_{u-1},m_{u+1},\dots,m_n,\btheta,\mcG))\,.
\end{equation*}
This Gibbs sampling strategy is therefore a kind of MIA-based sampling that improves on the 0-hop MIA sampling by accounting for the graph structure. A further advantage of the Gibbs sampling strategy over the Metropolis-Hastings method is that it does not require tuning any parameters. However, it is computationally demanding since a single iteration requires querying the models over each node in the full graph $\mcG$.

\section{Selecting the Decision Threshold}
\label{app:threshold}

Since an adversary does not have access to ground truth membership labels, they cannot directly tune the decision threshold (by sweeping $\tau$) to achieve a specific FPR.
Instead, the adversary must choose a threshold that is expected to yield an FPR close to the maximal tolerated FPR.
For MIAs based on shadow models, we propose finding such a threshold by
designating a subset of the shadow models as simulated target models, as suggested in \cite{Sab2019,wat2022DifficultyCalibration}.
The simulated target models can be attacked using the remaining shadow models, possibly employing cross-validation.
Because the ground-truth training data is known for the shadow models, the adversary can sweep over decision thresholds and identify the values that are expected to approximately yield the desired FPR.

By repeating the threshold estimation process across multiple simulated target models, the adversary can assess the variability in the resulting thresholds.
A conservative adversary would choose a threshold at least as large as the maximum threshold obtained.
Another viable option is to choose the mean of the thresholds.
As shown in Table~\ref{tab:thresholds} (graph data) and Table~\ref{tab:thresholds_iid} (i.i.d. data), the variation in threshold values across target models is small, indicating that thresholds estimated from simulated target models are fairly stable. RMIA exhibits the lowest threshold variance. However, the viability of estimating the threshold using simulated target models also depends on how sensitive the resulting FPR is to threshold fluctuations.
Therefore, to demonstrate the effectiveness of this approach, we train 10 shadow models to act as simulated target models. These simulated target models are attacked and, using the ground truth knowledge about the training members, the decision thresholds resulting in an  FPR not exceeding 1\% are computed. The estimated threshold is then taken as the average of these decision thresholds. To run the attacks, 8 (online) and 4 (offline) separate shadow models are used. Table~\ref{tab:est-threshold} shows the TPR and FPR achieved when using the estimated threshold on real target models. The FPR obtained using the estimated threshold does not deviate  much from the target 1\% FPR. Hence, the TPR is also close to the TPR of the exact 1\% FPR threshold. Despite the differences in threshold variance between LiRA, RMIA, and our attacks (see Table~\ref{tab:thresholds} and Table~\ref{tab:thresholds_iid}), there is no significant difference in the accuracy of the estimated threshold.

\begin{table}[!t]
\caption{Decision thresholds for RMIA, BASE and G-BASE resulting in the largest FPR less than or equal to the target FPR (1\% and 0.1\%). The threshold is reported as the mean $\pm$ 
standard deviation, averaged over 10 different target models and sets of target samples. A comparatively small standard deviation indicates that the threshold is not expected to vary too much over different target models, allowing the adversary to estimate a threshold using shadow models as simulated target models. The RMIA threshold does not satisfiy the empirical rule ``$\text{threshold} = 1-\text{FPR}$'' as reported in \cite{Zar2024}.}
\label{tab:thresholds}
\vspace{-1ex}
\begin{center}
\begin{sc}
\begin{adjustbox}{width=\textwidth}
\begin{tabular}{clcccccc}
\toprule
& & \multicolumn{2}{c}{Cora (GCN)} & \multicolumn{2}{c}{Citeseer (GAT)} & \multicolumn{2}{c}{Pubmed (GraphSAGE)} \\
\cmidrule(lr){3-4} \cmidrule(lr){5-6} \cmidrule(lr){7-8}
$K$ & Attack & \multicolumn{2}{c}{Threshold@FPR} & \multicolumn{2}{c}{Threshold@FPR} & \multicolumn{2}{c}{TPR@FPR} \\
\cmidrule(lr){3-4} \cmidrule(lr){5-6} \cmidrule(lr){7-8}
& & 1\% & 0.1\% & 1\% & 0.1\% & 1\% & 0.1\% \\
\midrule
\multirow{4}{*}{8} & RMIA  & $0.8566 \pm 0.0078$& $0.9030 \pm 0.0174$ & $0.8240 \pm 0.0124$& $0.8990 \pm 0.0299$ & $0.9474 \pm 0.0039$& $0.9822 \pm 0.0030$\\
& \textbf{BASE}  & $0.6024 \pm 0.0098$& $0.6913 \pm 0.0342$ & $0.5907 \pm 0.0097$& $0.6971 \pm 0.0548$ & $0.6052 \pm 0.0093$& $0.7140 \pm 0.0161$\\
& \textbf{G-BASE} (MIA)  & $0.5619 \pm 0.0072$& $0.6204 \pm 0.0327$ & $0.5724 \pm 0.0059$& $0.6471 \pm 0.0309$ & $0.5928 \pm 0.0053$& $0.6771 \pm 0.0175$\\
\midrule
\multirow{3}{*}{4} & RMIA (off)  & $0.8275 \pm 0.0087$& $0.8612 \pm 0.0128$ & $0.7649 \pm 0.0101$& $0.8315 \pm 0.0145$ & $0.9719 \pm 0.0027$& $0.9926 \pm 0.0016$\\
& \textbf{BASE} (off)  & $0.5358 \pm 0.0062$& $0.6015 \pm 0.0454$ & $0.5540 \pm 0.0075$& $0.6341 \pm 0.0368$ & $0.6240 \pm 0.0093$& $0.7422 \pm 0.0182$\\
& \textbf{G-BASE} (off)  & $0.7034 \pm 0.0127$& $0.8160 \pm 0.0324$ & $0.7064 \pm 0.0168$& $0.8466 \pm 0.0311$ & $0.6661 \pm 0.0058$& $0.8031 \pm 0.0213$\\
\bottomrule
\end{tabular}
\end{adjustbox}
\begin{adjustbox}{width=\textwidth}
\begin{tabular}{clcccccc}
\toprule
& & \multicolumn{2}{c}{Flickr (GCN)} & \multicolumn{2}{c}{Amazon-Photo (GAT)} & \multicolumn{2}{c}{Github (GraphSAGE)} \\
\cmidrule(lr){3-4} \cmidrule(lr){5-6} \cmidrule(lr){7-8}
$K$ & Attack & \multicolumn{2}{c}{Threshold@FPR} & \multicolumn{2}{c}{Threshold@FPR} & \multicolumn{2}{c}{Threshold@FPR} \\
\cmidrule(lr){3-4} \cmidrule(lr){5-6} \cmidrule(lr){7-8}
& & 1\% & 0.1\% & 1\% & 0.1\% & 1\% & 0.1\% \\
\midrule
\multirow{4}{*}{8} & RMIA  & $0.8029 \pm 0.0086$& $0.9142 \pm 0.0099$ & $0.9267 \pm 0.0101$& $0.9675 \pm 0.0144$ & $0.9462 \pm 0.0053$& $0.9785 \pm 0.0038$\\
& \textbf{BASE}  & $0.9080 \pm 0.0091$& $0.9873 \pm 0.0036$ & $0.7167 \pm 0.0392$& $0.8866 \pm 0.0762$ & $0.6328 \pm 0.0178$& $0.7250 \pm 0.0300$\\
& \textbf{G-BASE} (MIA)  & $0.6770 \pm 0.0108$& $0.7910 \pm 0.0109$ & $0.6031 \pm 0.0188$& $0.7316 \pm 0.0532$ & $0.6162 \pm 0.0114$& $0.6990 \pm 0.0189$\\
\midrule
\multirow{3}{*}{4} & RMIA (off)  & $0.8695 \pm 0.0074$& $0.9330 \pm 0.0042$ & $0.8679 \pm 0.0084$& $0.9032 \pm 0.0055$ & $0.9490 \pm 0.0119$& $0.9801 \pm 0.0056$\\
& \textbf{BASE} (off)  & $0.9143 \pm 0.0084$& $0.9884 \pm 0.0036$ & $0.5806 \pm 0.0119$& $0.7219 \pm 0.0830$ & $0.6317 \pm 0.0233$& $0.7421 \pm 0.0258$\\
& \textbf{G-BASE} (off)  & $0.6965 \pm 0.0109$& $0.8072 \pm 0.0191$ & $0.6822 \pm 0.0121$& $0.8604 \pm 0.0279$ & $0.6677 \pm 0.0108$& $0.7826 \pm 0.0184$\\
\bottomrule
\end{tabular}
\end{adjustbox}
\end{sc}
\end{center}
\vspace{-3ex}
\end{table}
\begin{table}[!t]
\caption{Decision thresholds for BASE (without the sigmoid normalization), RMIA and LiRA resulting in the largest FPR less than or equal to the target FPR (1\% and 0.1\%). The threshold is reported as the mean $\pm$ standard deviation, averaged over 10 different target models. A comparatively small standard deviation indicates that the threshold is not expected to vary too much over different target models, allowing the adversary to estimate a threshold using shadow models as simulated target models. The RMIA threshold does not satisfy the empirical rule threshold=1-FPR as reported in \cite{Zar2024}.}
\label{tab:thresholds_iid}
\vspace{-1ex}
\begin{center}
\begin{sc}
\begin{adjustbox}{width=\textwidth}
\begin{tabular}{clcccc}
\toprule
& & \multicolumn{2}{c}{CIFAR-10} & \multicolumn{2}{c}{CIFAR-100} \\
\cmidrule(lr){3-4} \cmidrule(lr){5-6}
$K$ & Attack & \multicolumn{2}{c}{Threshold@FPR} & \multicolumn{2}{c}{Threshold@FPR}  \\
\cmidrule(lr){3-4} \cmidrule(lr){5-6}
& & 1\% & 0.1\% & 1\% & 0.1\%\\
\midrule
\multirow{3}{*}{32}
& \textbf{BASE} & $0.5385 \pm 0.0473$ & $0.8159 \pm 0.0669$ & $0.8417 \pm 0.0735$ & $1.2240 \pm 0.0914$ \\
& RMIA & $0.9624 \pm 0.0047$ & $0.9900 \pm 0.0028$ & $0.9429 \pm 0.0139$ & $0.9833 \pm 0.0061$ \\
& LiRA & $0.6505 \pm 0.0515$ & $1.1098 \pm 0.1126$ & $0.8366 \pm 0.0478$ & $1.4039 \pm 0.0804$ \\
\midrule
\multirow{3}{*}{16}
& \textbf{BASE} (off) & $0.1766 \pm 0.0279$ & $0.3485 \pm 0.0472$ & $0.1920 \pm 0.0440$ & $0.4066 \pm 0.0525$ \\
& RMIA (off) & $0.9645 \pm 0.0050$ & $0.9909 \pm 0.0023$ & $0.9647 \pm 0.0078$ & $0.9918 \pm 0.0024$ \\
& LiRA (off) & $-0.0837 \pm 0.0223$ & $-0.0323 \pm 0.0136$ & $-0.0770 \pm 0.0201$ & $-0.0210 \pm 0.0080$ \\
\midrule
\multirow{3}{*}{8}
& \textbf{BASE} & $0.6116 \pm 0.0601$ & $0.9740 \pm 0.0839$ & $0.9207 \pm 0.0864$ & $1.4146 \pm 0.1155$ \\
& RMIA & $0.9675 \pm 0.0035$ & $0.99322 \pm 0.0015$ & $0.9505 \pm 0.0102$ & $0.9884 \pm 0.0037$ \\
& LiRA & $0.9362 \pm 0.0812$ & $1.5689 \pm 0.1635$ & $1.1176 \pm 0.0898$ & $2.0389 \pm 0.1434$ \\
\midrule
\multirow{3}{*}{4}
& \textbf{BASE} (off) & $0.2312 \pm 0.0364$ & $0.4795 \pm 0.0571$ & $0.24979 \pm 0.0569$ & $0.5437 \pm 0.0737$ \\
& RMIA (off) & $0.9698 \pm 0.0032$ & $0.9934 \pm 0.0014$ & $0.9689 \pm 0.0059$ & $0.9942 \pm 0.0021$ \\
& LiRA (off) & $-0.0664 \pm 0.0199$ & $-0.0221 \pm 0.0101$ & $-0.0581 \pm 0.0162$ & $-0.0132 \pm 0.0056$ \\
\bottomrule
\end{tabular}
\end{adjustbox}
\end{sc}
\end{center}
\vspace{-3ex}
\end{table}

\begin{table}[!t]
\caption{Attack performance using a threshold estimated by attacking 10 simulated target models. The estimated threshold is the average 1\% FPR threshold against the simulated target models. TPR at 1\% FPR is reported for comparison. Performance is measured as mean $\pm$ standard deviation against 10 target models. The FPR against the target models when using the estimated threshold is close to 1\%. Consequently, the TPR at the estimated threshold is close to the TPR at the 1\% FPR threshold. This method of estimating the threshold at a given fixed FPR does work for our attacks, LiRA, and RMIA.}
\label{tab:est-threshold}
\vspace{-1ex}
\begin{center}
\begin{sc}
\begin{adjustbox}{width=\textwidth}
\begin{tabular}{clccccccccc}
\toprule
& & \multicolumn{3}{c}{Cora (GCN)} & \multicolumn{3}{c}{Citeseer (GAT)} & \multicolumn{3}{c}{Pubmed (GraphSAGE)} \\
\cmidrule(lr){3-5} \cmidrule(lr){6-8} \cmidrule(lr){9-11}
$K$ & Attack & \multirow{2}{*}{TPR@1\%FPR (\%)} & \multicolumn{2}{c}{Estimated threshold} & \multirow{2}{*}{TPR@1\%FPR (\%)} & \multicolumn{2}{c}{Estimated threshold} & \multirow{2}{*}{TPR@1\%FPR (\%)} & \multicolumn{2}{c}{Estimated threshold} \\
\cmidrule(lr){4-5} \cmidrule(lr){7-8} \cmidrule(lr){10-11}
& & & TPR (\%) & FPR (\%) & & TPR (\%) & FPR (\%) & & TPR (\%) & FPR (\%) \\
\midrule
\multirow{4}{*}{8} & LiRA  & $9.25 \pm 3.79$& $6.97 \pm 0.65$& $0.83 \pm 0.34$ & $15.35 \pm 3.20$& $12.78 \pm 0.98$& $0.78 \pm 0.30$ & $1.22 \pm 0.24$& $1.24 \pm 0.14$& $1.04 \pm 0.13$\\
& RMIA  & $15.83 \pm 2.57$& $16.26 \pm 2.00$& $1.05 \pm 0.28$ & $24.08 \pm 2.03$& $21.46 \pm 2.36$& $0.85 \pm 0.22$ & $3.07 \pm 0.31$& $2.77 \pm 0.15$& $0.87 \pm 0.11$\\
& \textbf{BASE}  & $\mathbf{15.88 \pm 2.57}$& $\mathbf{17.19 \pm 1.29}$& $1.08 \pm 0.39$ & $\mathbf{24.09 \pm 2.03}$& $\mathbf{22.24 \pm 1.13}$& $0.87 \pm 0.10$ & $3.07 \pm 0.31$& $2.87 \pm 0.22$& $0.90 \pm 0.13$\\
& \textbf{G-BASE}  & $14.00 \pm 3.11$& $14.96 \pm 1.70$& $1.03 \pm 0.30$ & $21.03 \pm 2.90$& $21.85 \pm 1.50$& $1.14 \pm 0.30$ & $\mathbf{5.33 \pm 0.58}$& $\mathbf{5.39 \pm 0.39}$& $1.01 \pm 0.12$\\
\midrule
\multirow{4}{*}{4} & LiRA (off)  & $12.45 \pm 3.60$& $11.73 \pm 1.30$& $0.84 \pm 0.30$ & $18.17 \pm 2.24$& $20.26 \pm 0.86$& $1.22 \pm 0.30$ & $1.52 \pm 0.19$& $2.04 \pm 0.39$& $1.37 \pm 0.29$\\
& RMIA (off)  & $16.94 \pm 3.09$& $17.39 \pm 1.14$& $1.09 \pm 0.36$ & $25.38 \pm 3.42$& $23.87 \pm 0.91$& $0.89 \pm 0.32$ & $3.41 \pm 0.47$& $3.14 \pm 0.27$& $0.88 \pm 0.15$\\
& \textbf{BASE} (off)  & $\mathbf{17.36 \pm 3.34}$& $\mathbf{17.58 \pm 1.14}$& $1.03 \pm 0.30$ & $\mathbf{26.02 \pm 3.38}$& $\mathbf{24.57 \pm 1.01}$& $0.97 \pm 0.22$ & $3.30 \pm 0.39$& $3.17 \pm 0.24$& $0.91 \pm 0.18$\\
& \textbf{G-BASE} (off)  & $11.31 \pm 2.97$& $11.88 \pm 1.18$& $1.15 \pm 0.45$ & $16.82 \pm 2.72$& $15.75 \pm 0.80$& $0.97 \pm 0.31$ & $\mathbf{5.60 \pm 0.40}$& $\mathbf{5.55 \pm 0.31}$& $1.00 \pm 0.11$\\
\bottomrule
\end{tabular}
\end{adjustbox}
\end{sc}
\end{center}
\vspace{-3ex}
\end{table}

We conclude with a remark on alternative methods for threshold selection. For LiRA, the threshold can be selected from the fitted Gaussian distribution.
However, the accuracy of the estimated threshold depends on how well the Gaussian distribution fits the logit-scaled confidence values.
As such, this approach also relies on an estimation based on population data, and on top, the heuristic observation that logit-scaled confidence values often look normal distributed.
For RMIA with $\gamma=1$, the authors argue that their attack is calibrated such that a threshold $\beta=1-\alpha$ results in FPR $\alpha$.
Investigating this heuristic rule further, we find that it generally does not hold.
In Table~\ref{tab:thresholds} and Table~\ref{tab:thresholds_iid} we see that the thresholds resulting in an FPR at most 1\% or 0.1\% are lower than $\beta=0.99$ or $\beta=0.999$, respectively, across all datasets.
Moreover, Table~\ref{tab:rmia_fixed_beta_graph} (graph data) and Table~\ref{tab:rule_iid} (i.i.d data) show that the actual FPR obtained when setting $\beta=0.9$ and $\beta=0.99$ is lower than 10\% and 1\%, respectively, as would be expected if $\beta=1-\alpha$ where to give FPR $\alpha$.
Consequently, the TPR obtained at $\beta=0.9$ and $\beta=0.99$ is significantly lower than what is possible to achieve at FPR 10\% and 1\%, respectively.
As an example, at FPR 1\%, online RMIA achieves a mean TPR of 24.00\% on Citeseer (see Table~\ref{tab:MIAGNN}), whereas using the threshold $\beta=0.99$ instead results in no true positives at all.

\begin{table}[t!]
\caption{TPR and FPR at fixed threshold $\beta$ for the RMIA attack with $\gamma=1$, using the full population as the $Z$ set. Setting $\beta=1-\alpha$ does result in a significantly lower FPR than $\alpha$, at the cost of a lower TPR than what is possible to achieve at FPR $\alpha$. 256 shadow models are used for the online attack, and 128 for the offline attack.}
\label{tab:rmia_fixed_beta_graph}
\vspace{-1ex}
\begin{center}
\begin{sc}
\begin{adjustbox}{width=\textwidth}
\begin{tabular}{lcccccccc}
\toprule
& \multicolumn{4}{c}{Cora (GCN)} & \multicolumn{4}{c}{Citeseer (GAT)} \\
\cmidrule(lr){2-5} \cmidrule(lr){6-9}
Attack & \multicolumn{2}{c}{$\beta=0.9$} & \multicolumn{2}{c}{$\beta=0.99$} & \multicolumn{2}{c}{$\beta=0.9$} & \multicolumn{2}{c}{$\beta=0.99$} \\
\cmidrule(lr){2-3} \cmidrule(lr){4-5} \cmidrule(lr){6-7} \cmidrule(lr){8-9}
& TPR (\%) & FPR (\%) & TPR (\%) & FPR (\%) & TPR (\%) & FPR (\%) & TPR (\%) & FPR (\%) \\
\midrule
RMIA  & $2.73 \pm 0.49$& $0.00 \pm 0.00$& $0.00 \pm 0.00$& $0.00 \pm 0.00$ & $3.31 \pm 0.71$& $0.01 \pm 0.04$& $0.00 \pm 0.00$& $0.00 \pm 0.00$\\
RMIA (off)  & $0.00 \pm 0.00$& $0.00 \pm 0.00$& $0.00 \pm 0.00$& $0.00 \pm 0.00$ & $0.10 \pm 0.09$& $0.00 \pm 0.00$& $0.00 \pm 0.00$& $0.00 \pm 0.00$\\
\bottomrule
\end{tabular}
\end{adjustbox}
\end{sc}
\end{center}
\vspace{-3ex}
\end{table}

\begin{table}
\caption{TPR and FPR values for RMIA and \textsc{RMIA (off)} on CIFAR-10 at decision thresholds $\beta=1-\alpha$ where $\alpha\in \{0.01, 0.001\}$. Results are reported as mean $\pm$ standard deviation over 10 runs.}
\label{tab:rule_iid}
\vspace{-1ex}
\begin{center}
\begin{sc}
\begin{adjustbox}{width=0.75\textwidth}
\begin{tabular}{clcccc}
\toprule
\multirow{2}{*}{$K$} & \multirow{2}{*}{Attack} & \multicolumn{2}{c}{$\beta=0.99$} & \multicolumn{2}{c}{$\beta=0.999$} \\
\cmidrule(lr){3-4} \cmidrule(lr){5-6}
& & TPR (\%) & FPR (\%) & TPR (\%) & FPR (\%) \\
\midrule
32 & RMIA & $1.90 \pm 0.09$ & $0.11 \pm 0.04$ & $0.20 \pm 0.03$ & $0.00 \pm 0.00$ \\
16 & RMIA (off) & $1.89 \pm 0.11$ & $0.13 \pm 0.06$ & $0.21 \pm 0.03$ & $0.00 \pm 0.01$ \\
8 & RMIA & $1.82 \pm 0.07$ & $0.18 \pm 0.04$ & $0.20 \pm 0.04$ & $0.01 \pm 0.00$ \\
4 & RMIA (off) & $1.82 \pm 0.09$ & $0.20 \pm 0.05$ & $0.20 \pm 0.02$ & $0.01 \pm 0.01$ \\
\bottomrule
\end{tabular}
\end{adjustbox}
\end{sc}
\end{center}
\vspace{-3ex}
\end{table}

\section{Membership Inference Game for i.i.d. Data}
\label{app:MIAiid}

The following definition of a membership inference game closely follows the one in  \cite[Def.~1]{Car2022} and the ones in \citep{Ye2022}.
\begin{defi}{\textbf{(Membership inference game)}}
\label{def:AttackGame}
\vspace{-0.5ex}
\begin{list}{\labelitemi}{\leftmargin=1.5em}
\addtolength{\itemsep}{-0.3\baselineskip}
    \item[1.] The challenger samples a dataset $\Dtrain\subset\mcD$ from a data population pool $\mcD$  and trains a model~$\btheta$ on $\Dtrain$.
      \item[2.] The challenger flips a fair coin to generate a bit $b$. If $b = 0$, a data point $v$ is randomly selected from $\mcD\backslash \Dtrain$.  If $b = 1$, the data point $v$ is selected from $\Dtrain$. 
      \item[3.]   The challenger gives the adversary the population pool $\mcD$, the  target sample $v$, and black-box access to the trained model $f_{\btheta}$.
    \item[4.] The adversary may also have access to additional side information (such as knowledge about the training algorithm or model architecture). Using this information, the adversary performs a MIA  $\hat{m}_v \leftarrow \text{MIA}(f_{\btheta},v,\mcD,\mathcal{H})$,   where $\hat{m}_v$ is an estimate of the membership status of sample~$v$,~$m_v$.
     \item[5.] The attack is successful on a data point $v$ if $\hat{m}_v=m_v$. 
     \end{list}
\end{defi}

\section{BASE: Practical Bayes-Optimal MIA for i.i.d. Data}

\subsection{Proof of Corollary~\ref{cor:BayesOptimaliid}}
\label{sec:ProofCor}
The i.i.d. setting corresponds to $\bA=\bI$, where $\bI$ is the identity matrix. In this case, since the data points are independent, the neighborhood-dependent term  \eqref{Eq:DeltaLoss} vanishes, $\mcG=\mcD$, and the membership indicator satisfies $m_v=1$ if the data sample $v\in[n]$ is included in the training set and  $m_v=0$ otherwise. Hence, $S_{\mcL}(f_{\btheta},v,\tmcM,\mcD)$ in \eqref{Eq:Signal} reduces to the individual loss value of the target sample $v$, i.e., $S_{\mcL}(f_{\btheta},v,\tmcM,\mcD)=\ell(f_{\btheta}(\bx_v),\by_v)$. Finally, since $p(\btheta|m_v,\tmcM,\{x_v\}_{v=1}^n)$ is independent of $m_v$ by assumption, the log-likelihood ratio $\Lambda(\btheta,\tmcM,\{x_v\}_{v=1}^n)=\frac{p(\btheta|m_v=1,\tmcM,\{x_v\}_{v=1}^n)}{p(\btheta|m_v=0,\tmcM,\{x_v\}_{v=1}^n)}=1$ and it vanishes from the Bayes-optimal membership inference rule \eqref{Eq:BayesOptimalMI}.

\subsection{BASE: Practical Bayes-Optimal MIA for i.i.d. Data}
\label{app:BASE}

Recall that under the  i.i.d. assumption, we have $\mcG=\mcD$ and the loss-based signal simplifies to $S_{\mcL}(f_{\btheta},v,\tmcM,\mcD)=\ell(f_{\btheta}(\bx_v),\by_v)$ in Theorem~\ref{thm:BayesOptimalRule}.
Now, since the loss-based signal no longer depends on $\tmcM$, the only dependence on $\tmcM$ in Bayes-optimal membership inference rule is contained in the posterior model distribution $p(\bphi|m_v=0,\tmcM,\mcD)$.
By approximating this posterior model distribution by the prior $p(\bphi|\mcD)$, (which we denote simply by $p(\bphi)$ to conform to the standard notation), we have removed all the dependence on $\tmcM$ in the approximation and the outer expectation over $\tmcM$ is trivial.

The posterior membership probability $P(m_v|\btheta,\mcD)$ then simplifies to
\begin{equation}
\label{eq:LsetApprox1}
    P(m_v|\btheta,\mcD) = \sigma\left(-\ell(f_{\btheta}(\bx_v),\by_v) - \log\int e^{-\ell(f_{\bphi}(\bx_v),\by_v)}p(\bphi)\td\bphi + \log\frac{\lambda}{1-\lambda}\right)\,.
\end{equation}
The remaining expectation over the prior model distribution is still intractable but can be efficiently approximated using Monte Carlo sampling of shadow models,
\begin{equation}
\label{eq:LsetApprox2}
    \log\int e^{-\ell(f_{\btheta}(\bx_v),\by_v)}p(\bphi)\td\bphi \approx \log\left(\frac{1}{K}\sum_{k=1}^K e^{-\ell(f_{\bphi_k}(\bx_v),\by_v)}\right) \,,\quad \bphi_k \leftarrow \mcT(\mcD_k)\,,
\end{equation}
where the shadow models are trained on sampled datasets $\mcD_k$ from the data population, in analogy with \eqref{Eq:MonteCarloExpectationGraph}.
Substituting the Monte Carlo approximation from \eqref{eq:LsetApprox2}  into \eqref{eq:LsetApprox1}, we can formalize the resulting attack as in Definition~\ref{def:BASEattack} (Section~\ref{sec:BASE}).

Note that neither the  sigmoid function nor the membership prior term are necessary for the attack. As shown in Lemma~\ref{lem:MCARMIA}, an equivalent attack can be obtained by applying the inverse sigmoid function and subtracting the prior term, which corresponds to a monotonic transformation.
However, we retain both the sigmoid function and prior term to preserve the interpretation of the attack score as a posterior probability.

\subsection{Proof of Theorem~\ref{thm:MCARMIA}}
\label{app:ProofThm2}

We begin by establishing the following result.
\begin{lem}
\label{lem:MCARMIA}
Two score-based MIAs are equivalent if their score functions are related by a monotonic transformation.
\end{lem}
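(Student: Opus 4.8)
The plan is to unpack Definition~\ref{def:MIAequivalence} and reduce the claim to the order-preserving nature of a monotone map. First I would fix notation: a score-based MIA is specified by a score function $s$ that assigns a real value $s(v)$ to each target $v$ in the (finite) set of candidates, together with a threshold $\tau$, with the hard prediction $\hat m_v = 1$ precisely when $s(v) > \tau$. Thus the complete set of predictions produced at threshold $\tau$ is encoded by the upper level set $U_s(\tau) = \{v : s(v) > \tau\}$, and two attacks yield identical hard predictions exactly when these level sets coincide. Writing $s_1, s_2$ for the two score functions, I assume $s_2 = g \circ s_1$ with $g$ strictly increasing (as supplied, e.g., by Theorem~\ref{thm:MCARMIA}); a strictly decreasing $g$ would instead swap the member/non-member labels and is therefore excluded under the standard ``predict member if score is high'' convention.

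The key step is the elementary equivalence $a > b \iff g(a) > g(b)$, valid for every strictly increasing $g$. Given any threshold $\tau_1$ for the first attack, I would set $\tau_2 = g(\tau_1)$ and verify, for each target $v$,
\begin{equation*}
s_1(v) > \tau_1 \iff g(s_1(v)) > g(\tau_1) \iff s_2(v) > \tau_2 ,
\end{equation*}
so that $U_{s_1}(\tau_1) = U_{s_2}(\tau_2)$ and the two attacks make identical hard predictions. This already establishes one direction, namely that every prediction set achievable by attack~$1$ is achievable by attack~$2$.

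For the converse direction I would argue that, since $g$ is strictly increasing, $s_1$ and $s_2$ induce the same total preorder on the targets: $s_1(u) < s_1(v) \iff s_2(u) < s_2(v)$ and $s_1(u) = s_1(v) \iff s_2(u) = s_2(v)$. Because the candidate set is finite, the collection of upper level sets attainable by sweeping the threshold is completely determined by this preorder (the attainable sets are exactly the ``top-$k$'' groups in the sorted score order, with tied samples grouped together). Hence the two attacks attain the same family of prediction sets, $\{U_{s_1}(\tau) : \tau \in \R\} = \{U_{s_2}(\tau) : \tau \in \R\}$, which is precisely the symmetric statement of Definition~\ref{def:MIAequivalence}; the identical-ROC observation noted after that definition then follows immediately, since TPR and FPR depend only on the prediction set.

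I do not expect a genuine obstacle, as the argument is elementary; the only points requiring care are (i) pinning down the correct notion of ``monotonic'' --- strict monotonicity is needed so that both strict inequalities \emph{and} ties are preserved, whereas a merely weakly increasing $g$ could collapse distinct scores and destroy the correspondence between attainable level sets --- and (ii) the reverse direction's edge case where an arbitrary $\tau_2$ lies outside the range of $g$ (for instance $g = \sigma$ has range $(0,1)$), so that $g^{-1}(\tau_2)$ is undefined. The finite level-set formulation sidesteps this by matching prediction sets directly rather than transporting thresholds through $g^{-1}$.
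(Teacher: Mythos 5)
Your proposal is correct and its core argument\textemdash transporting the threshold via $\tau_2=g(\tau_1)$ and using $a>b \iff g(a)>g(b)$ for strictly increasing $g$ to show the upper level sets coincide\textemdash is exactly the paper's proof of Lemma~\ref{lem:MCARMIA}. The only differences are additions on your side: you explicitly verify the converse direction (identical attainable families of prediction sets, via the shared total preorder on a finite candidate set), which the paper leaves implicit after noting that $\tau_A$ was arbitrary, and your remark that strict monotonicity is needed (and that the range-of-$g$ issue is sidestepped by matching level sets rather than inverting $g$) is consistent with the paper, whose proof assumes $g$ strictly increasing even though the lemma statement says only ``monotonic.''
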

\begin{proof}
 Let $\ba$ and $\bb$ denote the vector of prediction scores for two MIAs $A$ and $B$, respectively, after attacking an arbitrary target model using $N$ arbitrary target samples.
Furthermore, let $\tau_A$ be an arbitrary decision threshold for attack $A$, such that the positive predictions are $\mcM_A=\{i: i\in\{1,\dots,N\},\,\ba_i>\tau_A\}$.
By assumption, there exists a strictly increasing function $g$ such that $g(\ba_i)=\mathbf{b}_i$ for all $i\in\{1,\dots,N\}$.
Now let $\tau_B=g(\tau_A)$, then the positive predictions of attack B are given by
\begin{align*}
    \mcM_B &= \{i: i\in\{1,N\},\,\bb_i>\tau_B\}\\
    &= \{i: i\in\{1,N\},\,g(\ba_i)>g(\tau_A)\}\\
    &= \{i: i\in\{1,N\},\,\ba_i>\tau_A\}\\
    &= \mcM_A\,,
\end{align*}
where the third equality follows from the fact that $g(x)>g(y)$ if and only if $x>y$ for a strictly increasing function $g$.
Since $\tau_A$ was arbitrary, $A$ and $B$ are equivalent by Definition~\ref{def:MIAequivalence}.
\end{proof}

The intuition behind the result of Lemma~\ref{lem:MCARMIA} is that a monotonic transformation preserves the order of the membership scores.
When a decision threshold is applied to MIA scores, only the target samples with top-$k$ highest score (with $k$ depending on the threshold) are classified as members.
However, since the order of the scores is preserved, the top-$k$ scores will correspond to the same target samples for both MIAs.

Equipped with Lemma~\ref{lem:MCARMIA}, we are now ready to prove Theorem~\ref{thm:MCARMIA}. In particular, since the composition of monotonic transformations is itself a monotonic transformation, it follows that Lemma~\ref{lem:MCARMIA} also applies when the score functions are related by such a composition, which we will use repeatedly in the proof that follows.

The RMIA score function is defined as
\begin{equation}
\label{Eq:RMIA-score-fn}
    \Lambda_{\mathrm{RMIA}}(\bx_i,y_i;\btheta) = \Pr_{(\bx_j,y_j)\sim\pi}\left[\frac{p(\btheta|\bx_i,y_i)}{p(\btheta|\bx_j,y_j)} \geq \gamma\right]\,,
\end{equation}
where $\btheta$ is the target model parameters, $(\bx_i,y_i)$  the feature-label pair defining the target sample, and $\pi$ the data population.

To prove that BASE is equivalent to RMIA when $\gamma=1$, it suffices (by  Lemma~\ref{lem:MCARMIA}) to show that the their score functions are related by a monotonic transformation. We do this in two steps:
\begin{enumerate}
    \item We show that BASE is equivalent to an attack that computes the ratio between  the target model's confidence value and the expected confidence value over the prior model distribution. We refer to this attack as the \emph{mean confidence attack} (MCA).
    
    \item We derive the monotonic transformation that relates MCA to RMIA, which turns out to be a cumulative distribution function (CDF), restricted to a domain determined by the data population.
\end{enumerate}

The two steps are detailed in the following.

\textbf{1.} The BASE score function is given by
\begin{equation}
\label{Eq:BASE-score-fn}
    \Lambda_{\mathrm{BASE}}(\bx_i,y_i;\btheta) = \sigma\left(-\ell(f_{\btheta}(\bx_i),y_i) - \log\int e^{-\ell(f_{\bphi}(\bx_i),y_i)}p(\bphi)\td\bphi + \log\frac{\lambda}{1-\lambda}\right)\,.
\end{equation}
Since the sigmoid function is strictly increasing, it can be removed to obtain an equivalent attack. After removing the sigmoid, the prior term becomes an additive constant,  which also defines a monotonic transformation and can therefore be discarded. Applying the (strictly increasing) exponential function to the resulting score function, we obtain
\begin{equation}
\label{Eq:MCA-score-fn}
    \Lambda_{\mathrm{MCA}}(\bx_i,y_i;\btheta) = \frac{f_{\btheta}(\bx_i)_{y_i}}{\int f_{\bphi}(\bx_i)_{y_i} p(\bphi) d\bphi} = \frac{f_{\btheta}(\bx_i)_{y_i}}{\E_{\bphi}[f_{\bphi}(\bx_i)_{y_i}]} 
\end{equation}
where we have used that the confidence is related to the negative log-likelihood loss function by $f_{\btheta}(\bx_i)_{y_i}=e^{-\mcL(f_{\btheta}(\bx_i),y_i)}$. We note that MCA is a kind of difficulty calibration \cite{wat2022DifficultyCalibration}, using the confidence value as the membership score and calibrating by dividing the expected value rather than subtracting it.

\textbf{2.}  Next, we show that the MCA score in \eqref{Eq:MCA-score-fn} is related to the RMIA score function in \eqref{Eq:RMIA-score-fn} when $\gamma=1$. Applying Bayes' rule to the likelihood ratio in RMIA, we can rewrite the score function as
\begin{align}
\label{Eq:RMIAcdfApp}
    \Lambda_{\mathrm{RMIA}}(\bx_i,y_i;\btheta) &= \Pr_{(\bx_j,y_j)\sim\pi}\left[\frac{p(y_i|\bx_i,\btheta)p(\btheta)}{p(y_j|\bx_j,\btheta)p(\btheta)}\frac{p(y_j|\bx_j)}{p(y_i|\bx_i)} \geq 1\right]\nonumber\\
    &= \Pr_{(\bx_j,y_j)\sim\pi}\left[\frac{p(y_i|\bx_i,\btheta)}{p(y_i|\bx_i)} \geq \frac{p(y_j|\bx_j,\btheta)}{p(y_j|\bx_j)}\right]\nonumber\\
    &= \Pr_{(\bx_j,y_j)\sim\pi}\left[\frac{p(y_i|\bx_i,\btheta)}{\E_{\bphi}[p(y_i|\bx_i,\bphi)]} \geq \frac{p(y_j|\bx_j,\btheta)}{\E_{\bphi}[p(y_j|\bx_j,\bphi)]}\right]\nonumber\\
    &= \Pr_{(\bx_j,y_j)\sim\pi}\left[\frac{f_{\btheta}(\bx_i)_{y_i}}{\E_{\bphi}[f_{\bphi}(\bx_i)_{y_i}]} \geq \frac{f_{\btheta}(\bx_j)_{y_j}}{\E_{\bphi}[f_{\bphi}(\bx_j)_{y_j}]}\right]\,.
\end{align}
This expression corresponds to the CDF of the random variable $\bX = \frac{f_{\btheta}(\bx_j)_{y_j}}{\E_{\bphi}[f_{\bphi}(\bx_j)_{y_j}]}$ evaluated at $\Lambda_{\mathrm{MCA}}(\bx_i,y_i;\btheta) = \frac{f_{\btheta}(\bx_i)_{y_i}}{\E_{\bphi}[f_{\bphi}(\bx_i)_{y_i}]}$.
While a CDF is always non-decreasing, it is not necessarily  strictly increasing.
However, since the target sample $(\bx_i,y_i)$ is also part of the data population $\pi$,
\eqref{Eq:RMIAcdfApp} is strictly increasing on the relevant domain.
Specifically, as a function of the real variable $\frac{f_{\btheta}(\bx_i)_{y_i}}{\E_{\bphi}[f_{\bphi}(\bx_i)_{y_i}]}$, the CDF \eqref{Eq:RMIAcdfApp}  can only be constant on intervals where there is no probability mass or density.
Therefore, RMIA and MCA are equivalent by Lemma~\ref{lem:MCARMIA}.

Combining steps~1 and~2 proves that BASE is equivalent to RMIA with $\gamma=1$.

\section{Online vs. Offline Attacks}
\label{app:OnOff}

MIAs that use shadow models to estimate distributions of reference models can be designed as online  or offline attacks.
In the online setting, the adversary can train shadow models on the target sample.
This requires no additional assumptions, as the adversary controls shadow model training and can always include the target.
However, in certain auditing scenarios, the online setting may be impractical\textemdash it would necessitate retraining shadow models for every new audit point.
In contrast, the offline setting assumes that shadow models are trained once, without using any target samples.
This setting can be of importance when all the target samples are not determined when setting up the attack.
However, due to the shadow model training trick introduced in \cite{Car2022} (see also Algorithm~\ref{Alg:ShadowModelTrainingProcedure}), it is possible to perform efficient privacy audits in the online setting.
Following common terminology, we refer to the shadow models trained on the target sample as \textit{in-models} and otherwise they are referred to as \textit{out-models}.

According to its original paper \cite{Zar2024}, RMIA is ideally performed in online mode.
Consequently, due to the equivalence between BASE and RMIA (see Theorem~\ref{thm:MCARMIA}), BASE should also benefit from the use of in-models. However, the Bayes-optimal membership inference formula \eqref{Eq:BayesOptimalMI} involves only out-models, consistent with \cite{Sab2019}. However, since BASE and G-BASE are only approximations of the Bayes-optimal rule, involving several approximations, we cannot rule out a potential benefit from using in-models in our attack.

Following a similar line of argument as \cite{Zar2024}, to compensate for the lack of in-models in the offline setting, we introduce a scaling factor $\alpha\in[0,1]$ on the Monte Carlo loss term over shadow models,
\begin{align}
    P(m_v|\btheta,\mcD) = \sigma\Bigg(-\ell(f_{\btheta}(\bx_v),\by_v) - \alpha\log\Bigg(\frac{1}{K}\sum_{k=1}^K e^{-\ell(f_{\bphi_k}(\bx_v),\by_v)}\Bigg) + \log\frac{\lambda}{1 - \lambda}\Bigg)\,.
\end{align}
Intuitively, the loss value on the target sample is expected to be smaller when using an in-model compared to an out-model.
Thus, the LogSumExp over shadow models is expected to be a negative number of greater magnitude when computed only over out-models in offline mode, as compared to the mix of in- and out-models used in online mode.
The scaling factor reduces the magnitude of this term to better match the expected value when using in-models.
In principle, this scaling constant could also be used in online mode. We indeed found that this could marginally improve the performance in some cases also in the online setting. However, we believe the small performance benefits do not outweigh the disadvantages of introducing a hyperparamter in need of tuning, and omit from using it in online mode. We also found little to no benefit from introducing it for G-BASE.

\section{Differential Privacy Bound for Bayes-Optimal Membership Inference}

It is straightforward to bound the Bayes-optimal membership inference probability in terms of $\epsilon$-differential privacy (DP) \cite{Sab2019}.
DP is a mathematical framework that defines a notion of privacy for individual data records through a measure of indistinguishability.
It was originally proposed in the context of databases, formalizing the intuitive notion that a query function on a database is private if the inclusion or exclusion of a single data record only affects the query output by a small amount.

\begin{defi} \label{def:DP}
    \textbf{($\epsilon$-DP)} A randomized mechanism $\mcM$ satisfies $\epsilon$-DP if for any two datasets $D$ and $D'$ differing in a single data sample, and any event $E\subset {\rm Range}(\mcM)$, it holds that
    \begin{equation}
        \log \frac{P(\mcM(D)\in E)}{P(\mcM(D')\in E)} \leq \epsilon\,.
    \end{equation}
\end{defi}

DP is often used as a guarantee for private machine learning. Consider a $\epsilon$-DP training algorithm~$\mcT$ that outputs a set of weights $\btheta$ given a training dataset $\mcD$: $\btheta \leftarrow \mcT(\mcD)$.
Given $\mcD$, the inclusion and exclusion of the target sample $v$ result in two datasets that differ only in one sample. Therefore, the condition for $\epsilon$-DP 
directly results in the following bound on the Bayes-optimal membership probability \eqref{Eq:BayesOptimalMIApp}:
\begin{align}
    P(m_v=1|\btheta,\mcD)
    &= \mbbE_{\tmcM\sim P(\tmcM|\btheta,\mcD)}\left[\sigma\left(\log\frac{p(\btheta|m_v=1,\tmcM,\mcD)}{p(\btheta|m_v=0,\tmcM,\mcD)} + \log\frac{\lambda}{1-\lambda}\right)\right]\\
    &\leq \mbbE_{\tmcM\sim P(\tmcM|\btheta,\mcD)}\left[\sigma\left(\epsilon + \log\frac{\lambda}{1-\lambda}\right)\right]\\
    &= \sigma\left(\epsilon + \log\frac{\lambda}{1-\lambda}\right).
\end{align}
As $\epsilon \to 0$, the upper bound approaches $\lambda$. That is, the stronger the DP guarantee, the closer the membership inference to a random guess using only the prior. This $\epsilon$-DP bound is particularly simple when $\lambda=0.5$, i.e., prior to observing the model, we are maximally uncertain about the membership status of the target sample. In this case, the bound becomes $P(m_v=1|\btheta,\mcD)\leq\sigma(\epsilon)$.

\section{Additional Experiments}
\label{app:experiments}

In this appendix, we present additional experiments and results. Further results for the mismatched adversary setting are presented in Section~\ref{app:mismatched-adv}. We compare our different sampling strategies for G-BASE in Section~\ref{app:comparison-sampling-strategies}. Wall-clock time for membership inference across different attacks is reported in Section~\ref{app:wall-clock-time}. In Section~\ref{app:experiments_iid}, we present further results on the CIFAR-10 and CIFAR-100 datasets, which are widely adopted benchmarks for MIAs on i.i.d. data. Finally, In Section~\ref{app:ROCcurves}, we include ROC curves in the low FPR regime for some other dataset-target model combinations not presented in Table~\ref{tab:MIAGNN}.

\subsection{Robustness to Mismatched Adversary Assumptions}
\label{app:mismatched-adv}

We evaluate the robustness of our attacks and the baseline MIAs in the scenario where the adversary does not have perfect knowledge of the target model architecture and training procedure.
In particular, we run the attack using shadow models of a different architecture than the target model, trained using a different optimizer (with hyperparameters tuned for the respective model and optimizer).
Specifically, the target models are trained using SGD with momentum, whereas the shadow models are trained using Adam.
Moreover, the target model is only trained on 35\% of the dataset, whereas the adversary trains the shadow models on 50\% of the data by means of the usual shadow model training procedure outlined in Algorithm~\ref{Alg:ShadowModelTrainingProcedure}.
All models are 2-layer GNNs with the final embedding having as many dimensions as there are classes.

Table~\ref{tab:model-distr-shift} shows the attack performance over three different datasets. The shadow model architectures are GAT (with 4 and 2 attention heads in the first and second layer, respectively), GraphSAGE with max aggregation, and GCN, on Cora, Citeseer and Pubmed, respectively.

We observe a decline in attack performance across all attacks compared to the ideal setting (see Table~\ref{tab:MIAGNN}).
 LiRA suffers the most and is not competitive with our attacks BASE and G-BASE.

\begin{table}[t!]
\caption{Evaluation of attack performance in the case of a distribution shift in the shadow models. Specifically, the adversary uses a model architecture and training procedure that differs from that of the challenger. Performance is measured in terms of AUC and TPR at 1\% and 0.1\% FPR. The result is reported as the sample mean $\pm$ the standard deviation over 10 random target models and samples of target nodes. The parameter $K$ denotes the number of shadow models. All attacks undergo a decline in performance compared to the ideal setting where adversary uses the same model architecture and training procedure as the challenger. Our attacks BASE and G-BASE achieves top performance also in this setting.}
\label{tab:model-distr-shift}
\vspace{-1ex}
\begin{center}
\begin{sc}
\begin{adjustbox}{width=\textwidth}
\begin{tabular}{clccccccccc}
\toprule
& & \multicolumn{3}{c}{Cora (GCN)} & \multicolumn{3}{c}{Citeseer (GAT)} & \multicolumn{3}{c}{Pubmed (GraphSAGE)} \\
\cmidrule(lr){3-5} \cmidrule(lr){6-8} \cmidrule(lr){9-11}
$K$ & Attack & \multirow{2}{*}{AUC (\%)} & \multicolumn{2}{c}{TPR@FPR (\%)} & \multirow{2}{*}{AUC (\%)} & \multicolumn{2}{c}{TPR@FPR (\%)} & \multirow{2}{*}{AUC (\%)} & \multicolumn{2}{c}{TPR@FPR (\%)} \\
\cmidrule(lr){4-5} \cmidrule(lr){7-8} \cmidrule(lr){10-11}
& & & 1\% & 0.1\% & & 1\% & 0.1\% & & 1\% & 0.1\% \\
\midrule
\multirow{6}{*}{8}
& MLP (0-hop) 
& $58.68 \pm 2.33$ & $2.08 \pm 0.99$ & $0.25 \pm 0.30$
& $62.82 \pm 2.37$ & $3.27 \pm 1.47$ & $0.51 \pm 0.55$
& $50.74 \pm 0.49$ & $1.05 \pm 0.18$ & $0.09 \pm 0.07$\\
& MLP (0+2-hop) 
& $60.40 \pm 2.24$ & $2.45 \pm 1.23$ & $0.25 \pm 0.32$
& $64.22 \pm 1.47$ & $2.73 \pm 1.21$ & $0.19 \pm 0.16$
& $50.87 \pm 0.73$ & $1.14 \pm 0.26$ & $0.09 \pm 0.05$\\
& LiRA
& $67.63 \pm 0.88$ & $1.85 \pm 0.97$ & $0.46 \pm 0.58$
& $66.74 \pm 1.41$ & $2.65 \pm 1.35$ & $0.23 \pm 0.31$
& $51.82 \pm 0.57$ & $1.17 \pm 0.23$ & $0.14 \pm 0.10$\\
& RMIA
& $\mb{73.86} \pm 1.02$ & $5.97 \pm 1.82$ & $0.81 \pm 0.39$
& $\mb{74.96} \pm 1.11$ & $5.34 \pm 1.66$ & $1.14 \pm 0.92$
& $55.02 \pm 0.56$ & $1.50 \pm 0.26$ & $0.18 \pm 0.07$\\
& BASE
& $\mb{73.86} \pm 1.02$ & $5.97 \pm 1.82$ & $0.81 \pm 0.39$
& $\mb{74.96} \pm 1.11$ & $5.35 \pm 1.66$ & $1.14 \pm 0.92$
& $55.02 \pm 0.56$ & $1.50 \pm 0.26$ & $0.18 \pm 0.07$\\
& G-BASE (MIA)
& $69.34 \pm 1.50$ & $\mb{6.71} \pm 2.28$ & $\mb{1.43} \pm 1.04$
& $69.83 \pm 0.75$ & $\mb{7.48} \pm 1.98$ & $\mb{1.30} \pm 1.21$
& $\mb{57.09} \pm 0.79$ & $\mb{2.38} \pm 0.27$ & $\mb{0.42} \pm 0.17$\\
\midrule
\multirow{4}{*}{4}
& LiRA
& $71.76 \pm 1.05$ & $2.17 \pm 1.11$ & $0.41 \pm 0.39$
& $73.62 \pm 1.33$ & $3.29 \pm 1.44$ & $0.22 \pm 0.26$
& $52.05 \pm 0.60$ & $1.39 \pm 0.24$ & $0.18 \pm 0.09$\\
& RMIA
& $72.89 \pm 1.08$ & $\mb{8.82} \pm 1.70$ & $\mb{2.35} \pm 1.46$
& $75.57 \pm 1.07$ & $9.23 \pm 2.99$ & $2.60 \pm 2.12$
& $54.61 \pm 0.64$ & $1.60 \pm 0.28$ & $0.17 \pm 0.11$\\
& BASE
& $\mb{73.54} \pm 1.06$ & $8.76 \pm 2.43$ & $2.29 \pm 1.45$
& $\mb{76.24} \pm 1.04$ & $\mb{10.00} \pm 2.91$ & $\mb{3.27} \pm 2.49$
& $54.83 \pm 0.57$ & $1.91 \pm 0.23$ & $0.25 \pm 0.09$\\
& G-BASE (MIA)
& $68.47 \pm 1.66$ & $5.29 \pm 2.01$ & $1.17 \pm 0.65$
& $68.74 \pm 1.22$ & $4.38 \pm 0.77$ & $0.36 \pm 0.51$
& $\mb{57.04} \pm 0.65$ & $\mb{2.68} \pm 0.29$ & $\mb{0.42} \pm 0.09$\\
\bottomrule
\end{tabular}
\end{adjustbox}
\end{sc}
\end{center}
\vspace{-3ex}
\end{table}

\subsection{Comparison of Sampling Strategies for G-BASE}
\label{app:comparison-sampling-strategies}

In Section~\ref{sec:practical_MIA_GNNs} and Appendix~\ref{app:PracticalBayesGNN}, we proposed four different sampling strategies to sample from $P(\tmcM|\btheta,\mcG)$, for the purpose of evaluating the outer Monte Carlo estimate in the G-BASE attack Definition~\ref{def:GBase}.
Here, we further evaluate and compare the impact of the sampling strategy on the attack performance of G-BASE.
Table~\ref{tab:sampling-strategies} shows the attack performance of G-BASE over different datasets and model architectures, using each of our four sampling strategies; model-independent sampling, 0-hop MIA sampling (using BASE), Metropolis-Hastings sampling, and Gibbs sampling.
The Metropolis-Hastings step size parameter $\epsilon$, controlling the fraction of indicator variables that are flipped in each step, is chosen such that the acceptance rate is in the range $0.2$-$0.4$.
We run the Gibbs sampling for one iteration through all the indicator variables.
Note that for 0-hop MIA sampling, the same set of shadow models used for BASE (to obtain membership probabilities for this sampling strategy), can be used for G-BASE, and  the membership probabilities can be computed once, before running the attack. 

To gain insight into how much more the attack is able to gain from a more accurate sampling method, we also run G-BASE using the ground-truth target training set as sample $\tmcM$. Surprisingly, using the ground-truth $\tmcM$ does not help the G-BASE attack against a GCN target model on Cora. However, on other datasets, the ground-truth sample improves the performance significantly, indicating that a better sampling strategy can still make the attack more effective. We also note that, in many cases, the choice of sampling strategy has only a modest effect on overall attack performance, particularly for low values of $M$. The precision of the sampled graphs relative to the ground-truth target training graph generally improves only slightly when using model-dependent sampling, which explains the similar performance observed between model-independent sampling and more sophisticated methods. However, there are instances where the sampling method has a more pronounced impact—for example, model-independent sampling on Flickr.

Recall that we have effectively approximated the model distribution $p(\bphi|m_v=0,\tmcM,\mcG)$ by a distribution $p(\bphi|\mcG)$, independent of $v$ and $\tmcM$, when reusing the same set of shadow models for each target node. A consequence of this approximation is that even when obtaining highly fidelity samples $\tmcM\sim P(\tmcM|\btheta,\mcG)$, only the loss signal for the target model is precise, and the loss signals for the shadow models will contain a lot of noise due to the mixture of member and non-member nodes in their local neighborhood around the target node.
If $p(\bphi|m_v=0, \tmcM,\mcG)$ were estimated more accurately, e.g., by training shadow models on the nodes masked by $\tmcM$, then the fidelity of the sampling strategy is expected to have a larger impact on the attack performance.

\begin{table}[!t]
\caption{Comparison of different sampling strategies for G-BASE. We also include the performance of G-BASE when the ground-truth sample $\tmcM$ is used (Ground-truth), i.e. the actual target training set mask, indicating how much performance improvements can be made by a more accurate sampling method. Results are reported as mean $\pm$ standard deviation over 10 different target models and sets of target nodes. $M=8$ samples of $\tmcM$ are used in all cases.}
\label{tab:sampling-strategies}
\vspace{-1ex}
\begin{center}
\begin{sc}
\begin{adjustbox}{width=\textwidth}
\begin{tabular}{clccccccccc}
\toprule
& & \multicolumn{3}{c}{Cora (GCN)} & \multicolumn{3}{c}{Citeseer (GAT)} & \multicolumn{3}{c}{Pubmed (GraphSAGE)} \\
\cmidrule(lr){3-5} \cmidrule(lr){6-8} \cmidrule(lr){9-11}
$K$ & Sampling method & \multirow{2}{*}{AUC (\%)} & \multicolumn{2}{c}{TPR@FPR (\%)} & \multirow{2}{*}{AUC (\%)} & \multicolumn{2}{c}{TPR@FPR (\%)} & \multirow{2}{*}{AUC (\%)} & \multicolumn{2}{c}{TPR@FPR (\%)} \\
\cmidrule(lr){4-5} \cmidrule(lr){7-8} \cmidrule(lr){10-11}
& & & 1\% & 0.1\% & & 1\% & 0.1\% & & 1\% & 0.1\% \\
\midrule
\multirow{5}{*}{8}
& Model-independent
& $77.15 \pm 1.51$ & $7.19 \pm 3.08$ & $0.68 \pm 0.92$ 
& $81.78 \pm 0.67$ & $13.62 \pm 2.52$ & $1.44 \pm 0.89$
& $62.92 \pm 0.66$ & $5.38 \pm 0.49$ & $1.20 \pm 0.35$\\
& 0-hop MIA
& $77.42 \pm 1.40$ & $15.36 \pm 3.04$ & $5.05 \pm 4.60$ 
& $83.31 \pm 0.78$ & $21.19 \pm 4.27$ & $6.92 \pm 4.61$
& $62.96 \pm 0.74$ & $5.58 \pm 1.03$ & $1.16 \pm 0.49$\\
& Metropolis-Hastings
& $76.30 \pm 1.30$ & $13.23 \pm 1.79$ & $3.26 \pm 2.31$ 
& $83.44 \pm 0.83$ & $19.82 \pm 4.27$ & $3.49 \pm 2.79$
& $63.50 \pm 0.71$ & $6.10 \pm 0.66$ & $1.19 \pm 0.44$\\
& Gibbs
& $76.86 \pm 1.61$ & $14.02 \pm 2.51$ & $4.40 \pm 2.84$ 
& $84.39 \pm 0.67$ & $21.38 \pm 4.16$ & $5.26 \pm 3.80$
& $64.10 \pm 0.69$ & $6.28 \pm 0.56$ & $1.85 \pm 0.59$\\
& Ground-truth
& $76.52 \pm 1.07$ & $19.47 \pm 3.66$ & $7.39 \pm 3.92$ 
& $85.27 \pm 0.82$ & $27.29 \pm 3.96$ & $8.40 \pm 5.35$
& $70.63 \pm 0.44$ & $11.04 \pm 0.85$ & $4.00 \pm 0.69$\\
\midrule
\multirow{5}{*}{4}
& Model-independent
& $74.25 \pm 1.79$ & $9.41 \pm 2.33$ & $3.03 \pm 1.67$ 
& $76.26 \pm 1.24$ & $13.92 \pm 2.27$ & $4.80 \pm 3.24$
& $61.97 \pm 0.66$ & $5.22 \pm 0.38$ & $1.35 \pm 0.30$\\
& 0-hop MIA
& $73.60 \pm 1.38$ & $10.38 \pm 2.97$ & $3.53 \pm 2.06$ 
& $76.65 \pm 1.33$ & $15.22 \pm 2.34$ & $4.02 \pm 2.36$
& $62.06 \pm 0.64$ & $5.33 \pm 0.47$ & $1.27 \pm 0.26$\\
& Metropolis-Hastings
& $73.90 \pm 1.24$ & $9.97 \pm 2.71$ & $4.42 \pm 1.79$ 
& $77.72 \pm 1.00$ & $14.68 \pm 1.87$ & $4.66 \pm 2.81$
& $62.11 \pm 0.80$ & $5.30 \pm 0.49$ & $1.24 \pm 0.47$\\
& Gibbs
& $73.83 \pm 1.74$ & $10.21 \pm 2.68$ & $3.10 \pm 1.49$ 
& $77.08 \pm 1.19$ & $14.69 \pm 1.57$ & $4.92 \pm 3.46$
& $62.58 \pm 0.64$ & $5.59 \pm 0.42$ & $1.58 \pm 0.31$\\
& Ground-truth
& $73.00 \pm 0.99$ & $11.34 \pm 2.55$ & $2.88 \pm 1.49$ 
& $80.06 \pm 1.21$ & $19.01 \pm 2.41$ & $6.34 \pm 4.10$
& $68.09 \pm 0.53$ & $8.18 \pm 0.55$ & $2.48 \pm 0.39$\\
\bottomrule
\end{tabular}
\end{adjustbox}
\begin{adjustbox}{width=\textwidth}
\begin{tabular}{clccccccccc}
\toprule
& & \multicolumn{3}{c}{Flickr (GCN)} & \multicolumn{3}{c}{Amazon-Photo (GAT)} & \multicolumn{3}{c}{Github (GraphSAGE)} \\
\cmidrule(lr){3-5} \cmidrule(lr){6-8} \cmidrule(lr){9-11}
$K$ & Attack & \multirow{2}{*}{AUC (\%)} & \multicolumn{2}{c}{TPR@FPR (\%)} & \multirow{2}{*}{AUC (\%)} & \multicolumn{2}{c}{TPR@FPR (\%)} & \multirow{2}{*}{AUC (\%)} & \multicolumn{2}{c}{TPR@FPR (\%)} \\
\cmidrule(lr){4-5} \cmidrule(lr){7-8} \cmidrule(lr){10-11}
& & & 1\% & 0.1\% & & 1\% & 0.1\% & & 1\% & 0.1\% \\
\midrule
\multirow{5}{*}{8}
& Model-independent
& $60.04 \pm 1.01$ & $2.82 \pm 0.30$ & $0.44 \pm 0.06$ 
& $56.89 \pm 0.45$ & $3.70 \pm 0.83$ & $0.79 \pm 0.38$
& $57.40 \pm 1.09$ & $2.93 \pm 0.44$ & $0.49 \pm 0.15$\\
& 0-hop MIA
& $57.48 \pm 0.55$ & $2.51 \pm 0.19$ & $0.39 \pm 0.09$ 
& $56.77 \pm 0.85$ & $3.99 \pm 0.62$ & $0.82 \pm 0.47$
& $57.38 \pm 1.40$ & $3.00 \pm 0.52$ & $0.55 \pm 0.18$\\
& Metropolis-Hastings
& $56.98 \pm 0.71$ & $2.17 \pm 0.24$ & $0.29 \pm 0.08$ 
& $56.31 \pm 0.86$ & $3.37 \pm 0.58$ & $0.62 \pm 0.40$
& $56.35 \pm 1.11$ & $2.58 \pm 0.55$ & $0.39 \pm 0.21$\\
& Gibbs
& $57.88 \pm 0.91$ & $2.33 \pm 0.27$ & $0.37 \pm 0.15$ 
& $56.53 \pm 0.64$ & $4.44 \pm 0.60$ & $0.76 \pm 0.49$
& $57.47 \pm 1.43$ & $3.18 \pm 0.54$ & $0.55 \pm 0.17$\\
& Ground-truth
& $69.95 \pm 1.79$ & $8.58 \pm 1.39$ & $2.15 \pm 0.50$ 
& $61.98 \pm 1.34$ & $8.37 \pm 1.34$ & $2.99 \pm 0.48$
& $71.64 \pm 3.99$ & $11.58 \pm 3.34$ & $4.56 \pm 2.03$\\
\midrule
\multirow{5}{*}{4}
& Model-independent
& $60.04 \pm 1.01$ & $2.88 \pm 0.26$ & $0.50 \pm 0.06$ 
& $57.81 \pm 0.71$ & $4.94 \pm 0.78$ & $1.63 \pm 0.44$
& $57.78 \pm 1.22$ & $3.10 \pm 0.56$ & $0.58 \pm 0.15$\\
& 0-hop MIA
& $57.71 \pm 0.66$ & $2.48 \pm 0.21$ & $0.40 \pm 0.07$ 
& $57.74 \pm 0.85$ & $4.82 \pm 0.88$ & $1.38 \pm 0.37$
& $57.72 \pm 1.06$ & $3.12 \pm 0.47$ & $0.60 \pm 0.19$\\
& Metropolis-Hastings
& $56.99 \pm 0.75$ & $1.99 \pm 0.22$ & $0.29 \pm 0.09$ 
& $56.25 \pm 1.11$ & $3.76 \pm 0.68$ & $1.10 \pm 0.59$
& $56.37 \pm 1.12$ & $2.62 \pm 0.64$ & $0.47 \pm 0.28$\\
& Gibbs
& $58.63 \pm 0.95$ & $2.48 \pm 0.32$ & $0.37 \pm 0.07$ 
& $57.94 \pm 0.54$ & $4.61 \pm 1.00$ & $1.69 \pm 0.82$
& $58.16 \pm 1.48$ & $3.44 \pm 0.64$ & $0.73 \pm 0.26$\\
& Ground-truth
& $69.70 \pm 1.83$ & $7.54 \pm 1.19$ & $1.46 \pm 0.31$ 
& $62.43 \pm 1.33$ & $8.06 \pm 1.34$ & $2.81 \pm 1.28$
& $70.46 \pm 3.78$ & $9.61 \pm 2.85$ & $3.38 \pm 1.71$\\
\bottomrule
\end{tabular}
\end{adjustbox}
\end{sc}
\end{center}
\vspace{-3ex}
\end{table}

\textbf{How many graph samples is enough?} The sampled graphs $\tmcM$ are used to compute a Monte Carlo estimation of the outer expectation of the Bayes-optimal membership inference rule \eqref{Eq:BayesOptimalMI}. We therefore expect the attack performance to improve with an increasing $M$. In practice, however, we are constrained by the available computational resources. To get insight into how many samples are required to get a satisfactory performance, we run G-BASE with varying $M$ over various target models and different sampling strategies. Table~\ref{tab:vary-M} reports the measured attack performance. In particular, we use G-BASE with model-independent sampling, 0-hop MIA sampling, and Gibbs sampling for the Flicker, Amazon-Photo, and Pubmed target model, respectively. Although we consistently observe an increasing attack performance as $M$ increases, there are signs of diminishing returns.

\begin{table}
\caption{Performance of G-BASE under varying $M$ (number of samples $\tmcM$). The sampling strategy is model-independent, 0-hop MIA sampling, and Gibbs sampling, for Flickr, Amazon-Photo, Pubmed, respectively. The attack performance consistently increases with an increasing $M$, but with signs of diminishing returns.}
\label{tab:vary-M}
\vspace{-1ex}
\begin{center}
\begin{sc}
\begin{adjustbox}{width=\textwidth}
\begin{tabular}{clcccccccccc}
\toprule
& & & \multicolumn{3}{c}{Flickr (GCN)} & \multicolumn{3}{c}{Amazon-Photo (GAT)} & \multicolumn{3}{c}{Pubmed (GraphSAGE)} \\
\cmidrule(lr){4-6} \cmidrule(lr){7-9} \cmidrule(lr){10-12}
Mode & $K$ & $M$ & \multirow{2}{*}{AUC (\%)} & \multicolumn{2}{c}{TPR@FPR (\%)} & \multirow{2}{*}{AUC (\%)} & \multicolumn{2}{c}{TPR@FPR (\%)} & \multirow{2}{*}{AUC (\%)} & \multicolumn{2}{c}{TPR@FPR (\%)} \\
\cmidrule(lr){5-6} \cmidrule(lr){8-9} \cmidrule(lr){11-12}
& & & & 1\% & 0.1\% & & 1\% & 0.1\% & & 1\% & 0.1\% \\
\midrule
\multirow{4}{*}{Online}
& \multirow{4}{*}{8} & 4
& $59.12 \pm 1.00$ & $2.57 \pm 0.34$ & $0.41 \pm 0.06$
& $56.38 \pm 0.66$ & $3.27 \pm 0.73$ & $0.49 \pm 0.39$  
& $63.76 \pm 0.51$ & $5.99 \pm 0.65$ & $1.64 \pm 0.47$ \\
& & 8
& $60.04 \pm 1.02$ & $2.89 \pm 0.33$ & $0.45 \pm 0.11$  
& $57.02 \pm 0.60$ & $3.74 \pm 0.84$ & $0.69 \pm 0.25$  
& $64.08 \pm 0.67$ & $6.16 \pm 0.62$ & $1.80 \pm 0.67$ \\
& & 16
& $60.60 \pm 1.13$ & $3.08 \pm 0.34$ & $0.49 \pm 0.11$  
& $57.00 \pm 0.89$ & $4.24 \pm 1.09$ & $0.84 \pm 0.40$  
& $64.54 \pm 0.58$ & $6.68 \pm 0.66$ & $1.87 \pm 0.65$ \\
& & 32
& $60.97 \pm 1.17$ & $3.19 \pm 0.40$ & $0.52 \pm 0.13$  
& $57.23 \pm 0.94$ & $4.26 \pm 0.82$ & $1.26 \pm 0.70$  
& $64.74 \pm 0.47$ & $6.71 \pm 0.70$ & $2.13 \pm 0.64$ \\
\midrule
\multirow{4}{*}{Offline}
& \multirow{4}{*}{4} & 4
& $59.07 \pm 0.88$ & $2.65 \pm 0.27$ & $0.43 \pm 0.08$ 
& $57.00 \pm 0.89$ & $4.16 \pm 0.94$ & $1.33 \pm 0.44$  
& $62.18 \pm 0.62$ & $5.49 \pm 0.55$ & $1.40 \pm 0.28$ \\
& & 8
& $60.09 \pm 0.95$ & $2.92 \pm 0.38$ & $0.45 \pm 0.13$  
& $57.94 \pm 0.78$ & $4.78 \pm 1.25$ & $1.39 \pm 0.57$  
& $62.87 \pm 0.53$ & $5.78 \pm 0.67$ & $1.51 \pm 0.52$ \\
& & 16
& $60.60 \pm 1.13$ & $3.11 \pm 0.24$ & $0.52 \pm 0.10$  
& $57.99 \pm 1.11$ & $4.68 \pm 0.95$ & $1.53 \pm 0.58$  
& $62.84 \pm 0.65$ & $5.80 \pm 0.38$ & $1.45 \pm 0.42$ \\
& & 32
& $61.02 \pm 1.14$ & $3.16 \pm 0.42$ & $0.54 \pm 0.10$  
& $58.53 \pm 1.08$ & $4.98 \pm 1.13$ & $1.68 \pm 0.56$  
& $63.02 \pm 0.64$ & $5.94 \pm 0.42$ & $1.50 \pm 0.27$ \\
\bottomrule
\end{tabular}
\end{adjustbox}
\end{sc}
\end{center}
\vspace{-3ex}
\end{table}

\subsection{Wall-Clock Time of Membership Inference}
\label{app:wall-clock-time}

To quantify the computational efficiency of our attacks, LiRA and RMIA, we measure wall-clock time for their inference phase. Table~\ref{tab:wall-clock-time} reports these times for various target models across multiple datasets and GNN architectures. For G-BASE, we measure inference time under both 0-hop MIA sampling (using BASE as the 0-hop MIA) and Gibbs sampling.

\begin{table}
\caption{Comparison of wall-clock time for the inference phase of our attacks, RMIA, and LiRA.}
\label{tab:wall-clock-time}
\vspace{-1ex}
\begin{center}
\begin{sc}
\begin{adjustbox}{width=\textwidth}
\begin{tabular}{clccc}
\toprule
& & \multicolumn{3}{c}{Wall-clock time (s)} \\
\cmidrule(lr){3-5}
$K$ & Attack & Flickr (GCN) & Amazon-Photo (GAT) & Pubmed (GraphSAGE) \\
\midrule
\multirow{5}{*}{8}
& LiRA
& $0.02585 \pm 0.02018$ & $0.02041 \pm 0.01970$ & $0.01382 \pm 0.01850$\\
& RMIA
& $2.04167 \pm 0.01045$ & $0.19090 \pm 0.00074$ & $0.50807 \pm 0.00149$\\
& \textbf{BASE}
& $0.01488 \pm 0.00004$ & $0.01096 \pm 0.00008$ & $0.00444 \pm 0.00003$\\
& \textbf{G-BASE} (MIA)
& $6182.23 \pm 342.81$ & $680.38 \pm 30.55$ & $684.11 \pm 146.37$ \\
& \textbf{G-BASE} (Gibbs)
& $18377.06 \pm 484.46$ & $2013.80 \pm 11.32$ & $1944.71 \pm 318.62$ \\
\midrule
\multirow{5}{*}{4}
& LiRA
& $0.01964 \pm 0.00026$ & $0.01361 \pm 0.00030$ & $0.00730 \pm 0.00028$ \\
& RMIA
& $2.04350 \pm 0.01280$ & $0.19179 \pm 0.00101$ & $0.50834 \pm 0.00102$ \\
& \textbf{BASE}
& $0.01508 \pm 0.00004$ & $0.01119 \pm 0.00007$ & $0.00468 \pm 0.00004$ \\
& \textbf{G-BASE} (MIA)
& $4203.32 \pm 198.85$ & $416.16 \pm 19.29$ & $408.95 \pm 6.72$ \\
& \textbf{G-BASE} (Gibbs)
& $12055.83 \pm 576.11$ & $1262.03 \pm 20.53$ & $1233.48 \pm 17.46$ \\
\bottomrule
\end{tabular}
\end{adjustbox}
\end{sc}
\end{center}
\vspace{-3ex}
\end{table}

Among all attacks, BASE is the most efficient\textemdash approximately 20 to 200 times faster than RMIA while achieving comparable performance. The results also show that G-BASE is orders of magnitude slower, primarily due to the lack of parallelization across target nodes. Nevertheless, its inference time remains tractable. We believe it is important to accurately evaluate the data leakage associated with membership inference, even if this comes with a higher computational cost.

\subsection{i.i.d. Data}
\label{app:experiments_iid}

To demonstrate BASE on i.i.d. data, we train a Wide ResNet~\citep{zagoruyko2017wideresidualnetworks} with depth 28 and width 2 on both CIFAR-10 and CIFAR-100 for 100 epochs using standard data augmentations and early stopping.
Each experiment is averaged across ten target models.
For CIFAR-10, the target model achieves a mean training accuracy of $93.73\% \pm 1.15\%$ and a test accuracy of $80.45\% \pm 1.53\%$.
For CIFAR-100, the model reaches a training accuracy of $75.97\% \pm 6.68\%$ and a test accuracy of $49.89\% \pm 1.88\%$.

Table~\ref{tab:iid} presents the attack performance of BASE, RMIA, and LiRA.
In the online setting, BASE and RMIA exhibit identical performance, in line with Theorem~\ref{thm:MCARMIA}.
Both methods consistently outperform LiRA across the evaluated configurations.
In the offline setting, while the differences are less pronounced, BASE achieves superior performance over both RMIA and LiRA at low false positive rates.

\begin{table}[!t]
\caption{Comparison of different attacks on Wide ResNet-28-2 trained on CIFAR-10 and CIFAR-100. Performance is measured in terms of AUC and TPR at 1\% and 0.1\% FPR, and the results are reported as mean $\pm$ standard deviation over 10 random target models.}
\label{tab:iid}
\begin{center}
\begin{sc}
\begin{adjustbox}{width=\textwidth}
\begin{tabular}{clcccccc}
\toprule
& & \multicolumn{3}{c}{CIFAR-10} & \multicolumn{3}{c}{CIFAR-100} \\
\cmidrule(lr){3-5} \cmidrule(lr){6-8} 
K & Attack & \multirow{2}{*}{AUC} & \multicolumn{2}{c}{TPR@FPR (\%)} & \multirow{2}{*}{AUC} & \multicolumn{2}{c}{TPR@FPR (\%)} \\
\cmidrule(lr){4-5} \cmidrule(lr){7-8}
& & & 1\% & 0.1\% & & 1\% & 0.1\% \\
\midrule
\multirow{3}{*}{32}
& \textbf{BASE} & $\mathbf{62.94 \pm 2.06}$ & $\mathbf{5.92 \pm 1.27}$ & $\mathbf{1.66 \pm 0.43}$ & $\mathbf{74.80 \pm 3.22}$ & $\mathbf{12.08 \pm 3.35}$ & $\mathbf{4.11 \pm 1.69}$ \\
& \textbf{RMIA} & $\mathbf{62.94 \pm 2.06}$ & $\mathbf{5.92 \pm 1.27}$ & $\mathbf{1.66 \pm 0.43}$ & $\mathbf{74.80 \pm 3.22}$ & $\mathbf{12.08 \pm 3.35}$ & $\mathbf{4.11 \pm 1.69}$ \\
& LiRA & $61.00 \pm 1.45$ & $5.21 \pm 0.91$ & $1.32 \pm 0.33$ & $72.65 \pm 1.80$ & $9.99 \pm 1.86$ & $2.47 \pm 0.87$ \\
\midrule
\multirow{3}{*}{16}
& BASE (off) & $61.67 \pm 2.10$ & $\mathbf{5.67 \pm 1.29}$ & $\mathbf{1.64 \pm 0.39}$ & $71.35 \pm 3.36$ & $\mathbf{8.89 \pm 2.97}$ & $\mathbf{3.07 \pm 1.42}$ \\
& RMIA (off) & $\mathbf{62.45 \pm 2.10}$ & $5.61 \pm 1.23$ & $1.57 \pm 0.40$ & $70.65 \pm 3.30$ & $7.35 \pm 2.21$ & $2.07 \pm 0.94$ \\
& LiRA (off) & $60.51 \pm 1.75$ & $4.10 \pm 0.61$ & $0.94 \pm 0.25$ & $\mathbf{72.33 \pm 2.66}$ & $8.36 \pm 1.65$ & $1.60 \pm 0.69$ \\
\midrule
\multirow{3}{*}{8}
& \textbf{BASE} & $\mathbf{62.64 \pm 1.96}$ & $\mathbf{5.21 \pm 0.99}$ & $\mathbf{1.19 \pm 0.38}$ & $\mathbf{73.98 \pm 3.18}$ & $\mathbf{10.34 \pm 2.91}$ & $\mathbf{2.81 \pm 0.95}$ \\
& \textbf{RMIA} & $\mathbf{62.64 \pm 1.96}$ & $\mathbf{5.21 \pm 0.99}$ & $\mathbf{1.19 \pm 0.38}$ & $\mathbf{73.98 \pm 3.18}$ & $\mathbf{10.34 \pm 2.91}$ & $\mathbf{2.81 \pm 0.95}$ \\
& LiRA & $58.79 \pm 0.99$ & $3.64 \pm 0.67$ & $0.85 \pm 0.25$ & $69.22 \pm 1.60$ & $7.29 \pm 1.37$ & $1.60 \pm 0.58$ \\
\midrule
\multirow{3}{*}{4}
& BASE (off) & $61.54 \pm 1.98$ & $\mathbf{5.05 \pm 1.06}$ & $\mathbf{1.23 \pm 0.39}$ & $\mathbf{71.26 \pm 3.28}$ & $\mathbf{8.23 \pm 2.62}$ & $\mathbf{2.45 \pm 1.20}$ \\
& RMIA (off) & $\mathbf{62.17 \pm 1.96}$ & $4.94 \pm 1.07$ & $1.19 \pm 0.39$ & $70.48 \pm 3.25$ & $6.45 \pm 1.88$ & $1.45 \pm 0.67$ \\
& LiRA (off) & $59.80 \pm 1.67$ & $3.77 \pm 0.76$ & $0.78 \pm 0.24$ & $71.00 \pm 2.68$ & $7.27 \pm 1.45$ & $1.26 \pm 0.57$ \\
\bottomrule
\end{tabular}
\end{adjustbox}
\end{sc}
\end{center}
\end{table}

\subsection{ROC curves}
\label{app:ROCcurves}

To compare the attack performances in terms of TPR over all low FPRs, we provide average ROC curves in \cref{fig:amazon_photo_graphsage,fig:pubmed_gcn,fig:flickr_gat,fig:github_graphsage}. $K$ denotes the number of shadow models used. We see that in the online setting, the ROC curves of BASE and RMIA are identical, as is expected in light of Theorem~\ref{thm:MCARMIA} (we use $\gamma=1$ in RMIA). In the offline setting,  BASE and RMIA perform similarly, but not equivalently. G-BASE achieves the best performance in terms of TPR at low FPR in all cases.

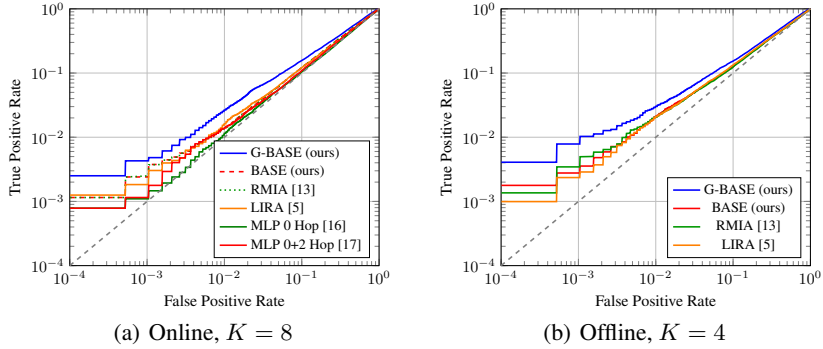
\begin{figure}[!t]
    \centering
    \subfigure[Online, $K=8$]{
        \centering
        \pgfplotscreateplotcyclelist{mycolors}{
    {blue, mark=none},
    {red, mark=none},
    {green!60!black, mark=none},
    {orange, mark=none},
}

\begin{tikzpicture}[scale=0.6]
  \begin{axis}[
    xlabel={False Positive Rate},
    ylabel={True Positive Rate},
    xmode=log,
    ymode=log,
    grid=major,
    xmin=1e-4,
    ymin=1e-4,
    xmax = 1,
    ymax=1,
    legend pos={south east},
    legend style={font=\footnotesize,/tikz/every node/.append style={anchor=west}},
    every axis plot/.append style={line width=1pt},
    cycle list name=mycolors,
  ]

    \addplot+[mark=none] 
      table[x=fpr, y=tpr_mean, col sep=comma] {./data_2/amazon-photo/graphsage/roc_g-base-Gibbs_mean.csv};
    \addlegendentry{G-BASE (ours)}
    
    \addplot+[mark=none, dashed] 
      table[x=fpr, y=tpr_mean, col sep=comma] {./data_2/amazon-photo/graphsage/roc_base_mean.csv};
    \addlegendentry{BASE (ours)}

    \addplot+[mark=none, dotted] 
      table[x=fpr, y=tpr_mean, col sep=comma] {./data_2/amazon-photo/graphsage/roc_rmia_mean.csv};
    \addlegendentry{RMIA~\cite{Zar2024}}

    \addplot+[mark=none] 
      table[x=fpr, y=tpr_mean, col sep=comma] {./data_2/amazon-photo/graphsage/roc_lira_mean.csv};
    \addlegendentry{LiRA~\cite{Car2022}}

    \addplot+[mark=none, color=green!50!black] 
      table[x=fpr, y=tpr_mean, col sep=comma] {./data_2/amazon-photo/graphsage/roc_MLP-attack-0hop_mean.csv};
    \addlegendentry{MLP 0-Hop~\cite{Ola2021,Xin2021}}
    
    \addplot+[mark=none, color=red] 
      table[x=fpr, y=tpr_mean, col sep=comma] {./data_2/amazon-photo/graphsage/roc_MLP-attack-comb_mean.csv};
    \addlegendentry{MLP 0+2-Hop~\cite{Ola2021,Xin2021}}
    
    \addplot[dashed, gray, domain=1e-4:1] {x};




  \end{axis}
\end{tikzpicture}
    }
    \subfigure[Offline, $K=4$]{
        \centering
        \pgfplotscreateplotcyclelist{mycolors}{
    {blue, mark=none},
    {red, mark=none},
    {green!60!black, mark=none},
    {orange, mark=none},
}

\begin{tikzpicture}[scale=0.6]
  \begin{axis}[
    xlabel={False Positive Rate},
    ylabel={True Positive Rate},
    xmode=log,
    ymode=log,
    grid=major,
    xmin=1e-4,
    ymin=1e-4,
    xmax = 1,
    ymax=1,
    legend pos={south east},
    legend style={font=\footnotesize,/tikz/every node/.append style={anchor=west}},
    every axis plot/.append style={line width=1pt},
    cycle list name=mycolors,
  ]

    \addplot+[mark=none] 
      table[x=fpr, y=tpr_mean, col sep=comma] {./data_2/amazon-photo/graphsage/roc_g-base-Gibbs-offline_mean.csv};
    \addlegendentry{G-BASE (ours)}
    
    \addplot+[mark=none] 
      table[x=fpr, y=tpr_mean, col sep=comma] {./data_2/amazon-photo/graphsage/roc_base-offline_mean.csv};
    \addlegendentry{BASE (ours)}

    \addplot+[mark=none] 
      table[x=fpr, y=tpr_mean, col sep=comma] {./data_2/amazon-photo/graphsage/roc_rmia-offline_mean.csv};
    \addlegendentry{RMIA~\cite{Zar2024}}

    \addplot+[mark=none] 
      table[x=fpr, y=tpr_mean, col sep=comma] {./data_2/amazon-photo/graphsage/roc_lira-offline_mean.csv};
    \addlegendentry{LiRA~\cite{Car2022}}

    \addplot[dashed, gray, domain=1e-4:1] {x};

  \end{axis}
\end{tikzpicture}
    }
    \caption{Average ROC curves (10 runs) for the Amazon-Photo dataset with GraphSAGE as target model.}
    \label{fig:amazon_photo_graphsage}
\end{figure}

\begin{figure}[!t]
    \centering
    \subfigure[Online, $K=8$]{
        \centering
        \pgfplotscreateplotcyclelist{mycolors}{
    {blue, mark=none},
    {red, mark=none},
    {green!60!black, mark=none},
    {orange, mark=none},
}

\begin{tikzpicture}[scale=0.6]
  \begin{axis}[
    xlabel={False Positive Rate},
    ylabel={True Positive Rate},
    xmode=log,
    ymode=log,
    grid=major,
    xmin=1e-4,
    ymin=1e-4,
    xmax = 1,
    ymax=1,
    legend pos={south east},
    legend style={font=\footnotesize,/tikz/every node/.append style={anchor=west}},
    every axis plot/.append style={line width=1pt},
    cycle list name=mycolors,
  ]

    \addplot+[mark=none] 
      table[x=fpr, y=tpr_mean, col sep=comma] {./data_2/pubmed/gcn/roc_g-base-Gibbs_mean.csv};
    \addlegendentry{G-BASE (ours)}
    
    \addplot+[mark=none, dashed] 
      table[x=fpr, y=tpr_mean, col sep=comma] {./data_2/pubmed/gcn/roc_base_mean.csv};
    \addlegendentry{BASE (ours)}

    \addplot+[mark=none, dotted] 
      table[x=fpr, y=tpr_mean, col sep=comma] {./data_2/pubmed/gcn/roc_rmia_mean.csv};
    \addlegendentry{RMIA~\cite{Zar2024}}

    \addplot+[mark=none] 
      table[x=fpr, y=tpr_mean, col sep=comma] {./data_2/pubmed/gcn/roc_lira_mean.csv};
    \addlegendentry{LiRA~\cite{Car2022}}

    \addplot+[mark=none, color=green!50!black] 
      table[x=fpr, y=tpr_mean, col sep=comma] {./data_2/pubmed/graphsage/roc_MLP-attack-0hop_mean.csv};
    \addlegendentry{MLP 0-Hop~\cite{Ola2021,Xin2021}}
    
    \addplot+[mark=none, color=red] 
      table[x=fpr, y=tpr_mean, col sep=comma] {./data_2/pubmed/graphsage/roc_MLP-attack-comb_mean.csv};
    \addlegendentry{MLP 0+2-Hop~\cite{Ola2021,Xin2021}}
    
    \addplot[dashed, gray, domain=1e-4:1] {x};




  \end{axis}
\end{tikzpicture}
    }
    \subfigure[Offline, $K=4$]{
        \centering        
        \pgfplotscreateplotcyclelist{mycolors}{
    {blue, mark=none},
    {red, mark=none},
    {green!60!black, mark=none},
    {orange, mark=none},
}

\begin{tikzpicture}[scale=0.6]
  \begin{axis}[
    xlabel={False Positive Rate},
    ylabel={True Positive Rate},
    xmode=log,
    ymode=log,
    grid=major,
    xmin=1e-4,
    ymin=1e-4,
    xmax = 1,
    ymax=1,
    legend pos={south east},
    legend style={font=\footnotesize,/tikz/every node/.append style={anchor=west}},
    every axis plot/.append style={line width=1pt},
    cycle list name=mycolors,
  ]

    \addplot+[mark=none] 
      table[x=fpr, y=tpr_mean, col sep=comma] {./data_2/pubmed/gcn/roc_g-base-Gibbs-offline_mean.csv};
    \addlegendentry{G-BASE (ours)}
    
    \addplot+[mark=none] 
      table[x=fpr, y=tpr_mean, col sep=comma] {./data_2/pubmed/gcn/roc_base-offline_mean.csv};
    \addlegendentry{BASE (ours)}

    \addplot+[mark=none] 
      table[x=fpr, y=tpr_mean, col sep=comma] {./data_2/pubmed/gcn/roc_rmia-offline_mean.csv};
    \addlegendentry{RMIA~\cite{Zar2024}}

    \addplot+[mark=none] 
      table[x=fpr, y=tpr_mean, col sep=comma] {./data_2/pubmed/gcn/roc_lira-offline_mean.csv};
    \addlegendentry{LiRA~\cite{Car2022}}

    \addplot[dashed, gray, domain=1e-4:1] {x};

  \end{axis}
\end{tikzpicture}
    }
    \caption{Average ROC curves (10 runs) for the PubMed dataset with GCN as target model.}
    \label{fig:pubmed_gcn}
\end{figure}
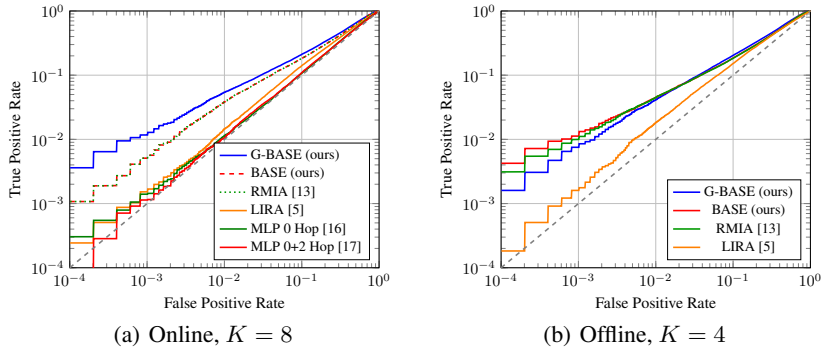

\begin{figure}[!t]
    \centering
    \subfigure[Online, $K=8$]{
        \centering
        \pgfplotscreateplotcyclelist{mycolors}{
    {blue, mark=none},
    {red, mark=none},
    {green!60!black, mark=none},
    {orange, mark=none},
}

\begin{tikzpicture}[scale=0.6]
  \begin{axis}[
    xlabel={False Positive Rate},
    ylabel={True Positive Rate},
    xmode=log,
    ymode=log,
    grid=major,
    xmin=1e-4,
    ymin=1e-4,
    xmax = 1,
    ymax=1,
    legend pos={south east},
    legend style={font=\footnotesize,/tikz/every node/.append style={anchor=west}},
    every axis plot/.append style={line width=1pt},
    cycle list name=mycolors,
  ]

    \addplot+[mark=none] 
      table[x=fpr, y=tpr_mean, col sep=comma] {./data_2/flickr/gat/roc_g-base-MI_mean.csv};
    \addlegendentry{G-BASE (ours)}
    
    \addplot+[mark=none, dashed] 
      table[x=fpr, y=tpr_mean, col sep=comma] {./data_2/flickr/gat/roc_base_mean.csv};
    \addlegendentry{BASE (ours)}

    \addplot+[mark=none, dotted] 
      table[x=fpr, y=tpr_mean, col sep=comma] {./data_2/flickr/gat/roc_rmia_mean.csv};
    \addlegendentry{RMIA~\cite{Zar2024}}

    \addplot+[mark=none] 
      table[x=fpr, y=tpr_mean, col sep=comma] {./data_2/flickr/gat/roc_lira_mean.csv};
    \addlegendentry{LiRA~\cite{Car2022}}

    \addplot+[mark=none, color=green!50!black] 
      table[x=fpr, y=tpr_mean, col sep=comma] {./data_2/flickr/gat/roc_MLP-attack-0hop_mean.csv};
    \addlegendentry{MLP 0-Hop~\cite{Ola2021,Xin2021}}
    
    \addplot+[mark=none, color=red] 
      table[x=fpr, y=tpr_mean, col sep=comma] {./data_2/flickr/gat/roc_MLP-attack-comb_mean.csv};
    \addlegendentry{MLP 0+2-Hop~\cite{Ola2021,Xin2021}}
    
    \addplot[dashed, gray, domain=1e-4:1] {x};

  \end{axis}
\end{tikzpicture}
    }
    \subfigure[Offline, $K=4$]{
        \centering
        \pgfplotscreateplotcyclelist{mycolors}{
    {blue, mark=none},
    {red, mark=none},
    {green!60!black, mark=none},
    {orange, mark=none},
}

\begin{tikzpicture}[scale=0.6]
  \begin{axis}[
    xlabel={False Positive Rate},
    ylabel={True Positive Rate},
    xmode=log,
    ymode=log,
    grid=major,
    xmin=1e-4,
    ymin=1e-4,
    xmax = 1,
    ymax=1,
    legend pos={south east},
    legend style={font=\footnotesize,/tikz/every node/.append style={anchor=west}},
    every axis plot/.append style={line width=1pt},
    cycle list name=mycolors,
  ]

    \addplot+[mark=none] 
      table[x=fpr, y=tpr_mean, col sep=comma] {./data_2/flickr/gat/roc_g-base-MI-offline_mean.csv};
    \addlegendentry{G-BASE (ours)}
    
    \addplot+[mark=none] 
      table[x=fpr, y=tpr_mean, col sep=comma] {./data_2/flickr/gat/roc_base-offline_mean.csv};
    \addlegendentry{BASE (ours)}

    \addplot+[mark=none] 
      table[x=fpr, y=tpr_mean, col sep=comma] {./data_2/flickr/gat/roc_rmia-offline_mean.csv};
    \addlegendentry{RMIA~\cite{Zar2024}}

    \addplot+[mark=none] 
      table[x=fpr, y=tpr_mean, col sep=comma] {./data_2/flickr/gat/roc_lira-offline_mean.csv};
    \addlegendentry{LiRA~\cite{Car2022}}

    \addplot[dashed, gray, domain=1e-4:1] {x};

  \end{axis}
\end{tikzpicture}
    }
    \caption{Average ROC curves (10 runs) for the Flickr dataset with GAT as target model.}
    \label{fig:flickr_gat}
\end{figure}
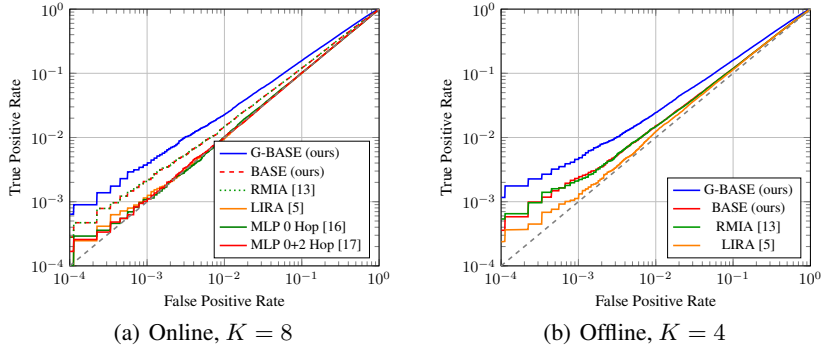

\begin{figure}[!t]
    \centering
    \subfigure[Online, $K=128$]{
        \centering
        \pgfplotscreateplotcyclelist{mycolors}{
    {blue, mark=none},
    {red, mark=none},
    {green!60!black, mark=none},
    {orange, mark=none},
}

\begin{tikzpicture}[scale=0.6]
  \begin{axis}[
    xlabel={False Positive Rate},
    ylabel={True Positive Rate},
    xmode=log,
    ymode=log,
    grid=major,
    xmin=1e-4,
    ymin=1e-4,
    xmax = 1,
    ymax=1,
    legend pos={south east},
    legend style={font=\footnotesize,/tikz/every node/.append style={anchor=west}},
    every axis plot/.append style={line width=1pt},
    cycle list name=mycolors,
  ]

    \addplot+[mark=none] 
      table[x=fpr, y=tpr_mean, col sep=comma] {./data_2/github/graphsage-128/roc_g-base-MIA_mean.csv};
    \addlegendentry{G-BASE (ours)}
    
    \addplot+[mark=none, dashed] 
      table[x=fpr, y=tpr_mean, col sep=comma] {./data_2/github/graphsage-128/roc_base_mean.csv};
    \addlegendentry{BASE (ours)}

    \addplot+[mark=none, dotted] 
      table[x=fpr, y=tpr_mean, col sep=comma] {./data_2/github/graphsage-128/roc_rmia_mean.csv};
    \addlegendentry{RMIA~\cite{Zar2024}}

    \addplot+[mark=none] 
      table[x=fpr, y=tpr_mean, col sep=comma] {./data_2/github/graphsage-128/roc_lira_mean.csv};
    \addlegendentry{LiRA~\cite{Car2022}}

    \addplot[dashed, gray, domain=1e-4:1] {x};

  \end{axis}
\end{tikzpicture}
    }
    \subfigure[Offline, $K=64$]{
        \centering
        \pgfplotscreateplotcyclelist{mycolors}{
    {blue, mark=none},
    {red, mark=none},
    {green!60!black, mark=none},
    {orange, mark=none},
}

\begin{tikzpicture}[scale=0.6]
  \begin{axis}[
    xlabel={False Positive Rate},
    ylabel={True Positive Rate},
    xmode=log,
    ymode=log,
    grid=major,
    xmin=1e-4,
    ymin=1e-4,
    xmax = 1,
    ymax=1,
    legend pos={south east},
    legend style={font=\footnotesize,/tikz/every node/.append style={anchor=west}},
    every axis plot/.append style={line width=1pt},
    cycle list name=mycolors,
  ]

    \addplot+[mark=none] 
      table[x=fpr, y=tpr_mean, col sep=comma] {./data_2/github/graphsage-128/roc_g-base-MIA-offline_mean.csv};
    \addlegendentry{G-BASE (ours)}
    
    \addplot+[mark=none] 
      table[x=fpr, y=tpr_mean, col sep=comma] {./data_2/github/graphsage-128/roc_base-offline_mean.csv};
    \addlegendentry{BASE (ours)}

    \addplot+[mark=none] 
      table[x=fpr, y=tpr_mean, col sep=comma] {./data_2/github/graphsage-128/roc_rmia-offline_mean.csv};
    \addlegendentry{RMIA~\cite{Zar2024}}

    \addplot+[mark=none] 
      table[x=fpr, y=tpr_mean, col sep=comma] {./data_2/github/graphsage-128/roc_lira-offline_mean.csv};
    \addlegendentry{LiRA~\cite{Car2022}}

    \addplot[dashed, gray, domain=1e-4:1] {x};

nging colors

  \end{axis}
\end{tikzpicture}
    }
    \caption{Average ROC curves (10 runs) for the Github dataset with GraphSAGE as target model.}
    \label{fig:github_graphsage}
\end{figure}

\newpage

\end{document}